\newtheorem{theorem}{Theorem}[section]
\newtheorem{proposition}[theorem]{Proposition}
\newtheorem{lemma}[theorem]{Lemma}
\theoremstyle{definition}
\newtheorem{assumption}[theorem]{Assumption}
\theoremstyle{remark}
\newcommand{\argmin}[1]{\underset{#1}{\operatorname{arg}\,\operatorname{min}}\;}
\let\citet\cite
\title{Efficient Uncertainty Quantification and Reduction for Over-Parameterized Neural Networks}
\author{%
  Ziyi Huang, Henry Lam, Haofeng Zhang \thanks{Authors are listed alphabetically.} 
  \\
  Columbia University\\
  New York, NY, USA\\
  \texttt{zh2354,khl2114,hz2553@columbia.edu} \\
}
\begin{document}

\maketitle

\begin{abstract}
Uncertainty quantification (UQ) is important for reliability assessment and enhancement of machine learning models. In deep learning, uncertainties arise not only from data, but also from the training procedure that often injects substantial noises and biases. These hinder the attainment of statistical guarantees and, moreover, impose computational challenges on UQ due to the need for repeated network retraining. Building upon the recent neural tangent kernel theory, we create statistically guaranteed schemes to principally \emph{characterize}, and \emph{remove}, the uncertainty of over-parameterized neural networks with very low computation effort. In particular, our approach, based on what we call a procedural-noise-correcting (PNC) predictor, removes the procedural uncertainty by using only \emph{one} auxiliary network that is trained on a suitably labeled dataset, instead of many retrained networks employed in deep ensembles. Moreover, by combining our PNC predictor with suitable light-computation resampling methods, we build several approaches to construct asymptotically exact-coverage confidence intervals using as low as four trained networks without additional overheads.

\end{abstract}
\section{Introduction} \label{sec:intro}

Uncertainty quantification (UQ) concerns the dissection and estimation of various sources of errors in a prediction model. It has growing importance in machine learning, as it helps assess and enhance the trustworthiness and deployment safety across many real-world tasks ranging from computer vision \citep{kendall2017uncertainties,carvalho2020scalable} and natural language processing \citep{xiao2019quantifying,siddhant2018deep} to autonomous driving \citep{michelmore2018evaluating,michelmore2020uncertainty}, as well as guiding exploration in sequential learning \citep{auer2002finite,abbasi2011improved,zhou2020neural}. In the deep learning context, UQ encounters unique challenges on both the statistical and computational fronts. On a high level, these challenges arise from the over-parametrized and large-scale nature of neural networks so that, unlike classical statistical models, the prediction outcomes incur not only noises from the data, but also importantly the training procedure itself \citep{lakshminarayanan2017simple,pearce2018high}. 
This elicits a deviation from the classical statistical theory that hinders the attainment of established guarantees. Moreover, because of the sizes of these models, conventional procedures such as resampling \citep{efron1994introduction,davison1997bootstrap,shao2012jackknife} demand an amount of computation that could quickly become infeasible. 



Our main goal of this paper is to propose a UQ framework for over-parametrized neural networks in regression that has simultaneous \emph{statistical coverage guarantee}, in the sense of classical frequentist asymptotic exactness, and \emph{low computation cost}, in the sense of requiring only few (as low as four) neural network trainings, without other extra overheads. A main driver of these strengths in our framework is a new implementable concept, which we call the \emph{Procedural-Noise-Correcting (PNC)} predictor. It consists of an auxiliary network that is trained on a suitably artificially labeled dataset, with behavior mimicking the variability coming from the training procedure. To reach our goal, we synthesize and build on two recent lines of tools that appear largely segregated thus far. First is
neural tangent kernel (NTK) theory \citep{jacot2018neural,lee2019wide,arora2019exact}, which provides explicit approximate formulas for well-trained infinitely wide neural networks. Importantly, NTK reveals how procedural variability enters into the network prediction outcomes through, in a sense, a functional shift in its corresponding kernel-based regression, which guides our PNC construction. Second is light-computation resampling methodology, including batching \citep{Glynn1990simulation,Schmeiser1982batch,schruben1983confidence} and the so-called cheap bootstrap method \citep{lam2022cheap}, which allows valid confidence interval construction using as few as two model repetitions. We suitably enhance these methods to account for both data and procedural variabilities via the PNC incorporation. 



We compare our framework with several major related lines of work. First, our work focuses on the quantification of epistemic uncertainty, which refers to the errors coming from the inadequacy of the model or data noises. This is different from aleatoric uncertainty, which refers to the intrinsic stochasticity of the problem \cite{mirza2014conditional,ren2016conditional,zhou2022deep,izbicki2016nonparametric,dutordoir2018gaussian}, or predictive uncertainty which captures the sum of epistemic and aleatoric uncertainties (but not their dissection) \citep{pearce2018high,romano2019conformalized,barber2019predictive,alaa2020frequentist,chen2021learning}. Regarding epistemic uncertainty, a related line of study is deep ensemble that aggregates predictions from multiple independent training replications \citep{lee2015m,lakshminarayanan2017simple,fort2019deep,ashukha2020pitfalls,pearce2018high}. This approach, as we will make clear later, can reduce and potentially quantify procedural variability, but a naive use would require demanding retraining effort and does not address data variability. Another line is the Bayesian UQ approach on neural networks \citep{gelman2013bayesian,abdar2021review}. This regards network weights as parameters subject to common priors such as Gaussian. Because of the computation difficulties in exact inference, an array of studies investigate efficient approximate inference approaches to estimate the posteriors \citep{gal2016dropout,graves2011practical,blundell2015weight,depeweg2018decomposition,depeweg2016learning,posch2020correlated,lee2017deep,hernandez2015probabilistic}. While powerful, these approaches nonetheless possess inference error that could be hard to quantify, and ultimately finding rigorous guarantees on the performance of these approximate posteriors remains open to our best knowledge.

\vspace{-0.5em}
\section{Statistical Framework of Uncertainty} \label{sec:SL}
\vspace{-0.5em} 
We first describe our learning setting and define uncertainties in our framework. Suppose the input-output pair $(X,Y)$ is a random vector following an unknown probability distribution $\pi$ on $\mathcal{X} \times \mathcal{Y}$, where $X\in \mathcal{X}\subset \mathbb{R}^d$ is an input and $Y\in \mathcal{Y}\subset \mathbb{R}$ is the response. Let the marginal distribution of $X$ be $\pi_{X}(x)$ and the conditional distribution of $Y$ given $X$ be $\pi_{Y|X}(y|x)$. Given a set of training data $\mathcal{D}=\{(x_1,y_1),(x_2,y_2),...,(x_n,y_n)\}$ drawn independent and identically distributed (i.i.d.) from $\pi$ (we write $\bm{x}=(x_1,...,x_n)^T$ and $\bm{y}=(y_1,...,y_n)^T$ for short), we build a prediction model $h:\mathcal{X}\to \mathcal{Y}$ that best approximates $Y$ given $X$. Let $\pi_{\mathcal{D}}$ or $\pi_{n}$ denote the empirical distribution associated with the training data $\mathcal{D}$, where $\pi_{n}$ is used to emphasize the sample size dependence in asymptotic results. 

To this end, provided a loss function $\mathcal{L}: \mathcal{Y} \times \mathbb{R} \to \mathbb{R}$, we denote the population risk
$R_\pi(h):=\mathbb{E}_{(X,Y)\sim \pi} [\mathcal{L}(h(X),Y)]$.
If $h$ is allowed to be any possible functions, the best prediction model is the \textit{Bayes predictor}  \citep{hullermeier2021aleatoric,mohri2018foundations}:
$h_B^*(X)\in\text{argmin}_{y\in\mathcal{Y}} \mathbb{E}_{Y \sim  \pi_{Y|X}} [\mathcal{L}(y, Y)|X]$. 
With finite training data of size $n$, classical statistical learning suggests finding $\hat{h}_n \in \mathcal{H}$, where $\mathcal H$ denotes a hypothesis class that minimizes the empirical risk, i.e., $\hat{h}_n\in \text{argmin}_{h\in \mathcal{H}} R_{\pi_{\mathcal{D}}}(h)$. 
This framework fits methods such as linear or kernel ridge regression (Appendix \ref{sec:kernelridge}). However, it is not feasible for deep learning due to non-convexity and non-identifiability. Instead, in practice, gradient-based optimization methods are used, giving rise to $\hat{h}_{n,\gamma}$ as a variant of $\hat{h}_{n}$, where the additional variable $\gamma$ represents the randomness in the training procedure. It is worth mentioning that this randomness generally depends on the empirical data $\mathcal{D}$, and thus we use $P_{\gamma|\mathcal{D}}$ to represent the distribution of $\gamma$ conditional on $\mathcal{D}$.



Furthermore, we consider
$\hat{h}^*_{n}=\text{aggregate}(\{\hat{h}_{n,\gamma}: \gamma \sim P_{\gamma|\mathcal{D}}\})$    
where "$\text{aggregate}$" means an idealized aggregation approach to remove the training randomness in $\hat{h}_{n,\gamma}$ (known as ensemble learning \citep{lakshminarayanan2017simple,breiman2001random,breiman1996bagging,littlestone1994weighted,geurts2006extremely}). 
A prime example in deep learning is deep ensemble \citep{lakshminarayanan2017simple} that will be detailed in the sequel.
Finally, we denote
$h^*=\lim_{n\to\infty}\hat{h}_n^*$
as the grand "limiting" predictor with infinite samples. The exact meaning of "lim" will be clear momentarily. 

Under this framework, epistemic uncertainty, that is, errors coming from the inadequacy of the model or data noises, can be dissected into the following three sources, as illustrated in Figure \ref{myfigure}. Additional discussions on other types of uncertainty are in the Appendix \ref{sec: other} for completeness.

\textbf{Model approximation error.} $\text{UQ}_{AE}=h^*-h_B^*.$ This discrepancy between $h_B^*$ 
and $h^*$ 
arises from the inadequacy of the hypothesis class $\mathcal{H}$.
For an over-parameterized sufficiently wide neural network $\mathcal{H}$, this error is usually negligible thanks to the universal approximation power of neural networks for any continuous functions \citep{cybenko1989approximation,hornik1991approximation,hanin2017approximating} or Lebesgue-integrable functions \citep{lu2017expressive}. 

\textbf{Data variability.} $\text{UQ}_{DV}=\hat{h}_n^*-h^{*}.$ This measures the representativeness of the training dataset, which is the most standard epistemic uncertainty in classical statistics \citep{van2000asymptotic}. 

\begin{wrapfigure}{r}{0.35\textwidth}
\footnotesize
    \centering
    \includegraphics[width=0.35\textwidth]{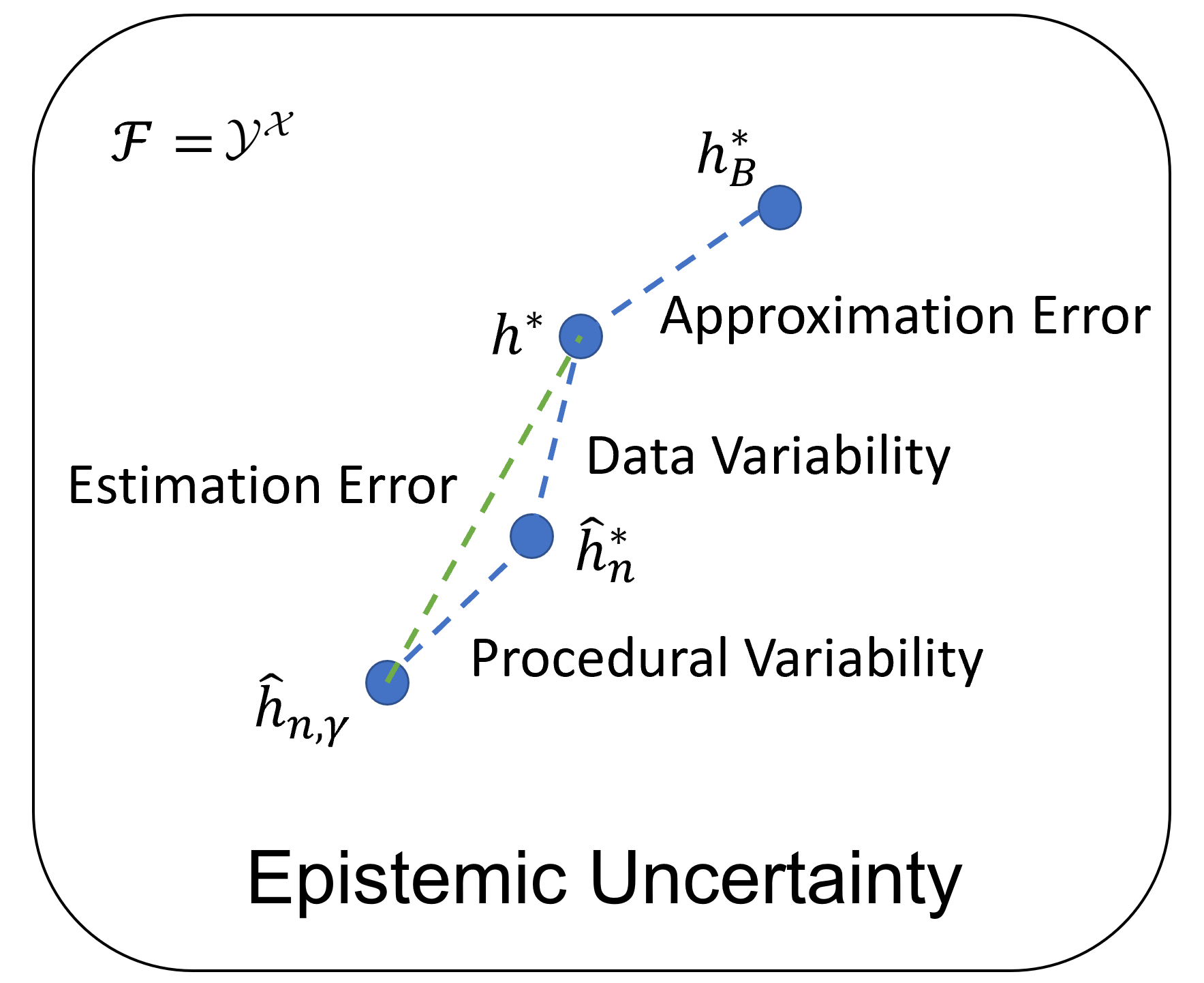}
    \caption{Three sources of epistemic uncertainty.}
    \label{myfigure}
\end{wrapfigure}

\textbf{Procedural variability.} $\text{UQ}_{PV}=\hat{h}_{n,\gamma}-\hat{h}_{n}^*.$ This arises from the randomness in the training process for a single network $\hat{h}_{n,\gamma}$, which is present even with deterministic or infinite data. The randomness comes from the initialization of the network parameters, and also data ordering and possibly training time when running stochastic gradient descent with finite training epochs. 

\section{Quantifying Epistemic Uncertainty} \label{sec:quantifying}
We use a frequentist framework and, for a given $x$, we aim to construct a confidence interval for the "best" predictor $h^*(x)$. As discussed in Section \ref{sec:intro}, the over-parametrized and non-convex nature of neural networks defies conventional statistical techniques and moreover introduces procedural variability that makes inference difficult. 

We focus on over-parameterized neural networks, that is, neural networks whose width is sufficiently large, while the depth can be arbitrary such as two \citep{zhou2021local,sonoda2021ridge}. Over-parameterized neural networks give the following two theoretical advantages. First, this makes model approximation error negligible and a confidence interval for $h^*(x)$ also approximately applies to $h_B^*(x)$. Second, the NTK theory \citep{jacot2018neural} implies a phenomenon that the network evolves essentially as a "linear model" under gradient descent, and thus the resulting predictor behaves like a shifted kernel ridge regression whose kernel is the NTK \citep{arora2019exact,lee2019wide,he2020bayesian,Hu2020Simple,zhang2020type} (detailed in Appendix \ref{sec: NTK}). However, to our knowledge, there is no off-the-shelf result that exactly matches our need for the epistemic uncertainty task, so we describe it below.
Consider the following regression problem: 
\begin{equation} \label{equ:lossnnmse}
\hat{R}_n(f_\theta, \theta^b)=\frac{1}{n}\sum_{i=1}^n (f_\theta(x_i)- y_i)^2+ \lambda_n \|\theta-\theta^b\|^2_2.
\end{equation}
where $\theta^b$ is an initialization network parameter and $\lambda_n> 0$ is the regularization hyper-parameter which may depend on the data size. We add regularization $\lambda_n$ to this problem, which is slightly different from previous work \citep{lee2019wide} without the regularization $\lambda_n$. This can guarantee stable computation of the inversion of the NTK Gram matrix and can be naturally linked to kernel ridge regression, which will be introduced shortly in Proposition \ref{linearizednetwork}. 
We assume the network adopts the NTK parametrization with parameters randomly initialized using He initialization \citep{he2015delving}, and it is sufficiently wide to ensure the linearized neural network assumption holds; See Appendix \ref{sec: NTK} for details. Moreover, we assume that the network is trained using the loss function in \eqref{equ:lossnnmse} via continuous-time gradient flow by feeding the entire training data and using sufficient training time ($t \to \infty$). In this sense, the uncertainty of data ordering and training time vanishes, making the random initialization the only uncertainty in procedural variability. The above specifications are formally summarized in Assumption \ref{prop1:assm}. With the above specifications, we have:




\begin{proposition}[Proposition \ref{prop1}] \label{linearizednetwork}
Suppose that Assumption \ref{prop1:assm} holds. Then the final trained network, conditional on the initial network $s_{\theta^{b}}(x)$, 
is given by  
\begin{equation}\label{equ:NNequalKR}
\hat{h}_{n,\theta^{b}}(x)= s_{\theta^{b}}(x)+
\bm{K}(x, \bm{x})^T (\bm{K}(\bm{x},\bm{x})+\lambda_n n \bm{I})^{-1} (\bm{y}-s_{\theta^{b}}(\bm{x})),
\end{equation}
where in the subscript of $\hat{h}_{n,\theta^{b}}$, $n$ represents $n$ training data, $\theta^{b}$ represents an instance of the initial network parameters drawn from the standard Gaussian distribution $P_{\theta^{b}} = \mathcal{N}(0, \bm{I}_p)$ where the dimension of $\theta^b$ is $p$ (He initialization); 
$\bm{K}(\bm{x},\bm{x}):=(K(x_i,x_j))_{i,j=1,...,n}$ and $\bm{K}(x, \bm{x}):= (K(x, x_1), K(x, x_2), ... , K(x, x_n))^T$ where $K$ is the (population) NTK. This implies that $\hat{h}_{n,\theta^{b}}$ is the solution to the following kernel ridge regression:
$$
s_{\theta^b} + \argmin{g\in \bar{\mathcal{H}}} \frac{1}{n}\sum_{i=1}^n (y_i-s_{\theta^b}(x_i)-g(x_i))^2 + \lambda_n \|g\|^2_{\bar{\mathcal{H}}}
$$
where $\bar{\mathcal{H}}$ is the reproducing kernel Hilbert space constructed from the NTK $K(x,x')$. 
\end{proposition}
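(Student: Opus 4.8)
The plan is to reduce the gradient-flow training of the wide network to an explicit finite-dimensional ridge regression via the NTK linearization granted by Assumption \ref{prop1:assm}, solve it in closed form, and then translate the parameter-space solution into the kernel formula using a matrix push-through identity. Writing the linearized network around initialization, I set $\phi(x):=\nabla_\theta f_{\theta^b}(x)$ (the gradient feature at initialization) and $u:=\theta-\theta^b$, so that the linearized assumption gives $f_\theta(x)=s_{\theta^b}(x)+\phi(x)^T u$ and the population NTK is realized as the feature inner product $K(x,x')=\phi(x)^T\phi(x')$.

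Substituting this into $\hat R_n$, and writing the residual vector $r:=\bm{y}-s_{\theta^b}(\bm{x})$ and the feature matrix $\Phi$ with rows $\phi(x_i)^T$, the objective becomes $\frac1n\|\Phi u-r\|^2+\lambda_n\|u\|^2$, a strongly convex quadratic in $u$ precisely because $\lambda_n>0$. This is where the added regularization earns its keep: strong convexity guarantees a unique minimizer, so the gradient flow started at $u=0$ (i.e. $\theta=\theta^b$) converges as $t\to\infty$ to that unique minimizer, which the normal equations give as $u^*=(\Phi^T\Phi+\lambda_n n\bm{I})^{-1}\Phi^T r$. Evaluating the trained network at a test point, $f_\theta(x)=s_{\theta^b}(x)+\phi(x)^T u^*$, I would then apply the identity $(\Phi^T\Phi+c\bm{I})^{-1}\Phi^T=\Phi^T(\Phi\Phi^T+c\bm{I})^{-1}$ with $c=\lambda_n n$ to move the inversion from the parameter-dimensional matrix to the $n\times n$ Gram matrix. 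Recognizing $\phi(x)^T\Phi^T=\bm{K}(x,\bm{x})^T$ and $\Phi\Phi^T=\bm{K}(\bm{x},\bm{x})$ yields exactly \eqref{equ:NNequalKR}.

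For the kernel-ridge-regression statement, I would invoke the representer theorem in the RKHS $\bar{\mathcal H}$: the minimizer has the form $g(\cdot)=\bm{K}(\cdot,\bm{x})^T\alpha$, and plugging this into the KRR objective and solving for $\alpha$ reproduces the same coefficient vector $(\bm{K}(\bm{x},\bm{x})+\lambda_n n\bm{I})^{-1}r$, so the shifted solution $s_{\theta^b}+g$ coincides with $\hat h_{n,\theta^{b}}$.

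The routine parts are the normal-equation solve, the push-through identity, and the representer-theorem computation. The genuine content — and the step I would treat most carefully — is the passage from the actual wide-network gradient-flow dynamics to the exact linearized kernel description with the \emph{population} NTK $K$: this requires that the empirical tangent kernel at initialization concentrates to the deterministic population NTK and stays (asymptotically) constant throughout training, so that the feature inner products $\phi(x)^T\phi(x')$ may be identified with $K(x,x')$. Since Assumption \ref{prop1:assm} packages exactly this sufficiently-wide, linearized-network regime (supported by the NTK theory of \citep{jacot2018neural,lee2019wide,arora2019exact}), I would cite it to license the linearization and let the finite-dimensional algebra above carry the rest.
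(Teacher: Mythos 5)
Your proof is correct and shares the paper's skeleton --- linearize via Assumption \ref{prop1:assm}, analyze gradient flow on the resulting quadratic objective, convert the parameter-space solution to kernel form, and match with the shifted kernel ridge regression --- but it handles the key limiting step by a genuinely different argument. The paper integrates the linear ODE explicitly: it writes out $\frac{d\theta(t)}{dt}$ under the linearization and MSE loss, solves for $\theta(t)$ in closed form, rewrites the transient in Gram-matrix form as $\exp\bigl(-2\eta t\bigl(\tfrac{1}{n}\bm{K}(\bm{x},\bm{x})+\lambda_n\bm{I}\bigr)\bigr)$, and sends $t\to\infty$, the exponential vanishing because $\lambda_n>0$ makes the matrix strictly positive definite. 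You bypass the trajectory entirely: strong convexity (the same role played by $\lambda_n>0$) gives a unique minimizer to which gradient flow must converge from any initialization, after which the normal equations and the push-through identity $(\Phi^T\Phi+\lambda_n n\bm{I})^{-1}\Phi^T=\Phi^T(\Phi\Phi^T+\lambda_n n\bm{I})^{-1}$ yield \eqref{equ:NNequalKR}. Your route is shorter and makes the initialization-independence of the limit transparent; the paper's explicit integration additionally buys the finite-$t$ kernel description of the whole training trajectory, which is not needed for the proposition but is standard NTK material. The remaining pieces coincide: the paper's Lemma \ref{thm:losskr} is exactly your representer-theorem step (it invokes the closed-form KRR solution, Proposition \ref{RLS1}); the paper's passage from the $p\times p$ inverse to the $n\times n$ Gram-matrix inverse is your push-through identity, used implicitly; and both treatments take the identification of the feature inner products with the population NTK as part of the assumption rather than something to be proved, exactly as you flag.
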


Proposition \ref{linearizednetwork} shows that the shifted kernel ridge regressor using NTK with a shift from an initial function $s_{\theta^b}$ is exactly the linearized neural network regressor that starts from the initial network $s_{\theta^b}$. It also reveals how procedural variability enters into the neural network prediction outcomes, highlighting the need to regard random initialization as an inevitable uncertainty component in neural networks. We provide details and deviation of Proposition \ref{linearizednetwork} in Appendix \ref{sec: NTK}.

\subsection{Challenges from the Interplay of Procedural and Data Variabilities} \label{sec:PNC}
First, we will discuss existing challenges in quantifying and reducing uncertainty to motivate our main approach based on the PNC predictor under the NTK framework. To this end, deep ensemble \citep{lee2015m,lakshminarayanan2017simple,fort2019deep,ashukha2020pitfalls,pearce2018high} is arguably the most common ensemble approach in deep learning to reduce procedural variability. \citep{ashukha2020pitfalls} shows that deep ensemble achieves the best performance compared with a wide range of other ensemble methods, and employing more networks in deep ensemble can lead to better performance. Specifically, the \textit{deep ensemble predictor} \citep{lakshminarayanan2017simple} $\hat{h}^{m}_{n}(x)$ is defined as: 
$
\hat{h}^{m}_{n}(x):=\frac{1}{m} \sum_{i=1}^{m} \hat{h}_{n,\theta_i^{b}}(x)
$
where $m$ is the number of networks in the ensemble, $\hat{h}_{n,\theta_i^{b}}(x)$ is the independently trained network with initialization $\theta_i^{b}$ (with the same training data $\mathcal{D}$), and $\theta_1^{b}, ..., \theta_m^{b}$ are i.i.d. samples drawn from $P_{\theta^{b}}$. We also introduce $\hat{h}^*_{n}(x):=\mathbb{E}_{P_{\theta^{b}}}[\hat{h}_{n,\theta^{b}}(x)]$
as the expectation of $\hat{h}_{n,\theta^{b}}(x)$ with respect to $\theta^{b}\sim P_{\theta^{b}}$. Taking $m\to\infty$ and using the law of large numbers, we have
$\lim_{m\to \infty}\hat{h}^{m}_{n}(x)= \hat{h}^*_{n}(x) \ a.s.$. So $\hat{h}^*_{n}(x)$ behaves as an \emph{idealized} deep ensemble predictor with infinitely many independent training procedures. Using Proposition \ref{linearizednetwork} and the linearity of kernel ridge regressor with respect to data (Appendix \ref{sec:kernelridge}), we have
$$
\hat{h}^{m}_{n}(x)=\frac{1}{m} \sum_{i=1}^{m} s_{\theta_i^{b}}(x)+
\bm{K}(x, \bm{x})^T (\bm{K}(\bm{x},\bm{x})+\lambda_n n \bm{I})^{-1} \left(\bm{y}-\frac{1}{m} \sum_{i=1}^{m} s_{\theta_i^{b}}(\bm{x})\right)
$$
\begin{equation} \label{equ:PNC}
\hat{h}^*_{n}(x)=\mathbb{E}_{P_{\theta^{b}}}[\hat{h}_{n,\theta^{b}}(x)]= \bar{s}(x) +\bm{K}(x, \bm{x})^T(\bm{K}(\bm{x},\bm{x})+\lambda_n n \bm{I})^{-1} (\bm{y}-\bar{s}(\bm{x}))
\end{equation}
where $\bar{s}(x)=\mathbb{E}_{P_{\theta^{b}}}[s_{\theta^{b}}(x)]$ is the expectation of $s_{\theta^{b}}(x)$ with respect to the the distribution of the initialization parameters $P_{\theta^{b}} = \mathcal{N}(0, \bm{I}_p)$. It is easy to see that $\mathbb{E}_{P_{\theta^{b}}}[\hat{h}^{m}_{n}(x)]=\hat{h}^*_{n}(x)$ and
$\text{Var}_{P_{\theta^{b}}}(\hat{h}^{m}_{n}(x))=\frac{1}{m}\text{Var}_{P_{\theta^{b}}}(\hat{h}_{n,\theta^{b}}(x))$
where $\text{Var}_{P_{\theta^{b}}}$ is the variance with respect to the random initialization. As for the total variance:
\begin{proposition}\label{thm:var}
We have
{\small
\begin{align}
\text{Var} (\hat{h}^{m}_{n}(x))
=  \text{Var}(\hat{h}^*_{n}(x)) + \frac{1}{m}\mathbb{E}[ \text{Var}(\hat{h}_{n, \theta^{b}}(x)|\mathcal{D})]\label{var de}
\le \text{Var}(\hat{h}^*_{n}(x)) + \mathbb{E}[ \text{Var}(\hat{h}_{n, \theta^{b}}(x)|\mathcal{D})]
=  \text{Var}(\hat{h}_{n, \theta^{b}}(x)).\notag
\end{align}}
where the variance is taken with respect to both the data $\mathcal{D}$ and the random initialization $P_{\theta^{b}}$.
\end{proposition}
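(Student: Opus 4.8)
The plan is to derive all three relations from the law of total variance, conditioning on the training data $\mathcal{D}$ so that the only remaining randomness is the initialization. I would use three facts already recorded above: by definition $\hat{h}^*_{n}(x)=\mathbb{E}_{P_{\theta^{b}}}[\hat{h}_{n,\theta^{b}}(x)]$, i.e. $\hat{h}^*_{n}(x)$ is the conditional mean $\mathbb{E}[\hat{h}_{n,\theta^{b}}(x)\mid\mathcal{D}]$ of a single trained network given $\mathcal{D}$; the identity $\mathbb{E}_{P_{\theta^{b}}}[\hat{h}^{m}_{n}(x)]=\hat{h}^*_{n}(x)$, i.e. the ensemble shares this conditional mean, $\mathbb{E}[\hat{h}^{m}_{n}(x)\mid\mathcal{D}]=\hat{h}^*_{n}(x)$; and the conditional variance-scaling $\text{Var}(\hat{h}^{m}_{n}(x)\mid\mathcal{D})=\frac{1}{m}\text{Var}(\hat{h}_{n,\theta^{b}}(x)\mid\mathcal{D})$, which is the displayed $\text{Var}_{P_{\theta^{b}}}$ relation read as a function of the data.

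First I would apply the law of total variance to the ensemble predictor,
\[
\text{Var}(\hat{h}^{m}_{n}(x))=\mathbb{E}\big[\text{Var}(\hat{h}^{m}_{n}(x)\mid\mathcal{D})\big]+\text{Var}\big(\mathbb{E}[\hat{h}^{m}_{n}(x)\mid\mathcal{D}]\big),
\]
and substitute the conditional variance-scaling into the first term and the conditional-mean identity into the second. This gives $\frac{1}{m}\mathbb{E}[\text{Var}(\hat{h}_{n,\theta^{b}}(x)\mid\mathcal{D})]+\text{Var}(\hat{h}^*_{n}(x))$, which is the first equality in the statement.

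The middle inequality is then immediate from $1/m\le 1$ for every integer $m\ge 1$ together with the pointwise nonnegativity $\text{Var}(\hat{h}_{n,\theta^{b}}(x)\mid\mathcal{D})\ge 0$, whose expectation is therefore nonnegative. For the final equality I would invoke the law of total variance once more, now for the single-network predictor $\hat{h}_{n,\theta^{b}}(x)$ (the $m=1$ specialization), whose conditional mean given $\mathcal{D}$ is again $\hat{h}^*_{n}(x)$; this yields $\text{Var}(\hat{h}_{n,\theta^{b}}(x))=\mathbb{E}[\text{Var}(\hat{h}_{n,\theta^{b}}(x)\mid\mathcal{D})]+\text{Var}(\hat{h}^*_{n}(x))$, matching the right-hand side. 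I do not expect a genuine obstacle here: the argument is essentially bookkeeping whose only delicate point is to keep the conditional and total variances distinct and to confirm the relevant second moments are finite, which holds because the linearized representation of Proposition \ref{linearizednetwork} renders $\hat{h}_{n,\theta^{b}}(x)$ an affine functional of the Gaussian-initialized $s_{\theta^{b}}$.
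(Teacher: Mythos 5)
Your proposal is correct and follows essentially the same route as the paper's proof: the law of total variance conditioning on $\mathcal{D}$, the conditional-mean identity $\mathbb{E}[\hat{h}^{m}_{n}(x)\mid\mathcal{D}]=\hat{h}^*_{n}(x)$, the conditional variance scaling by $1/m$ (which the paper re-derives from conditional independence of the $\hat{h}_{n,\theta_i^b}(x)$ given $\mathcal{D}$, whereas you cite the relation already displayed in the text), the bound $1/m\le 1$ for the inequality, and a second application of total variance for the final equality. No gaps; the arguments are interchangeable.
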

Proposition \ref{thm:var} shows that deep ensemble improves the statistical profile of a single model by reducing its procedural variability by a factor $\frac{1}{m}$ (but not the data variability), and achieving this reduction requires $m$ training times. To quantify the epistemic uncertainty that contains two variabilities from data and procedure, we may employ resampling approaches such as "bootstrap on a deep ensemble". This would involve two layers of sampling, the outer being the resampling of data, and the inner being the retraining of base networks with different initializations. In other words, this nested sampling amounts to a \textit{multiplicative} amount of training effort in a large number of outer bootstrap resamples and the $m$ inner retaining per resample, leading to a huge computational burden. Moreover, the data variability and procedural variability in neural networks are dependent, making the above approach delicate. More discussion about this issue is presented in Section \ref{sec:addremarks}.

In the following, we introduce our PNC framework that can bypass the above issues in that:

\textbf{Uncertainty reduction.} We train one single network and \emph{one} additional auxiliary network to completely remove the procedural variability. This is in contrast to the deep ensemble approach that trains $m$ networks and only reduces the procedural variability by an $m$-factor.

\textbf{Uncertainty quantification.} 
We resolve the computational challenges in "bootstrap on a deep ensemble", by combining PNC predictors with low-budget inference tools that require only as low as \emph{four} network trainings. 



\subsection{PNC Predictor and Procedural Variability Removal}
We first develop a computationally efficient approach to obtain $\hat{h}^*_{n}(x)$, the idealized deep ensemble predictor that is free of procedural variability. We term our approach the procedural-noise-correcting (PNC) predictor, whose pseudo-code is given in Algorithm \ref{algo:PNC}. This predictor consists of a difference between essentially two neural network outcomes, $\hat{h}_{n,\theta^{b}}(x)$ which is the original network trained using one initialization, and $\hat{\phi}'_{n,\theta^{b}}(x)$ that is trained on a suitably artificially labeled dataset. More precisely, this dataset applies label $\bar{s}(x_i)$ to $x_i$, where $\bar{s}$ is the expected outcome of an \emph{untrained} network with random initialization. Also note that labeling this artificial dataset does not involve any training process and, compared with standard network training, the only additional running time in the PNC predictor is to train this single artificial-label-trained network.




\begin{algorithm}
\caption{Procedural-Noise-Correcting (PNC) Predictor}
\label{algo:PNC}
\textbf{Input:} Training data $\mathcal{D}=\{(x_1,y_1),(x_2,y_2),...,(x_n,y_n)\}$. 

\textbf{Procedure:}
\textbf{1.} Draw $\theta^b\sim P_{\theta^{b}}= \mathcal{N}(0, \bm{I}_p)$ under NTK parameterization. Train a standard base network with data $\mathcal{D}$ and the initialization parameters $\theta^b$, which outputs $\hat{h}_{n,\theta^{b}}(\cdot)$ in \eqref{equ:PNC}.

\textbf{2.} For each $x_i$, generate its procedural-shifted label $\bar{s}(x_i)= \mathbb{E}_{P_{\theta^{b}}}[s_{\theta^{b}}(x_i)]$. 
Train an auxiliary neural network with data $\{(x_1,\bar{s}(x_1)),(x_2,\bar{s}(x_2)),...,(x_n,\bar{s}(x_n))\}$ and the initialization parameters $\theta^b$ (the same one as in Step 1.), which outputs $\hat{\phi}'_{n,\theta^{b}}(\cdot)$. Subtracting $\bar{s}$, we obtain $\hat{\phi}_{n,\theta^{b}}=\hat{\phi}'_{n,\theta^{b}}-\bar{s}$ in \eqref{equ:PNC2}.

\textbf{Output:} At point $x$, output  $\hat{h}_{n,\theta^{b}}(x)-\hat{\phi}_{n,\theta^{b}}(x)$.

\end{algorithm}

The development of our PNC predictor is motivated by the challenge of computing $\hat{h}^*_{n}(x)$ (detailed below). To address this challenge, we characterize the procedural noise instead, which is given by
\begin{equation}\label{equ:PNC2}
\hat{\phi}_{n,\theta^{b}}(\cdot):=\hat{h}_{n,\theta^{b}}(x)-\hat{h}^*_{n}(x)=s_{\theta^{b}}(x)-\bar{s}(x)+
\bm{K}(x, \bm{x})^T (\bm{K}(\bm{x},\bm{x})+\lambda_n n \bm{I})^{-1} (\bar{s}(\bm{x})-s_{\theta^{b}}(\bm{x})).
\end{equation} 
By Proposition \ref{linearizednetwork}, the closed-form expression of $\hat{\phi}_{n,\theta^{b}}$ in \eqref{equ:PNC2} corresponds exactly to the artificial-label-trained network in Step 2 of Algorithm \ref{algo:PNC}. Our artificial-label-trained network is thereby used to quantify the procedural variability directly. 
This observation subsequently leads to:
\begin{theorem}[PNC] \label{thm:PNC}
Suppose that Assumption \ref{prop1:assm} holds. Then the output of the PNC predictor (Algorithm \ref{algo:PNC}) is exactly $\hat{h}^*_{n}(x)$ given in \eqref{equ:PNC}.
\end{theorem}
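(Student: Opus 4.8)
The plan is to show that the auxiliary network in Step 2 of Algorithm \ref{algo:PNC} reproduces the closed-form procedural noise $\hat{\phi}_{n,\theta^{b}}$ in \eqref{equ:PNC2}, after which the output collapses to $\hat{h}^*_{n}(x)$ by a one-line cancellation. First I would apply Proposition \ref{linearizednetwork} to the auxiliary network. The key observation is that Step 2 trains under exactly the same regime as Proposition \ref{linearizednetwork} (same NTK parametrization, same width, same gradient-flow training to $t\to\infty$ as guaranteed by Assumption \ref{prop1:assm}), and --- crucially --- it re-uses the same initialization parameter $\theta^{b}$ drawn in Step 1, so the initial function $s_{\theta^{b}}$ is identical across both networks. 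The only difference from the Step-1 network is that the label vector $\bm{y}$ is replaced by $\bar{s}(\bm{x})$. Since the NTK Gram matrix $\bm{K}(\bm{x},\bm{x})$ depends only on the inputs $\bm{x}$, the operator $(\bm{K}(\bm{x},\bm{x})+\lambda_n n\bm{I})^{-1}$ is unchanged, and Proposition \ref{linearizednetwork} yields $\hat{\phi}'_{n,\theta^{b}}(x)=s_{\theta^{b}}(x)+\bm{K}(x,\bm{x})^T(\bm{K}(\bm{x},\bm{x})+\lambda_n n\bm{I})^{-1}(\bar{s}(\bm{x})-s_{\theta^{b}}(\bm{x}))$.

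Second, I would subtract $\bar{s}$ as prescribed in Step 2 to obtain $\hat{\phi}_{n,\theta^{b}}=\hat{\phi}'_{n,\theta^{b}}-\bar{s}$, and check term by term that this matches \eqref{equ:PNC2}: the $s_{\theta^{b}}(x)-\bar{s}(x)$ term and the residual term both line up exactly. Having established that the network-produced $\hat{\phi}_{n,\theta^{b}}$ equals the $\hat{\phi}_{n,\theta^{b}}$ of \eqref{equ:PNC2}, which is by definition $\hat{h}_{n,\theta^{b}}-\hat{h}^*_{n}$, the final step is the algebraic identity: the output equals $\hat{h}_{n,\theta^{b}}(x)-\hat{\phi}_{n,\theta^{b}}(x)=\hat{h}_{n,\theta^{b}}(x)-(\hat{h}_{n,\theta^{b}}(x)-\hat{h}^*_{n}(x))=\hat{h}^*_{n}(x)$, which is precisely the claim.

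The proof is essentially a verification, so the main obstacle is conceptual rather than computational: justifying that Proposition \ref{linearizednetwork} transfers verbatim to the artificially labeled dataset. The two points requiring care are (i) that re-using the identical $\theta^{b}$ keeps $s_{\theta^{b}}$ common to both networks --- this is what makes the procedural fluctuation $s_{\theta^{b}}-\bar{s}$ \emph{cancel} rather than merely average out, and is the whole reason a single auxiliary network suffices; and (ii) that replacing $\bm{y}$ by the deterministic, data-dependent labels $\bar{s}(\bm{x})$ leaves the kernel machinery intact, because the Gram matrix and its regularized inverse are functions of the inputs only. Once these are in place, no further estimates are needed: the remaining computation is exactly the linear algebra already recorded in \eqref{equ:PNC} and \eqref{equ:PNC2}.
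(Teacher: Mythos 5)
Your proposal is correct and follows essentially the same route as the paper's own proof: apply Proposition \ref{linearizednetwork} to the auxiliary network trained on the labels $\bar{s}(\bm{x})$ with the shared initialization $\theta^{b}$, identify $\hat{\phi}'_{n,\theta^{b}}-\bar{s}$ with the procedural noise in \eqref{equ:PNC2}, and conclude by the cancellation $\hat{h}_{n,\theta^{b}}-\hat{\phi}_{n,\theta^{b}}=\hat{h}^*_{n}$. Your added remarks on why the proposition transfers (identical $\theta^{b}$ and an input-only Gram matrix) are consistent with, and slightly more explicit than, the paper's one-line invocation of the proposition.
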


We discuss two approaches to compute $\bar{s}(x)$ in Step 2 of Algorithm \ref{algo:PNC} and their implications: 1) Under He initialization, $s_{\theta^b}(x)$ is a zero-mean Gaussian process in the infinite width limit  \citep{lee2019wide}, and thus we may set $\bar{s}\equiv 0$ for simplicity. Note that even if we set $\bar{s}\equiv 0$, it does \textit{not} imply that the artificial-label-trained neural network in Step 2 of Algorithm \ref{algo:PNC} will output a zero-constant network whose parameters are all zeros. In fact, neural networks are excessively non-convex and have many nearly global minima. Starting from an initialization parameter $\theta^b$, gradient descent on this artificial-label-trained neural network will find a global minimum that is close to the $\theta^b$ but not "zero" even if "zero" is indeed one of its nearly global minima (``nearly" in the sense of ignoring the negligible regularization term) \citep{du2018gradient,zhang2020type,chizat2019lazy}. This phenomenon can also be observed in Proposition \ref{linearizednetwork} by plugging in $\bm{y}=\bm{0}$.
Hence, the output depends on random initialization in addition to training data, and our artificial-label-trained network is designed to capture this procedural variability. 
2) An alternative approach that does not require specific initialization is to use Monte Carlo integration: $\bar{s}(x)= \lim_{N\to \infty}\frac{1}{N}\sum_{i=1}^N s_{\theta^b_i}(x)$ where $\theta^b_i$ are i.i.d. from $P_{\theta^b}$. When $N$ is finite, it introduces procedural variance that vanishes at the order $N^{-1}$. Since this computation does not involve any training process and is conducted in a rapid manner practically, $N\gg n$ can be applied to guarantee that the procedural variance in computing $\bar{s}(x)$ is negligible compared with the data variance at the order $n^{-1}$. We practically observe that both approaches work similarly well. 


Finally, we provide additional remarks on why $\hat{h}^*_{n}(x)$ cannot be computed easily except using our Algorithm \ref{algo:PNC}. First, the following two candidate approaches encounter computational issues: 1) One may use deep ensemble with sufficiently many networks in the ensemble. However, this approach is time-consuming as $m$ networks in the ensemble mean $m$-fold training times. $m$ is typically as small as five in practice \citep{lakshminarayanan2017simple} so it cannot approximate $\hat{h}^*_{n}(x)$ well. 2) One may use the closed-form expression of $\hat{h}^*_{n}$ in \eqref{equ:PNC}, which requires computing the NTK $K(x,x')$ and the inversion of the NTK Gram matrix $\bm{K}(\bm{x},\bm{x})$. $K(x,x')$ is recursively defined and does not have a simple form for computation, which might be addressed by approximating it with the empirical NTK $K_{\theta}(x,x')$ numerically (See Appendix \ref{sec: NTK}). However, the inversion of the NTK Gram matrix gives rise to a more serious computational issue: the dimension of the NTK Gram matrix is large on large datasets, making the matrix inversion very time-consuming.
Another approach that seems plausible is that: 3) One may initialize one network that is equivalent to $\bar{s}(x)$ and then train it. However, this approach \textit{cannot} obtain $\hat{h}^*_{n}(x)$ based on Proposition \ref{linearizednetwork} because Proposition \ref{linearizednetwork} requires random initialization. If one starts from a deterministic network initialization such as a zero-constant network, then the NTK theory underpinning the linearized training behavior of over-parameterized neural networks breaks down and the resulting network cannot be described by Proposition \ref{linearizednetwork}.

\subsection{Constructing Confidence Intervals from PNC Predictors} \label{sec:confidence}
We construct confidence intervals for $h^*(x)$ leveraging our PNC predictor in Algorithm \ref{algo:PNC}. To handle data variability, two lines of works borrowed from classical statistics may be considered. First is an analytical approach using the delta method for asymptotic normality, which involves computing the influence function \citep{hampel1974influence,fernholz2012mises} that acts as the functional gradient of the predictor with respect to the data distribution. 
It was introduced in modern machine learning for understanding a training point's effect on a model's prediction \citep{koh2017understanding,koh2019accuracy,basu2020influence}. 
The second is to use resampling \citep{efron1994introduction,davison1997bootstrap,shao2012jackknife}, such as the bootstrap or jackknife, to avoid explicit variance computation. 
The classical resampling method requires sufficiently large resample replications and thus incurs demanding resampling effort. For instance, the jackknife approach requires the number of training times to be the same as the training data size, which is very time-consuming and barely feasible for neural networks. Standard bootstrap requires a sufficient number of resampling and retraining to produce accurate resample quantiles. Given these computational bottlenecks, we consider utilizing light-computation resampling alternatives, including batching \citep{Glynn1990simulation,Schmeiser1982batch,schruben1983confidence} and the so-called cheap bootstrap method \citep{lam2022cheap,lamliu2023}, which allows valid confidence interval construction using as few as two model repetitions. 

\textbf{Large-sample asymptotics of the PNC predictor.}
To derive our intervals, we first gain understanding on the large-sample properties of the PNC predictor. Proposition \ref{linearizednetwork}
shows that $\hat{h}^*_{n}(x)$ in \eqref{equ:PNC} is the solution to the following empirical risk minimization problem:
\begin{equation}\label{equ:secondkrrempirical}
\hat{h}^*_{n}(\cdot)=\bar{s}(\cdot)+\min_{g\in \bar{\mathcal{H}}} \frac{1}{n}\sum_{i=1}^n[(y_i-\bar{s}(x_i)-g(x_i))^2] + \lambda_n \|g\|^2_\mathcal{\bar{H}}.
\end{equation}
Its corresponding population risk minimization problem (i.e., removing the data variability) is:
\begin{equation}\label{equ:secondkrrpopulation}
h^*(\cdot)=\bar{s}(\cdot)+\min_{g\in \bar{\mathcal{H}}} \mathbb{E}_{\pi}[(Y-\bar{s}(X)-g(X))^2] + \lambda_0 \|g\|^2_\mathcal{\bar{H}}
\end{equation}
where $\lambda_0=\lim_{n\to\infty}\lambda_n$.   
To study the difference between the empirical and population risk minimization problems of kernel ridge regression, we introduce the following established result on the asymptotic normality of kernel ridge regression (See Appendix \ref{sec:kernelridge} for details):

\begin{proposition} [Asymptotic normality of kernel ridge regression \citep{hable2012asymptotic}] \label{thm:asymptoticnor0}
Suppose that Assumptions \ref{A1} and \ref{A2} hold. Let $g_{P,\lambda}$ be the solution to the following problem: $g_{P,\lambda}:=\min_{g\in \bar{\mathcal{H}}} \mathbb{E}_{P}[(Y-g(X))^2] + \lambda \|g\|^2_\mathcal{\bar{H}}$
where $P=\pi_n$ or $\pi$. Then 
$$\sqrt{n}(g_{\pi_n,\lambda_n}-g_{\pi,\lambda_0})\Rightarrow \mathbb{G} \text{ in } \mathcal{\bar{H}}$$
where $\mathbb{G}$ is a zero-mean Gaussian process and $\Rightarrow$ represents "converges weakly". Moreover, at point $x$, $\sqrt{n}(g_{\pi_n,\lambda_n}(x)-g_{\pi,\lambda_0}(x))\Rightarrow \mathcal{N}(0,\sigma_{\pi}^2(x))$ where $\sigma_{\pi}^2(x)=\int_{z\in \mathcal{X}\times \mathcal{Y}}  IF^2(z; g_{P,\lambda}, \pi)(x)d\pi(z)$ and $IF$ is the influence function of statistical functional $g_{P,\lambda}$.
\end{proposition}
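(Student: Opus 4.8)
The plan is to treat the kernel ridge regression solution as a statistical functional $S:P\mapsto g_{P,\lambda}$ mapping a probability measure $P$ on $\mathcal{X}\times\mathcal{Y}$ into the RKHS $\bar{\mathcal{H}}$, and then to invoke the \emph{functional delta method}. This is essentially the route of \citep{hable2012asymptotic}, so I would aim to verify its hypotheses in our squared-loss setting rather than reprove the abstract machinery from scratch. The two ingredients the delta method needs are Hadamard differentiability of $S$ at $\pi$, with derivative given by the influence function, and weak convergence of the empirical process $\sqrt{n}(\pi_n-\pi)$ in the relevant topology; composing them yields the claimed weak limit in $\bar{\mathcal{H}}$.

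First I would write down the first-order optimality condition. Denoting by $\Phi(x)=K(x,\cdot)$ the canonical feature map and by $C_P=\mathbb{E}_P[\Phi(X)\otimes\Phi(X)]$ the (uncentered) covariance operator on $\bar{\mathcal{H}}$, the minimizer $g_{P,\lambda}$ of $\mathbb{E}_P[(Y-g(X))^2]+\lambda\|g\|_{\bar{\mathcal{H}}}^2$ is characterized by $F(P,g):=\mathbb{E}_P[(Y-g(X))\Phi(X)]-\lambda g=0$. The Fr\'echet derivative of $F$ in $g$ is $-(C_P+\lambda\,\mathrm{Id})$, which, because $\lambda>0$ and $C_P\succeq 0$, is boundedly invertible on $\bar{\mathcal{H}}$; under Assumptions \ref{A1}--\ref{A2} (boundedness and measurability of the kernel together with the moment conditions on $Y$ that make $C_P$ and the residual map well defined) the implicit function theorem in Banach spaces makes $S$ well defined and differentiable near $\pi$. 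Perturbing along $P_\epsilon=(1-\epsilon)\pi+\epsilon\delta_z$ and differentiating the optimality condition at $\epsilon=0$, and using the reproducing property $\langle g,\Phi(X)\rangle=g(X)$ together with the first-order condition at $\pi$, I would obtain
\[
IF(z;S,\pi)=(C_\pi+\lambda\,\mathrm{Id})^{-1}\big[(y-g_{\pi,\lambda}(x))\Phi(x)-\lambda\,g_{\pi,\lambda}\big],
\]
an element of $\bar{\mathcal{H}}$ satisfying $\mathbb{E}_\pi[IF(Z;S,\pi)]=0$. I would then upgrade this G\^ateaux-type computation to Hadamard differentiability of $S$, tangentially to the relevant subspace, which is what the delta method actually requires.

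With differentiability in hand, I would apply the functional delta method: $\sqrt{n}(\pi_n-\pi)$ converges weakly to a tight Gaussian limit in the space dual to the (Donsker) class of integrands $\{z\mapsto(y-g(x))\Phi(x)\}$ generated by the bounded feature map, and Hadamard differentiability transports this to $\sqrt{n}(g_{\pi_n,\lambda}-g_{\pi,\lambda})\Rightarrow\mathbb{G}$ in $\bar{\mathcal{H}}$ with $\mathbb{G}$ zero-mean Gaussian. The pointwise claim then follows by composing with the bounded evaluation functional $g\mapsto\langle g,\Phi(x)\rangle$, whose image of $\mathbb{G}$ is univariate Gaussian with variance $\mathrm{Var}_\pi[IF(Z;S,\pi)(x)]=\int IF^2(z;g_{P,\lambda},\pi)(x)\,d\pi(z)=\sigma_\pi^2(x)$, matching the stated formula.

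The main obstacle, and the step I would spend the most care on, is the interplay between the \emph{changing} regularization $\lambda_n\to\lambda_0$ and the data perturbation $\pi_n\to\pi$, since the proposition compares $g_{\pi_n,\lambda_n}$ against $g_{\pi,\lambda_0}$. The clean fix is to treat $(P,\lambda)\mapsto g_{P,\lambda}$ jointly, establish joint Hadamard differentiability, and then impose a rate condition $\sqrt{n}(\lambda_n-\lambda_0)\to 0$ (which I expect is part of Assumption \ref{A2}) so that the regularization drift contributes nothing to the $\sqrt{n}$-scale limit; the limit is then driven solely by $\sqrt{n}(\pi_n-\pi)$ with the derivative evaluated at $(\pi,\lambda_0)$. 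The secondary technical burden is verifying the Donsker property of the RKHS-valued integrand class and the tightness needed for weak convergence in the infinite-dimensional space $\bar{\mathcal{H}}$, both of which I expect to follow from boundedness of the kernel under Assumptions \ref{A1}--\ref{A2}.
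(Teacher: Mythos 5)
Your proposal follows essentially the same route as the paper: the paper likewise treats this as an instance of Hable (2012)'s Hadamard-differentiability/functional-delta-method result, cites its Theorem 3.1 for the weak convergence $\sqrt{n}(g_{\pi_n,\lambda_n}-g_{\pi,\lambda_0})\Rightarrow\mathbb{G}$ in $\bar{\mathcal{H}}$, obtains the pointwise statement by composing with the evaluation functional $g\mapsto\langle g,K(x,\cdot)\rangle_{\bar{\mathcal{H}}}$, and relies on the same drift condition $\lambda_n-\lambda_0=o(n^{-1/2})$ (which sits in Assumption \ref{A1}, not \ref{A2}). Your influence-function formula $(C_\pi+\lambda\,\mathrm{Id})^{-1}\left[(y-g_{\pi,\lambda}(x))\Phi(x)-\lambda g_{\pi,\lambda}\right]$ is exactly the paper's expression $S_\pi^{-1}(\mathbb{E}_\pi[\mathcal{L}'(Y,g_{\pi,\lambda_0}(X))\Phi(X)])-\mathcal{L}'(z_y,g_{\pi,\lambda_0}(z_x))S_\pi^{-1}\Phi(z_x)$ specialized to the squared loss, for which $S_\pi=2(C_\pi+\lambda_0\,\mathrm{Id})$.
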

Next, we apply the above proposition to our problems about $\hat{h}^*_{n}$.
Let $T_1(P)(x)$ be the solution of the following problem
$\min_{g\in \bar{\mathcal{H}}} \mathbb{E}_{P}[(Y-\bar{s}(X)-g(X))^2] + \lambda_{P} \|g\|^2_\mathcal{\bar{H}}$
that is evaluated at a point $x$. Here, $\lambda_{\pi_n}=\lambda_n$ and $\lambda_{\pi}=\lambda_0$ when $P=\pi_n$ or $P=\pi$. Then we have the following large-sample asymptotic of the PNC predictor, providing the theoretical foundation for our subsequent interval construction approaches. 


\begin{theorem} [Large-sample asymptotics of the PNC predictor] \label{thm:IF}
Suppose that Assumption \ref{prop1:assm} holds. Suppose that Assumptions \ref{A1} and \ref{A2} hold when $Y$ is replaced by $Y-\bar{s}(X)$. Input the training data $\mathcal{D}$ into Algorithm \ref{algo:PNC} to obtain $\hat{h}_{n,\theta^{b}}(x)-\hat{\phi}_{n,\theta^{b}}(x)$. We have
\begin{equation} \label{normalityofh0}
\sqrt{n} \left( \hat{h}_{n,\theta^{b}}(x)-\hat{\phi}_{n,\theta^{b}}(x)-h^*(x)\right) \Rightarrow \mathcal{N}(0,\sigma^2(x)),
\end{equation}
where
\begin{equation}\label{equ:sigmabeta}
\sigma^2(x)=\int_{z \in \mathcal{X}\times \mathcal{Y}} IF^2(z; T_1, \pi)(x)d\pi(z).
\end{equation}
Thus, an asymptotically (in the sense of $n\to \infty$) exact $(1-\alpha)$-level confidence interval of $h^*(x)$ is
$[\hat{h}_{n,\theta^{b}}(x)-\hat{\phi}_{n,\theta^{b}}(x)-\frac{\sigma(x)}{\sqrt{n}} q_{1-\frac{\alpha}{2}},\hat{h}_{n,\theta^{b}}(x)-\hat{\phi}_{n,\theta^{b}}(x)+ \frac{\sigma(x)}{\sqrt{n}} q_{1-\frac{\alpha}{2}}]$
where $q_\alpha$ is the $\alpha$-quantile of the standard Gaussian distribution $\mathcal{N}(0,1)$.
\end{theorem}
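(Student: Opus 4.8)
The plan is to reduce the statement to the already-established asymptotic normality of kernel ridge regression (Proposition \ref{thm:asymptoticnor0}) through two exact algebraic identifications, and then obtain the interval by a standard pivoting argument.

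First I would use Theorem \ref{thm:PNC} to replace the PNC output by the idealized ensemble predictor \emph{exactly}: $\hat{h}_{n,\theta^{b}}(x)-\hat{\phi}_{n,\theta^{b}}(x)=\hat{h}^*_{n}(x)$. The crucial consequence is that this identity is deterministic in $\theta^{b}$, so the procedural variability is entirely eliminated and the only remaining source of randomness in \eqref{normalityofh0} is the sampling of the data $\mathcal{D}$. The task thus collapses to characterizing the large-sample fluctuation of the data functional $\hat{h}^*_{n}(x)$ around its population counterpart $h^*(x)$.

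Next I would invoke the variational representations \eqref{equ:secondkrrempirical} and \eqref{equ:secondkrrpopulation} to write $\hat{h}^*_{n}(x)=\bar{s}(x)+T_1(\pi_n)(x)$ and $h^*(x)=\bar{s}(x)+T_1(\pi)(x)$, where $T_1(P)$ solves the shifted kernel ridge problem with regularization $\lambda_P$. Since $\bar{s}$ is a fixed, data-independent function, it cancels in the difference, giving $\hat{h}^*_{n}(x)-h^*(x)=T_1(\pi_n)(x)-T_1(\pi)(x)$. The key observation is that the problem defining $T_1$ is itself an ordinary kernel ridge regression with the same NTK $K$ and regularization schedule $\lambda_{\pi_n}=\lambda_n\to\lambda_0=\lambda_{\pi}$, obtained from the original regression by the substitution $Y\mapsto Y-\bar{s}(X)$. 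Because the hypothesis of the theorem posits exactly that Assumptions \ref{A1} and \ref{A2} hold for this transformed response, Proposition \ref{thm:asymptoticnor0} applies verbatim to $T_1$, yielding $\sqrt{n}(T_1(\pi_n)(x)-T_1(\pi)(x))\Rightarrow\mathcal{N}(0,\sigma^2(x))$ with $\sigma^2(x)=\int IF^2(z;T_1,\pi)(x)\,d\pi(z)$. Chaining this with the cancellation and the exact identity of the first step delivers \eqref{normalityofh0}.

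The main thing to get right, rather than a deep analytic obstacle, is the legitimacy of the shift argument: one must confirm that $\bar{s}$ may be an arbitrary deterministic offset (not required to lie in $\bar{\mathcal{H}}$), that it contributes no sampling variability, and that the influence function appearing in $\sigma^2(x)$ is precisely that of the transformed functional $T_1$ evaluated at $\pi$ rather than of the unshifted regression. Once \eqref{normalityofh0} is in hand, the confidence interval is routine: dividing by $\sigma(x)$ produces the asymptotic pivot $\sqrt{n}(\hat{h}^*_{n}(x)-h^*(x))/\sigma(x)\Rightarrow\mathcal{N}(0,1)$, and the symmetric standard-Gaussian quantiles $\pm q_{1-\alpha/2}$ give coverage tending to $1-\alpha$, establishing asymptotic exactness.
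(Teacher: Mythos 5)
Your proposal is correct and follows essentially the same route as the paper's proof: invoke Theorem \ref{thm:PNC} to identify the PNC output exactly with $\hat{h}^*_{n}(x)$, observe that $\hat{h}^*_{n}-\bar{s}$ and $h^*-\bar{s}$ are the empirical and population solutions of the shifted kernel ridge regression defining $T_1$, apply Proposition \ref{thm:asymptoticnor0} with $Y$ replaced by $Y-\bar{s}(X)$ so that the offset cancels, and pivot to get the interval. Your additional remarks on why the deterministic shift $\bar{s}$ contributes no sampling variability make explicit a point the paper leaves implicit, but the argument is the same.
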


Theorem \ref{thm:IF} does not indicate the value of $\sigma^2(x)$. In general, $\sigma^2(x)$ is unknown and needs to be estimated.
It is common to approximate $IF^2(z; T_1, \pi)$ with $IF^2(z; T_1, \pi_n)$ and set $\hat{\sigma}^2(x)= \frac{1}{n} \sum_{z_i\in \mathcal{D}} IF^2(z_i; T_1, \pi_{n})(x)$. This method is known as the infinitesimal jackknife variance estimator \citep{shao2012jackknife}. In Appendix \ref{sec:kernelridge}, we derive the exact closed-form expression of the infinitesimal jackknife variance estimation $\hat{\sigma}^2(x)$, and further prove its consistency as well as the statistical coverage guarantee of confidence intervals built upon it. These results could be of theoretical interest.
Yet in practice, the computation of $\hat{\sigma}^2(x)$ requires the evaluation of the NTK Gram matrix and its inversion, which is computationally demanding for large $n$ and thus not recommended for practical implementation on large datasets. 

In the following, we provide two efficient approaches that avoid explicit estimation of the asymptotic variance as in the infinitesimal jackknife approach, and the computation of the NTK Gram matrix inversion.  


\textbf{PNC-enhanced batching.}
We propose an approach for constructing a confidence interval that is particularly useful for large datasets, termed \textit{PNC-enhanced batching}. The pseudo-code is given in Algorithm \ref{algo:batching}.  
Originating from simulation analysis \citep{Glynn1990simulation,Schmeiser1982batch,schruben1983confidence}, the key idea of batching is to construct a self-normalizing $t$-statistic that "cancels out" the unknown variance, leading to a valid confidence interval without explicitly needing to compute this variance. It can be used to conduct inference on serially dependent simulation outputs where the standard error is difficult to compute analytically. Previous studies have demonstrated the effectiveness of batching on the use of inference for Markov chain Monte Carlo \citep{geyer1992practical,flegal2010batch,jones2006fixed} and also the so-called input uncertainty problem \citep{glynn2018constructing}. Its application in deep learning uncertainty quantification was not revealed in previous work, potentially due to the additional procedural variability. Integrating it with the PNC predictor, PNC-enhanced batching is very efficient and meanwhile possesses asymptotically exact coverage of its confidence interval, as stated below.

\begin{algorithm}
\caption{PNC-Enhanced Batching}
\label{algo:batching}
\textbf{Input:} Training dataset $\mathcal{D}$ of size $n$. The number of batches $m'\ge 2$.

\textbf{Procedure:}
\textbf{1.} Split the training data $\mathcal{D}$ into $m'$ batches and input each batch in Algorithm \ref{algo:PNC} to output $\hat{h}^{j}_{n',\theta^{b}}(x)-\hat{\phi}^{j}_{n',\theta^{b}}(x)$ for $j\in[m']$, where $n'=\frac{n}{m'}$.

\textbf{2.} Compute $\psi_B(x)=\frac{1}{m'}\sum_{j=1}^{m'} \left( \hat{h}^{j}_{n',\theta^{b}}(x)-\hat{\phi}^{j}_{n',\theta^{b}}(x)\right)$,\\
and
$S_B(x)^2=\frac{1}{m'-1}\sum_{j=1}^{m'} \left(\hat{h}^{j}_{n',\theta^{b}}(x)-\hat{\phi}^{j}_{n',\theta^{b}}(x)-\psi_B(x)\right)^2$.

\textbf{Output:} At point $x$, output  $\psi_B(x)$ and $S_B(x)^2$.
\end{algorithm}

\begin{theorem}[Exact coverage of PNC-enhanced batching confidence interval]\label{thm:batching}
Suppose that Assumption \ref{prop1:assm} holds. Suppose that Assumptions \ref{A1} and \ref{A2} hold when $Y$ is replaced by $Y-\bar{s}(X)$. Choose any $m'\ge 2$ in Algorithm \ref{algo:batching}.
Then an asymptotically exact $(1-\alpha)$-level confidence interval of $h^*(x)$ is
$[\psi_B(x)-\frac{S_B(x)}{\sqrt{m'}}q_{1-\frac{\alpha}{2}},\psi_B(x)+\frac{S_B(x)}{\sqrt{m'}}q_{1-\frac{\alpha}{2}}]$,
where $q_\alpha$ is the $\alpha$-quantile of the t distribution $t_{m'-1}$ with degree of freedom $m'-1$.
\end{theorem}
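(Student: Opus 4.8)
The plan is to reduce the statement to the classical batching (Student-$t$) pivot argument, using Theorem \ref{thm:IF} applied to each batch together with cross-batch independence. First I would observe that, under Assumption \ref{prop1:assm}, Theorem \ref{thm:PNC} guarantees that the PNC output on batch $j$ is \emph{exactly} $\hat{h}^*_{n'}(x)$ computed from that batch's data alone. In particular the procedural randomness carried by $\theta^b$ is collapsed out, so each batch estimator $\hat{h}^{j}_{n',\theta^{b}}(x)-\hat{\phi}^{j}_{n',\theta^{b}}(x)$ is a deterministic functional of the $n'$ data points in batch $j$. Since the batches are disjoint blocks of i.i.d.\ draws from $\pi$, the $m'$ batch estimators are therefore mutually independent.

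Next I would apply Theorem \ref{thm:IF} with sample size $n'$ to each batch separately; here Assumptions \ref{A1} and \ref{A2} (for $Y-\bar{s}(X)$) are exactly the hypotheses invoked in the statement. This yields, for each $j$, the marginal convergence $\sqrt{n'}\bigl(\hat{h}^{j}_{n',\theta^{b}}(x)-\hat{\phi}^{j}_{n',\theta^{b}}(x)-h^*(x)\bigr)\Rightarrow \mathcal{N}(0,\sigma^2(x))$ as $n'\to\infty$. Writing $Z^n_j:=\sqrt{n'}\bigl(\hat{h}^{j}_{n',\theta^{b}}(x)-\hat{\phi}^{j}_{n',\theta^{b}}(x)-h^*(x)\bigr)$ and combining marginal convergence with the independence from the previous step, the vector $(Z^n_1,\dots,Z^n_{m'})$ converges jointly (fixing $m'$ and letting $n\to\infty$, so $n'=n/m'\to\infty$) to a vector $(Z_1,\dots,Z_{m'})$ of i.i.d.\ $\mathcal{N}(0,\sigma^2(x))$ variables.

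I would then assemble the self-normalizing statistic. A direct computation shows $\sqrt{n'}(\psi_B(x)-h^*(x))=\bar{Z}^n$ and $n'\,S_B(x)^2=S_{Z^n}^2$, where $\bar{Z}^n$ and $S_{Z^n}^2$ are the sample mean and sample variance of $\{Z^n_j\}_{j=1}^{m'}$, so the $\sqrt{n'}$ factor cancels in the ratio
$$T_n:=\frac{\psi_B(x)-h^*(x)}{S_B(x)/\sqrt{m'}}=\frac{\sqrt{m'}\,\bar{Z}^n}{S_{Z^n}}.$$
The map $(z_1,\dots,z_{m'})\mapsto \sqrt{m'}\,\bar z/s_z$ is continuous on $\{s_z>0\}$, an event of probability one under the non-degenerate Gaussian limit (assuming $\sigma^2(x)>0$), so by the continuous mapping theorem $T_n\Rightarrow \sqrt{m'}\,\bar{Z}/S_Z$. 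Because $Z_1,\dots,Z_{m'}$ are i.i.d.\ zero-mean Gaussians, the scale $\sigma(x)$ cancels and $\sqrt{m'}\,\bar{Z}/S_Z$ is exactly the one-sample Student statistic, hence distributed as $t_{m'-1}$. Inverting the pivot gives $\mathbb{P}(|T_n|\le q_{1-\alpha/2})\to 1-\alpha$, which is precisely the stated interval.

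I expect the main obstacle to be the rigorous justification of the joint convergence in the second step, namely upgrading the marginal weak convergence supplied by Theorem \ref{thm:IF} on independent subsamples to joint convergence toward a product-Gaussian limit. The delicate point is to confirm that the limiting law of each coordinate carries no residual coupling through the shared initialization $\theta^b$ or through $\bar{s}$; this is exactly what Theorem \ref{thm:PNC} removes by reducing the PNC output to the deterministic $\hat{h}^*_{n'}$, after which independence follows from data independence alone. A secondary technical point is the non-degeneracy $\sigma^2(x)>0$, which is needed to ensure $S_Z>0$ almost surely so that the $t_{m'-1}$ limit is well-defined.
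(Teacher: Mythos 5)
Your proposal is correct and follows essentially the same route as the paper's proof: per-batch asymptotic normality from Theorem \ref{thm:IF}, independence of the batch estimators, the continuous mapping theorem applied to the self-normalized statistic, and inversion of the resulting $t_{m'-1}$ pivot. Your argument is in fact slightly more careful than the paper's, since you explicitly justify cross-batch independence by invoking Theorem \ref{thm:PNC} to collapse the shared initialization $\theta^b$, and you flag the non-degeneracy $\sigma^2(x)>0$ needed for the continuous mapping step, both of which the paper leaves implicit.
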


\textbf{PNC-enhanced cheap bootstrap.}
We propose an alternative approach for constructing a confidence interval that works for large datasets but is also suitable for smaller ones, termed \textit{PNC-enhanced cheap bootstrap}. The pseudo-code is given in Algorithm \ref{algo:cheapbootstrap}. Cheap bootstrap \citep{lam2022cheap,lamliu2023,lam2022cheap1} is a modified bootstrap procedure with substantially less retraining effort than conventional bootstrap methods, via leveraging the asymptotic independence between the original and resample estimators 
and asymptotic normality. Note that our proposal is fundamentally different from the naive use of bootstrap or bagging when additional randomness appears \citep{lakshminarayanan2017simple,lee2015m,geurts2006extremely}, which mixes the procedural and data variabilities and does not directly provide confidence intervals. 
Like PNC-enhanced batching, PNC-enhanced cheap bootstrap also avoids the explicit estimation of the asymptotic variance. The difference between the above two approaches is that PNC-enhanced batching divides data into a small number of batches, and thus is suggested for large datasets while PNC-enhanced cheap bootstrap re-selects samples from the entire dataset, hence suited also for smaller datasets. On the other hand, in terms of running time, when $R=m'-1$, PNC-enhanced cheap bootstrap and PNC-enhanced batching share the same number of network training, but since batching is trained on subsets of the data, the individual network training in PNC-enhanced batching is faster than PNC-enhanced cheap bootstrap. PNC-enhanced cheap bootstrap also enjoys asymptotically exact coverage of its confidence interval, as stated below.

\begin{algorithm}
\caption{PNC-Enhanced Cheap Bootstrap}
\label{algo:cheapbootstrap}
\textbf{Input:} Training dataset $\mathcal{D}$ of size $n$. The number of replications $R\ge 1$.

\textbf{Procedure:}
\textbf{1.} Input $\mathcal{D}$ in Algorithm \ref{algo:PNC} to output $\hat{h}_{n,\theta^{b}}(x)-\hat{\phi}_{n,\theta^{b}}(x)$.

\textbf{2.} For each replication $j \in [R]$, resample $\mathcal{D}$, i.e., independently and uniformly sample with replacement from $\{(x_1,y_1), . . . , (x_{n},y_{n})\}$ $n$ times to obtain $\mathcal{D}^{*j}=\{(x_1^{*{j}},y_1^{*{j}}), . . . , (x_{n}^{*{j}},y_{n}^{*{j}})\}$. 
Input
$\mathcal{D}^{*j}$ in Algorithm \ref{algo:PNC} to output $
\hat{h}^{*{j}}_{n,\theta^{b}}(x)-\hat{\phi}^{*{j}}_{n,\theta^{b}}(x).$

\textbf{3.} Compute
$\psi_C(x)=\hat{h}_{n,\theta^{b}}(x)-\hat{\phi}_{n,\theta^{b}}(x),$\\
and $S_C(x)^2=\frac{1}{R}\sum_{j=1}^{R} \left(\hat{h}^{*j}_{n,\theta^{b}}(x)-\hat{\phi}^{*j}_{n,\theta^{b}}(x)-\psi_C(x)\right)^2.$

\textbf{Output:} At point $x$, output  $\psi_C(x)$ and $S_C(x)^2$.

\end{algorithm}


\begin{theorem}[Exact coverage of PNC-enhanced cheap bootstrap confidence interval]\label{thm:cheapbootstrap}
Suppose that Assumption \ref{prop1:assm} holds. Suppose that Assumptions \ref{A1} and \ref{A2} hold when $Y$ is replaced by $Y-\bar{s}(X)$. Choose any $R\ge 1$ in Algorithm \ref{algo:cheapbootstrap}. Then an asymptotically exact $(1-\alpha)$-level confidence interval of $h^*(x)$ is
$[\psi_C(x)-S_C(x)q_{1-\frac{\alpha}{2}},\psi_C(x)+S_C(x)q_{1-\frac{\alpha}{2}}]$
where $q_\alpha$ is the $\alpha$-quantile of the t distribution $t_{R}$ with degree of freedom $R$. 
\end{theorem}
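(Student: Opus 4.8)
The plan is to recognize Theorem \ref{thm:cheapbootstrap} as an instance of the cheap bootstrap framework \citep{lam2022cheap} applied to the shifted kernel ridge regression functional underlying the PNC predictor, and to derive the $t$-distributional limit from a joint asymptotic normality statement in which the original estimator and the $R$ resample estimators become \emph{independent} in the limit. The first move is to exploit Theorem \ref{thm:PNC} to eliminate procedural variability entirely: each invocation of Algorithm \ref{algo:PNC} on a dataset returns exactly the idealized ensemble predictor, so $\psi_C(x)=\hat{h}^*_n(x)=T_1(\pi_n)(x)$ and the $j$-th resample output equals $T_1(\pi_n^{*j})(x)$, where $\pi_n^{*j}$ is the empirical distribution of the resample $\mathcal{D}^{*j}$. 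This reduces the claim to a pure data-variability statement about the smooth functional $T_1$, with no residual initialization randomness to track.

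Next I would assemble the joint limit. From Theorem \ref{thm:IF} (itself resting on the Hadamard differentiability of the KRR functional in Proposition \ref{thm:asymptoticnor0}) I have the influence-function linearization $\sqrt{n}(T_1(\pi_n)(x)-h^*(x)) \Rightarrow \xi_0 \sim \mathcal{N}(0,\sigma^2(x))$, with $\sigma^2(x)$ as in \eqref{equ:sigmabeta}. For the resamples, I would invoke the functional delta method for the bootstrap to obtain, conditionally on $\mathcal{D}$, $\sqrt{n}(T_1(\pi_n^{*j})(x)-T_1(\pi_n)(x)) \Rightarrow \eta_j$ with $\eta_j \sim \mathcal{N}(0,\sigma^2(x))$ for each $j$. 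Expressing the resample fluctuation through the multinomial resampling counts $(N_i^{*j}-1)$ paired against the influence-function scores makes the conditional variance computation explicit and shows it converges to $\sigma^2(x)$; since the resamples use independent multinomial draws, $\eta_1,\dots,\eta_R$ are conditionally independent.

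The key step, and the main obstacle, is the asymptotic independence between $\xi_0$ and $(\eta_1,\dots,\eta_R)$. The argument is that, conditional on $\mathcal{D}$, the limiting law of $(\eta_1,\dots,\eta_R)$ is i.i.d.\ $\mathcal{N}(0,\sigma^2(x))$ and does \emph{not} depend on the data realization, whereas $\xi_0$ is a functional of the data alone; factorizing the joint characteristic function then yields that $\xi_0,\eta_1,\dots,\eta_R$ are jointly independent $\mathcal{N}(0,\sigma^2(x))$ in the limit. Making this rigorous in the infinite-dimensional RKHS setting is the delicate part: I would need the bootstrap consistency of the KRR estimator (a conditional-in-probability weak convergence statement) together with uniform control ensuring the conditional limit is deterministic in the data, so that the continuous mapping theorem applies to the full joint vector rather than to marginals alone.

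Finally I would conclude by continuous mapping. The self-normalizing statistic satisfies
$$\frac{\psi_C(x)-h^*(x)}{S_C(x)} = \frac{\sqrt{n}(\psi_C(x)-h^*(x))}{\sqrt{\tfrac{1}{R}\sum_{j=1}^R \big(\sqrt{n}(T_1(\pi_n^{*j})(x)-T_1(\pi_n)(x))\big)^2}} \Rightarrow \frac{\xi_0}{\sqrt{\tfrac{1}{R}\sum_{j=1}^R \eta_j^2}}.$$
Since $\xi_0/\sigma(x)\sim\mathcal{N}(0,1)$ is independent of $\sum_{j=1}^R(\eta_j/\sigma(x))^2\sim\chi^2_R$, the unknown $\sigma(x)$ cancels and the ratio is exactly $t_R$-distributed, provided $\sigma^2(x)>0$ so the statistic is well defined. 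Hence the coverage probability converges to $P(|t_R|\le q_{1-\alpha/2})=1-\alpha$, establishing asymptotic exactness for every fixed $R\ge 1$.
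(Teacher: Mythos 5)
Your proposal is correct and follows essentially the same route as the paper's proof: the original-data asymptotic normality from Theorem \ref{thm:IF}, a conditional bootstrap CLT for the shifted KRR functional, and the asymptotic independence of the original and resampled estimators yielding the $t_R$ limit for the self-normalized statistic. The only substantive difference is that where you sketch the bootstrap CLT (via multinomial resampling counts against influence-function scores) and the asymptotic-independence/$t$-statistic argument from first principles, the paper invokes these as black boxes — Theorem 2 of \citep{christmann2013consistency} for the conditional bootstrap normality and Theorem 1 of \citep{lam2022cheap} for the cheap-bootstrap coverage conclusion — so the "delicate part" you flag is precisely what those cited results supply.
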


\section{Experiments} \label{sec:exp}
We conduct numerical experiments to demonstrate the effectiveness of our approaches.\footnotemark\footnotetext{The source code for experiments is available at \url{https://github.com/HZ0000/UQforNN}.} Our proposed approaches are evaluated on the following two tasks: 1) construct confidence intervals and 2) reduce procedural variability to improve prediction. With a known ground-truth regression function, training data are regenerated from the underlying synthetic data generative process. According to the NTK parameterization in Section \ref{sec: NTK}, our base network is formed with two fully connected layers with $n \times 32$ neurons in each hidden layer to ensure the network is sufficiently wide and over-parameterized. Detailed optimization specifications are described in Proposition \ref{prop1}. Our synthetic datasets \#1 are generated with the following distributions: $X \sim \text{Unif}([0,0.2]^d)$ and $Y \sim \sum_{i=1}^d \sin(X^{(i)}) + \mathcal{N}(0,0.001^2)$. The training set $\mathcal{D}=\{(x_i,y_i): i=1,...,n\}$ is formed by drawing i.i.d. samples of $(X,Y)$ from the above distribution with sample size $n$. We consider multiple dimension settings $d=2,4,8,16$ and data size settings $n = 128, 256, 512, 1024$ to study the effects on different dimensionalities and data sizes. Additional experimental results on more datasets are presented in Appendix \ref{sec:expmore}. The implementation details of our experiments are also provided in Appendix \ref{sec:expmore}.


\begingroup
\tabcolsep = 2pt
\begin{table}[t] 
  \centering
  \caption{Confidence interval construction on synthetic datasets \#1 with different data sizes $n=128, 256, 512, 1024$ and different data dimensions $d=2, 4, 8, 16$. The CR results that attain the desired confidence level $95\%/90\%$ are in \textbf{bold}.}
\scriptsize
  \begin{tabular}{cccccccccc}
   \toprule
 & \multicolumn{3}{c}{PNC-enhanced batching} & \multicolumn{3}{c}{PNC-enhanced cheap bootstrap} & \multicolumn{3}{c}{DropoutUQ}\\
 & 95\%CI(CR/IW) & 90\%CI(CR/IW) & MP & 95\%CI(CR/IW) & 90\%CI(CR/IW) & MP & 95\%CI(CR/IW) & 90\%CI(CR/IW) & MP\\
    \hline
$(d=2)$ &  &  & & &  & &  & & \\
 $n=128$ & \textbf{0.98}/0.0437  &  \textbf{0.95}/0.0323 &  0.1998 
 & \textbf{0.98}/0.0571&  \textbf{0.95}/0.0438 & 0.1983 & 0.93/0.0564 & 0.87/0.0480  & 0.2119 \\
    $n=256$ &\textbf{0.99}/0.0301 &  \textbf{0.96}/0.0222 &   0.2004 &   
    \textbf{1.00}/0.0376 & \textbf{0.95}/0.0289 &  0.1997 & \textbf{0.99}/0.0390 & \textbf{0.96}/0.0327 &  0.2045 \\
    $n=512$ &\textbf{0.97}/0.0211  & \textbf{0.95}/0.0156  & 0.2004  & \textbf{0.98}/0.0294 & \textbf{0.96}/0.0226   &0.1992  
    & \textbf{0.95}/0.0313 & \textbf{0.93}/0.0267  & 0.2049 \\
    $n=1024$ &  \textbf{0.96}/0.0152 & \textbf{0.92}/0.0112  & 0.2003  
    &\textbf{0.97}/0.0205 &\textbf{0.94}/0.0157   &0.2011  & \textbf{0.96}/0.0287 & 0.89/0.0244  & 0.2052\\
    \hline
$(d=4)$ &  &  & & &  & &  & & \\
    $n=128$ 
    & \textbf{0.98}/0.0622  &\textbf{0.94}/0.0460  & 0.4012
    &\textbf{1.00}/0.0820 &\textbf{0.95}/0.0629  & 0.4013  & \textbf{0.96}/0.0868 &  \textbf{0.90}/0.0742 & 0.4157\\  
    $n=256$ & \textbf{0.98}/0.0411  &\textbf{0.95}/0.0304   &0.3991   &\textbf{0.99}/0.0569 &\textbf{0.96}/0.0437   &0.3988  & \textbf{0.97}/0.0566 &  \textbf{0.92}/0.0481 & 0.4071 \\
    $n=512$ & \textbf{0.97}/0.0295 &\textbf{0.94}/0.0218   
    & 0.3988  
    & \textbf{0.98}/0.0396 & \textbf{0.96}/0.0304   &0.3989  & \textbf{0.96}/0.0375 &  \textbf{0.90}/0.0316 & 0.4045 \\
    $n=1024$ &\textbf{0.97}/0.0213 & \textbf{0.95}/0.0158  &  0.3983
    & \textbf{0.98}/0.0296 & \textbf{0.98}/0.0227   & 0.3988  
    & 0.93/0.0340 & 0.84/0.0290  & 0.4055 \\
    \hline
$(d=8)$ &  &  & & &  & &  & & \\
    $n=128$ & \textbf{0.98}/0.0865  &\textbf{0.95}/0.0639   &0.7999   
    & \textbf{0.98}/0.1084& \textbf{0.98}/0.0832  &  0.7980
    & 0.91/0.1348 & 0.87/0.1142  & 0.8245\\
    $n=256$ & \textbf{0.99}/0.0594 & \textbf{0.96}/0.0439  & 0.8014  
    & \textbf{1.00}/0.0796& \textbf{0.99}/0.0611  &0.7957  & 0.88/0.0791 & 0.81/0.0656  & 0.8152\\
    $n=512$ & \textbf{0.98}/0.0393 & \textbf{0.94}/0.0290  & 0.7983  
    & \textbf{0.99}/0.0588 & \textbf{0.99}/0.0452  & 0.8012
    & \textbf{0.99}/0.0632 & \textbf{0.97}/0.0538  & 0.7998 \\  
    $n=1024$ & \textbf{0.95}/0.0270  & \textbf{0.92}/0.0200  & 0.7981
    & \textbf{0.99}/0.0393& \textbf{0.96}/0.0302   &0.7997     
    & 0.88/0.0421 & 0.82/0.0356 & 0.8040\\
    \hline
$(d=16)$ &  &  & & &  & &  & & \\
    $n=128$ & \textbf{0.95}/0.1068& \textbf{0.94}/0.0790 & 1.5946 & 
    \textbf{0.99}/0.1568 & \textbf{0.97}/0.1204 & 1.6057 
    & \textbf{0.99}/0.1565 &  \textbf{0.96}/0.1313 & 1.6093\\ 
    $n=256$ &\textbf{0.98}/0.0730  &\textbf{0.94}/0.0540   & 1.5966      
    & \textbf{1.00}/0.1137 &\textbf{0.98}/0.0873   & 1.5954  
    & 0.90/0.1072 & 0.86/0.0909  & 1.6177\\
    $n=512$ & \textbf{0.98}/0.0543 & \textbf{0.93}/0.0401  &  1.5966
    &\textbf{0.99}/0.0788 & \textbf{0.98}/0.0605  & 1.5972  
    & 0.86/0.0920 &  0.80/0.0790 & 1.6132\\
    $n=1024$ & \textbf{0.95}/0.0388 & \textbf{0.92}/0.0287  & 1.5976 
    & \textbf{0.97}/0.0550 & \textbf{0.95}/0.0422  & 1.5980 
    & 0.87/0.0760 & 0.83/0.0647  & 1.6079 \\
    \bottomrule
  \end{tabular}
 \label{CIresults:syn1}  
\end{table}

\textbf{Constructing confidence intervals.} We use $x_0=(0.1,...,0.1)$ as the fixed test point for confidence intervals construction. Let $y_0=\sum_{i=1}^d \sin(0.1)$ be the ground-truth label for $x_0$ without aleatoric noise. Our goal is to construct a confidence interval at $x_0$ for $y_0$. To evaluate the performance of confidence intervals, we set the number of experiments $J = 100$. In each repetition $j\in [J]$, we generate a new training dataset from the same synthetic distribution and construct a new confidence interval $[L_j(x_0), U_j(x_0)]$ with $95\%$ or $90\%$ confidence level, and then check the coverage rate (CR): $\text{CR}=\frac{1}{J}\sum_{j=1}^J \textbf{1}_{L_j(x_0)\le y_0 \le U_j(x_0)}$. The primary evaluation of the confidence interval is based on whether its coverage rate is equal to or larger than the desired confidence level. In addition to CR, we also report the median point of the interval (MP): $\text{MP}=\frac{1}{J}\sum_{j=1}^J \frac{1}{2}(U_j(x_0)+L_j(x_0))$ and the interval width (IW): $\text{IW}=\frac{1}{J}\sum_{j=1}^J (U_j(x_0)-L_j(x_0))$. MP reflects the most likely point prediction for $x_0$, while IW reflects the conservativeness of the confidence interval, as a wide interval can easily achieve $95\%$ coverage rate but is not practically preferable.
In the confidence interval tasks, we use the dropout-based Bayesian approximate inference (DropoutUQ) \citep{gal2016dropout} as the baseline for comparison since it is one of the most widely-used representatives in Bayesian uncertainty quantification.

Table \ref{CIresults:syn1} reports the CR, IW, and MP of $95\%$ and $90\%$ confidence intervals from our proposals and baselines. We have the following observations:

1) The CR values: For our two proposed approaches, in the majority of experiments, $\text{CR}\ge 95\%$ for $95\%$ confidence intervals, and $\text{CR}\ge 90\%$ for $90\%$ confidence intervals, thus satisfying the coverage requirements, while in very few experiments (in Appendix \ref{sec:expmore}), the performances are degraded. Nonetheless, this occasional degeneration appears insignificant and can be potentially explained by the fact that the statistical guarantee of confidence intervals generated by PNC-enhanced batching and cheap bootstrap is in the asymptotic sense. 
In contrast, DropoutUQ does not have such statistical guarantees. The intervals from DropoutUQ cannot normally maintain a satisfactory coverage rate when they are narrow, although they have a similar or larger interval width as PNC-enhanced batching or cheap bootstrap. Moreover, we notice that the dropout rate has a significant impact on the interval width of DropoutUQ, and thus, additional tuning efforts are demanded for this baseline, while our approaches do not need this level of tuning procedure. These observations demonstrate the robustness and effectiveness of our proposals.

2) The IW values: Overall, the IW values are relatively small for both of our approaches, indicating that our framework is able to successfully generate narrow confidence intervals with high coverage rates. Moreover, as shown, the IW values from both of our approaches decrease roughly at the rate $n^{-\frac{1}{2}}$ along with an increased training data size $n$, which corroborates with our theoretical results well (Section \ref{sec:confidence}). 
In comparison with PNC-enhanced cheap bootstrap, we observe that PNC-enhanced batching tends to generate narrower intervals and has less training time per network, as only a subset of training data is involved in one training trial. In general, PNC-enhanced batching requires data splits, making its accuracy vulnerable to small sample sizes, and thus leading to less stable performance compared with PNC-enhanced cheap bootstrap. Therefore, we recommend employing PNC-enhanced cheap bootstrap on small datasets and applying PNC-enhanced batching on datasets with sufficient training samples.

3) The MP values: MP appears consistent with the ground-truth label $y_0$ in all experiments, showing that our confidence intervals can accurately capture the ground-truth label on average under multiple problem settings.

\begin{table}[t] 
\scriptsize
  \centering
  \caption{Reducing procedural variability to improve prediction on synthetic datasets \#1 with different data sizes $n=128, 256, 512, 1024$ and different data dimensions $d=2, 4, 8, 16$. The best MSE results are in \textbf{bold}.}
  \begin{tabular}{ccccc}
   \toprule
 MSE & One base network & PNC predictor & Deep ensemble (5 networks) & Deep ensemble (2 networks)\\
    \hline
($d=2$) &  &  &  &  \\
$n=128$ & $(6.68\pm 2.74)\times 10^{-4}$ & \bm{$(3.68\pm 1.84)\times 10^{-4}$} & $(4.22\pm 1.74)\times 10^{-4}$ & $(6.28\pm 2.74)\times 10^{-4}$\\
$n=256$ & $(2.38\pm 1.06)\times 10^{-4}$ & \bm{$(6.22\pm 2.01)\times 10^{-5}$} & $(8.86\pm 4.94)\times 10^{-5}$ & $(1.25\pm 0.67)\times 10^{-4}$\\
$n=512$ & $(1.03\pm 0.98)\times 10^{-4}$ & \bm{$(2.06\pm 0.72)\times 10^{-5}$} & $(3.32\pm 1.10)\times 10^{-5}$ & $(5.11\pm 3.33)\times 10^{-5}$\\
$n=1024$ & $(6.98\pm 4.77)\times 10^{-5}$ & \bm{$(8.92\pm 5.69)\times 10^{-6}$} & $(1.72\pm 0.77)\times 10^{-5}$ & $(5.75\pm 2.28)\times 10^{-5}$\\
    \hline
($d=4$) &  &  &  &  \\
$n=128$ & $(2.11\pm 1.49)\times 10^{-3}$ & \bm{$(1.18\pm 0.25)\times 10^{-3}$} & $(1.53\pm 0.60)\times 10^{-3}$ & $(1.83\pm 0.80)\times 10^{-3}$\\
$n=256$ & $(8.82\pm 3.26)\times 10^{-4}$ & \bm{$(4.09\pm 0.85)\times 10^{-4}$} & $(4.22\pm 2.01)\times 10^{-4}$ & $(5.47\pm 1.91)\times 10^{-4}$\\
$n=512$ & $(5.35\pm 1.91)\times 10^{-4}$ & \bm{$(1.92\pm 0.53)\times 10^{-4}$} & $(2.88\pm 1.80)\times 10^{-4}$ & $(3.99\pm 1.87)\times 10^{-4}$\\
$n=1024$ & $(2.23\pm 0.83)\times 10^{-4}$ & \bm{$(4.22\pm 0.43)\times 10^{-5}$} & $(8.50\pm 2.39)\times 10^{-5}$ & $(1.65\pm 0.57)\times 10^{-4}$\\
    \hline
($d=8$) &  &  &  &  \\
$n=128$ & $(4.07\pm 1.04)\times 10^{-3}$ & \bm{$(2.54\pm 0.29)\times 10^{-3}$} & $(2.73\pm 0.86)\times 10^{-3}$ & $(3.27\pm 0.96)\times 10^{-3}$\\
$n=256$ & $(2.13\pm 0.70)\times 10^{-3}$ & \bm{$(1.05\pm 0.18)\times 10^{-3}$} & $(1.34\pm 0.33)\times 10^{-3}$ & $(1.48\pm 0.38)\times 10^{-3}$\\
$n=512$ & $(1.36\pm 0.34)\times 10^{-3}$ & \bm{$(5.04\pm 0.70)\times 10^{-4}$} & $(7.40\pm 1.35)\times 10^{-4}$ & $(1.08\pm 0.40)\times 10^{-3}$\\
$n=1024$ & $(8.54\pm 2.23)\times 10^{-4}$ & \bm{$(2.02\pm 0.24)\times 10^{-4}$} & $(3.79\pm 1.01)\times 10^{-4}$ & $(5.91\pm 1.87)\times 10^{-4}$\\
    \hline
($d=16$) &  &  &  &  \\
$n=128$ & $(9.03\pm 1.64)\times 10^{-3}$ & \bm{$(6.00\pm 0.83)\times 10^{-3}$} & $(6.37\pm 0.67)\times 10^{-3}$ & $(7.47\pm 1.29)\times 10^{-3}$\\
$n=256$ & $(6.76\pm 1.79)\times 10^{-3}$ & \bm{$(4.19\pm 0.94)\times 10^{-3}$} & $(4.83\pm 1.14)\times 10^{-3}$ & $(5.46\pm 1.49)\times 10^{-3}$\\
$n=512$ & $(4.19\pm 0.51)\times 10^{-3}$ & \bm{$(1.60\pm 0.35)\times 10^{-3}$} & $(2.05\pm 0.30)\times 10^{-3}$ & $(2.87\pm 0.34)\times 10^{-3}$\\
$n=1024$ & $(3.16\pm 0.35)\times 10^{-3}$ & \bm{$(7.44\pm 0.99)\times 10^{-4}$} & $(1.20\pm 0.13)\times 10^{-3}$ & $(1.91\pm 0.20)\times 10^{-3}$\\
    \bottomrule
  \end{tabular}
 \label{PNCresults:syn1}  
\end{table}

\textbf{Reduce procedural variability to improve prediction.} In this part, we illustrate that our proposed PNC preditor (Algorithm \ref{algo:PNC}) can achieve better prediction by reducing procedural variability. We use empirical mean square error (MSE) to evaluate the prediction performance. Specifically, the empirical MSE is computed as $\text{MSE}(h):=\frac{1}{N_{te}}\sum_{i=1}^{N_{te}} (h(x'_i)-y'_i)^2$, where $(x'_i,y'_i), i=1,...,N_{te}$ are i.i.d. test data independent of the training. We set $N_{te}=2048$ for test performance evaluation in all experiments. We compare the prediction performance of our proposed PNC predictor  with the following approaches:
a base network, i.e., a standard network training with a single random initialization, and the deep ensemble approach with different numbers ($m$) of networks in the ensemble \citep{lakshminarayanan2017simple}. In deep ensemble, we consider $m=2$ as it has a similar running time as our PNC (with one additional network training), and $m=5$ as it is widely used in previous work \citep{lakshminarayanan2017simple,pearce2018high}. All networks share the same hyperparameters and specifications.

Table \ref{PNCresults:syn1} reports the sample mean and sample standard deviation of MSE from our proposed PNC predictor and other baseline approaches with $10$ experimental repetition times. As shown, the proposed PNC achieves notably smaller MSE than baselines with similar computational costs (the base network and deep ensemble with two networks) in all experiments. Moreover, it achieves better or analogous results compared with the deep ensemble with 5 networks, but the running time of the latter is 2.5 times as much as that of PNC. It also has relatively small sample standard deviations, showing its capacity to reduce variability in networks effectively. These results verify that our proposed PNC can successfully produce better prediction 
and simultaneously bears significantly reduced computational costs compared to the classical deep ensemble.

\vspace{-0.7em} 
\section{Concluding Remarks}
\vspace{-0.7em} 
In this study, we propose a systematic epistemic uncertainty assessment framework with simultaneous statistical coverage guarantees and low computation costs for over-parametrized neural networks in regression. Benefiting from our proposed PNC
approach, our study shows promise to characterize and eliminate procedural uncertainty by introducing only one artificial-label-trained network. Integrated with suitable light-computation resampling methods, we provide two effective approaches to construct asymptotically exact-coverage confidence intervals using few model retrainings. Our evaluation results in different settings corroborate our theory and show that our approach can generate confidence intervals that are narrow and have satisfactory coverages simultaneously. In the future, we will extend our current framework to general loss functions that can potentially handle classification tasks to open up opportunities for broader applications.

\begin{ack}
This work has been supported in part by the National Science Foundation under grants CAREER CMMI-1834710 and IIS-1849280, and the Cheung-Kong Innovation Doctoral Fellowship. The authors thank
the anonymous reviewers for their constructive comments which have helped greatly improve the quality of our paper.
\end{ack}

\medskip

\bibliographystyle{abbrv}
\bibliography{example_paper}


\newpage

\begin{appendices}

 

We provide further results and discussions in this supplementary material. Section \ref{sec: other} discusses aleatoric uncertainty and predictive uncertainty. Section \ref{sec:kernelridge} discusses statistical inference for kernel-based regression and, in particular, the asymptotic normality of kernel-based regression. Section \ref{sec: NTK} discusses the training of the linearized neural networks based on the NTK theory. In particular, we show in Proposition \ref{linearizednetwork} that the shifted kernel ridge regressor using NTK with a shift from an initial function $s_{\theta^b}$ is exactly the linearized neural network regressor that starts from the initial network $s_{\theta^b}$. Section \ref{sec:addremarks} provides additional remarks to explain some of the details in the main paper.
Section \ref{sec: proofs} presents the proofs for results in the paper. Section \ref{sec:expmore} presents experimental details and more experimental results. 

\section{Other Types of Uncertainty} \label{sec: other}

Back in 1930, \citep{fisher1930inverse} first introduced a formal distinction between aleatory and epistemic uncertainty in statistics \citep{hampel2011potential}, while in modern machine learning, their distinction and connection were investigated in \citet{senge2014reliable} and further extended to deep learning models \citep{kendall2017uncertainties,depeweg2018decomposition}. In \citet{pearce2018high}, the differences between procedural and data variabilities in epistemic uncertainty were intuitively described; however, no rigorous definition or estimation method was provided. Our previous draft \citep{huang2021quantifying} discussed intuitive approaches for quantifying procedural and data variabilities, which some of the ideas in this work originate from. In the following, we present an additional discussion on other types of uncertainty for the sake of completeness. Section \ref{sec:aleatoric} presents aleatoric uncertainty and Section \ref{sec:predictive} presents predictive uncertainty.

\subsection{Aleatoric Uncertainty} \label{sec:aleatoric}


We note that if we could remove all the epistemic uncertainty, i.e., letting $\text{UQ}_{EU}=0$, the best predictor we could get is the Bayes predictor $h_B^*$. However, $h_B^*$, as a point estimator, is not able to capture the randomness in $\pi_{Y|X}$ if it is not a point mass distribution.

The easiest way to think of aleatoric uncertainty is that it is captured by $\pi_{Y|X}$. At the level of the realized response or label value, this uncertainty is represented by
$$\text{UQ}_{AU}=y-h_B^*(x)$$

If the connection between $X$ and $Y$ is non-deterministic, the description
of a new prediction problem involves a conditional probability distribution $\pi_{Y|X}$. Standard neural network predictors can only provide a single output $y$. Thus, even
given full information of the distribution $\pi$, the uncertainty in the prediction of a single output $y$ remains. This uncertainty cannot be removed by better modeling or more data.

There are multiple work aiming to estimate the aleatoric uncertainty, i.e., to learn $\pi_{Y|X}$. For instance, conditional quantile regression aims to learn each quantile of the distribution $\pi_{Y|X}$ \citep{koenker2001quantile, meinshausen2006quantile,steinwart2011estimating}. Conditional density estimation aims to approximately describe the density of $\pi_{Y|X}$ \citep{holmes2007fast,dutordoir2018gaussian,izbicki2016nonparametric,dalmasso2020conditional,freeman2017unified,izbicki2017}. However, we remark that these approaches also face their own epistemic uncertainty in the estimation. See \cite{christmann2007svms,steinwart2011estimating,bai2021understanding} for recent studies on the epistemic bias in quantile regression.

\subsection{Predictive Uncertainty} \label{sec:predictive}

Existing work on uncertainty measurement in deep learning models mainly focuses on \textit{prediction sets} (\textit{predictive uncertainty}), which captures the sum of epistemic and aleatoric uncertainties \citep{pearce2018high,romano2019conformalized,barber2019predictive,alaa2020frequentist,chen2021learning}. 

In certain scenarios, it is not necessary to estimate each uncertainty separately. A distinction between aleatoric and epistemic uncertainties might appear less significant. The user may only concern about the overall uncertainty related to the prediction, which is called predictive uncertainty and can be thought as the summation of the epistemic and aleatoric uncertainties. 

The most common way to quantify predictive uncertainty is the \textit{prediction set}: We aim to find a map $\hat{C}: \mathcal{X} \to 2^\mathcal{Y}$ which maps an input to a subset of the output space so that for each test point $x_0\in \mathcal{X}$, the prediction set $\hat{C}(x_0)$ is likely to cover the true outcome $y_0$ \citep{vovk2005algorithmic,barber2019limits,barber2019predictive,lei2014distribution,lei2015conformal,lei2018distribution}. This prediction set is also called a prediction interval in the case of regression \citep{khosravi2010lower,khosravi2011comprehensive}. The prediction set communicates predictive uncertainty via a statistical guarantee on the marginal coverage, i.e.,
$$\mathbb{P}(y_0 \in \hat{C}(x_0))\ge 1- \delta$$
for a small threshold $\delta>0$ where the probability is taken with respect to both training data $\mathcal{D}_{tr}$ (epistemic uncertainty) for learning $\hat{C}$ and the test data $(x_0,y_0)$ (aleatoric uncertainty). It is more tempting to obtain a statistical guarantee with only aleatoric uncertainty by considering the probablity conditional on the prediction set, i.e.,
\begin{equation} \label{equ:coverage}
\mathbb{P}(y_0 \in \hat{C}(x_0)|\hat{C})\ge 1- \delta.    
\end{equation}
However, this guarantee is in general very hard to achieve in the finite-sample case. Even asymptotically, \eqref{equ:coverage} is not easy to achieve unless we have a very simple structure on the data distribution \citep{rosenfeld2018discriminative,zhang2019random}. A recent study show that \eqref{equ:coverage} could hold in the finite-sample sense if we could leverage a set of validation data \citep{chen2021learning}.


\section{Statistical Inference for Kernel-Based Regression} \label{sec:kernelridge}

\subsection{Classical Statistical Learning Framework and Kernel Ridge Regression}

Following Section \ref{sec:SL}, assuming that the input-output pair $(X,Y)$ is a random vector following an unknown probability distribution $\pi$ on $\mathcal{X} \times \mathcal{Y}$ where $X\in \mathcal{X}\subset \mathbb{R}^d$ is an input and $Y\in \mathcal{Y}\subset \mathbb{R}$ is the corresponding output. Let the marginal distribution of $X$ be $\pi_{X}(x)$ and the conditional distribution of $Y$ given $X$ be $\pi_{Y|X}(y|x)$. 

Suppose a learner has access to a set of training data $\mathcal{D}=\{(x_1,y_1),(x_2,y_2),...,(x_n,y_n)\}$, which is assumed to be independent and identically distributed (i.i.d.) according to the data distribution $\pi$. Let $\bm{x}=(x_1,...,x_n)^T$ and $\bm{y}=(y_1,...,y_n)^T$ for short. Let $\pi_{\mathcal{D}}$ or $\pi_{n}$ denote the empirical distribution associated with the training data $\mathcal{D}$, where $\pi_{n}$ is used to emphasize the sample size dependence in asymptotic results:
$$\pi_{\mathcal{D}}=\pi_n=\frac{1}{n}\sum_{i=1}^n\delta_{(x_i,y_i)}.$$

The goal of the learner is to obtain a function $h:\mathcal{X}\to \mathcal{Y}$ such that $h(x)$ is a "good"
predictor for the response $y$ if $X=x$ is observed for any $x\in \mathcal{X}$. This is typically done by minimizing the following (ground-truth) \textit{population risk}
$$R_\pi(h):=\mathbb{E}_{(X,Y)\sim \pi} [\mathcal{L}(h(X),Y)]$$
where $\mathcal{L}: \mathcal{Y} \times \mathbb{R} \to \mathbb{R}$ is the \textit{loss function}. For instance, $\mathcal{L}$ can be the square error in regression or cross-entropy loss in classification. 
If $h$ is allowed to be any possible functions, the best predictions in the sense of minimizing the risk are
described by the \textit{Bayes predictor} $h_B^*$ \citep{hullermeier2021aleatoric,mohri2018foundations}:
$$h_B^*(X):=\argmin {\hat{y}\in\mathcal{Y}} \mathbb{E}_{Y \sim  \pi_{Y|X}} [\mathcal{L}(\hat{y}, Y)|X].$$
Note that $h_B^*$ cannot be obtained in practice, since the conditional distribution $\pi_{Y|X}(y|x)$ is unknown. In particular, the least squares loss $\mathcal{L}(h(X),Y)=(h(X)-Y)^2$ yields the \textit{(ground-truth) regression function} defined by
$$g^*_\pi(x):=\mathbb{E}_{(X,Y)\sim \pi} [Y|X=x].$$

As the ground-truth distribution $\pi$ is unknown, we can neither compute nor
minimize the population risk $R_\pi(h)$ directly.
We define the risk under a general distribution $P$,
$$R_{P}(h):=\mathbb{E}_{(X,Y)\sim P} [\mathcal{L}(h(X),Y)].$$
In particular, for $P=\pi_{n}$ (the empirical distribution associated with the training data $\mathcal{D}$), an \textit{empirical risk} is derived:
$$R_{\pi_n}(h):= \mathbb{E}_{(X,Y)\sim \pi_n} [\mathcal{L}(h(X),Y)]=\frac{1}{n} \sum_{i=1}^{n} \mathcal{L}(h(x_i),y_i)$$

In practice, minimizing the risk over $h$ is restricted to a certain function class.
Let $\mathcal{H}$ be a hypothesis class, which is a set of functions $\{h: \mathcal{X} \to \mathcal{Y}| h \in \mathcal{H}\}$. For instance, 1) $\mathcal{H}$ could be a nonparametric class such as a \textit{reproducing kernel Hilbert space (RKHS)}. The resulting method is known as \textit{kernel-based regression}; See below. 2) $\mathcal{H}$ could be a parametric class $\{h_\theta: \mathcal{X} \to \mathcal{Y} | \theta \in \Theta\}$ where $\theta$ is the parameter, and $\Theta$ is the set of all possible parameters, e.g., the linear coefficients in a linear regression model, or the set of all network parameters in a neural network model.

In classical statistical learning, one is interested in finding a
hypothesis $g_{\pi} \in \mathcal{H}$ that minimizes the population risk
\begin{equation} \label{equ:trueminimizer}
g_{\pi}:= \argmin {h\in\mathcal{H}} R_\pi(h).    
\end{equation}
which is called the true risk minimizer.
We remark that $h_{\pi}$ is the best choice in the sense of minimizing the risk within the hypothesis set $\mathcal{H}$ and different choices of $\mathcal{H}$ may lead to different $h_{\pi}$. 
As $\pi$ is unknown, the learner may consider minimizing the empirical risk:
\begin{equation} \label{equ:regtrueminimizer}
g_{\pi_n}:= \argmin {h\in\mathcal{H}} R_{\pi_n}(h)
\end{equation}
which is called the empirical risk minimizer. More generally, to avoid overfitting in the finite sample regime, the learner may consider a regularized empirical risk minimization problem:
\begin{equation} \label{equ:regempiricalminimizer}
g_{\pi_n,\lambda_n}:= \argmin {h\in\mathcal{H}} R_{\pi_n}(h)+\lambda_n\|h\|^2_{\mathcal{H}}
\end{equation}
and its corresponding population problem
\begin{equation} \label{equ:regpopulationminimizer}
g_{\pi,\lambda_0}:= \argmin {h\in\mathcal{H}} R_{\pi}(h)+\lambda_0\|h\|^2_{\mathcal{H}}
\end{equation}
which are called the true regularized risk minimizer and the empirical regularized risk minimizer, respectively. Here,
$\lambda_n\ge 0$ is the regularization parameter, which may depend on the data size $n$, and we assume it has a limit $\lambda_0=\lim_{n\to\infty}\lambda_n$. 
In general, the target $g_{\pi}$ is not equal to $g_{\pi_n,\lambda_n}$ or $g_{\pi_n}$. We omit $0$ in the subscript if $\lambda=0$ and write $g_{\pi_n}:=g_{\pi_n,0}$ and $g_{\pi}=g_{\pi,0}$ for simplicity. 

The above framework fits classical machine learning approaches such as linear regression or kernel-based convex regression (such as kernel ridge regression), since \eqref{equ:regempiricalminimizer} as well as \eqref{equ:regpopulationminimizer} has a unique solution in this setting. However, this framework is not suitable for deep learning: the empirical (regularized) risk minimizer $g_{\pi_n,\lambda_n}$ is not unique and cannot be precisely obtained due to the non-convex nature of neural networks. Therefore, the gradient-based methods in deep learning can only find an approximate solution $g_{\pi_n,\lambda_n}$, subjected to procedural variability.

\textbf{Kernel ridge regression.} Next, we apply the above framework to kernel ridge regression and review some basic existing results about kernel ridge regression, which can be found, e.g., in \citep{berlinet2011reproducing,lam2023doubly,SZ,SZ2,SZ3,CS,SW}.

The kernel-based regression means that the hypothesis class $\mathcal{H}$ in statistical learning is chosen to be an RKHS. Formally, a Hilbert space $\mathcal{H}$ consisting of functions $h: \mathcal{X} \to \mathbb{R}$ with an inner product $\langle \cdot,\cdot \rangle : \mathcal{H}\times \mathcal{H}\to \mathbb{R}$ is a RKHS if there exists a symmetric positive definite function $k : \mathcal{X} \times \mathcal{X} \to \mathbb{R}$, called a (reproducing) kernel, such that for all $x \in \mathcal{X}$, we have $k(\cdot,x) \in \mathcal{H} $ and for all $x \in \mathcal{X}$ and $h \in \mathcal{H}$, we have $h(x) = \langle h(\cdot),k(\cdot,x) \rangle$. We use the above $k$ to denote the kernel associated with  $\mathcal{H}$. Note that for any symmetric positive definite function $k$, we can naturally construct, from $k$, an RKHS $\mathcal{H}$ whose kernel is exactly $k$ \citep{berlinet2011reproducing}.

When $\mathcal{H}$ is a RKHS, $g_{\pi_n,\lambda_n}$ in \eqref{equ:regempiricalminimizer} can be computed for a number of convex loss functions $\mathcal{L}$. In particular, for the least squares loss $\mathcal{L}(h(X),Y)=(h(X)-Y)^2$, the resulting problem  \eqref{equ:regempiricalminimizer} is well known as the kernel ridge regression, and there exists a closed-form expression for both $g_{\pi_n,\lambda_n}$ and $g_{\pi,\lambda_0}$, as we discuss below.

Formally, the kernel ridge regression problem is given by
$$g_{\pi_n,\lambda_n}(x):= \mathop{\arg\min}_{g\in \mathcal{H}} \left\{ \frac{1}{n} \sum_{j=1}^{n} (y_j-g(x_j))^2 +\lambda_n \|g\|^2_{\mathcal{H}} \right\}$$
where $\lambda_n>0$ is a regularization hyperparameter that may depend on the cardinality of the training data set. 
There is a closed-form formula for its solution, as stated below.
\begin{proposition}\label{RLS1}
Let
$$\bm{k}(\bm{x},\bm{x})=(k(x_i,x_j))_{n\times n}\in \mathbb{R}^{n\times n}$$
be the kernel Gram matrix and 
$$\bm{k}(x, \bm{x})=(k(x,x_1), \cdots,  k(x,x_n))^T.$$ 
Then the (unique) kernel ridge regression solution is given as 
$$g_{\pi_n,\lambda_n}(x)= \bm{k}(x, \bm{x})^T(\bm{k}(\bm{x},\bm{x})+\lambda_n n \bm{I})^{-1} \bm{y}$$ 
\end{proposition}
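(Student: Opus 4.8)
The plan is to exploit the Hilbert-space structure of the RKHS $\mathcal{H}$ and reduce the infinite-dimensional optimization to a finite-dimensional linear system. First I would record that the objective
\[
J(g) := \frac{1}{n}\sum_{j=1}^n (y_j - g(x_j))^2 + \lambda_n \|g\|_{\mathcal{H}}^2
\]
has a unique minimizer: the data-fit term is convex in $g$, while the regularizer $\lambda_n\|g\|_{\mathcal{H}}^2$ with $\lambda_n>0$ is strictly convex and coercive, so $J$ is strictly convex and coercive, which guarantees existence and uniqueness of $g_{\pi_n,\lambda_n}$.

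Next I would identify the minimizer through a first-order stationarity condition. Using the reproducing property $g(x_j) = \langle g, k(\cdot,x_j)\rangle_{\mathcal{H}}$, the Fréchet derivative of $J$ at $g$ in a direction $h\in\mathcal{H}$ is
\[
DJ(g)[h] = \left\langle h,\ 2\lambda_n g - \frac{2}{n}\sum_{j=1}^n (y_j - g(x_j))\,k(\cdot,x_j)\right\rangle_{\mathcal{H}}.
\]
Setting $DJ(g)[h]=0$ for every $h$ forces the bracketed Riesz representer to vanish, which yields both the representer form of the solution and its coefficients at once:
\[
g = \sum_{j=1}^n \alpha_j\, k(\cdot,x_j), \qquad \alpha_j = \frac{1}{\lambda_n n}\bigl(y_j - g(x_j)\bigr).
\]

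Then I would pass to matrix form. Writing $g(x_i) = \sum_j \alpha_j k(x_i,x_j) = (\bm{k}(\bm{x},\bm{x})\,\bm{\alpha})_i$, the relation $\lambda_n n\,\alpha_j = y_j - g(x_j)$ becomes $(\bm{k}(\bm{x},\bm{x}) + \lambda_n n\,\bm{I})\bm{\alpha} = \bm{y}$. Since $\bm{k}(\bm{x},\bm{x})$ is symmetric positive semidefinite and $\lambda_n>0$, the matrix $\bm{k}(\bm{x},\bm{x}) + \lambda_n n\,\bm{I}$ is positive definite and hence invertible, giving $\bm{\alpha} = (\bm{k}(\bm{x},\bm{x}) + \lambda_n n\,\bm{I})^{-1}\bm{y}$. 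Evaluating $g$ at an arbitrary $x$ via the reproducing property, $g(x) = \sum_j \alpha_j k(x,x_j) = \bm{k}(x,\bm{x})^T \bm{\alpha}$, delivers the claimed closed form.

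The convexity/coercivity bookkeeping and the differentiation are routine; the one point deserving care is the passage from stationarity to a well-defined coefficient vector. If one instead substitutes a generic $g = \sum_j \alpha_j k(\cdot,x_j)$ and differentiates directly over $\bm{\alpha}\in\mathbb{R}^n$, the first-order condition reads $\bm{k}(\bm{x},\bm{x})\bigl[(\bm{k}(\bm{x},\bm{x}) + \lambda_n n\,\bm{I})\bm{\alpha} - \bm{y}\bigr] = 0$, which does not pin down $\bm{\alpha}$ when $\bm{k}(\bm{x},\bm{x})$ is singular. I expect this to be the main (minor) obstacle, and it is resolved by observing that uniqueness already holds at the level of the function $g$, so any admissible $\bm{\alpha}$ yields the same $g$; since the particular vector $\bm{\alpha} = (\bm{k}(\bm{x},\bm{x}) + \lambda_n n\,\bm{I})^{-1}\bm{y}$ always satisfies the stationarity condition, it represents the unique minimizer.
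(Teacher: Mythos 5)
Your proof is correct, but there is nothing in the paper to compare it against step by step: the paper states Proposition \ref{RLS1} as a reviewed, standard fact about kernel ridge regression and simply cites the literature (e.g.\ \cite{berlinet2011reproducing}) rather than proving it. Your argument is the classical variational/representer-theorem derivation, and it is sound: strict convexity and coercivity of $J$ give existence and uniqueness of the minimizer (in a Hilbert space one also wants continuity of the data-fit term, which follows from boundedness of the evaluation functionals in an RKHS, so that $J$ is weakly lower semicontinuous --- this is the ``routine bookkeeping'' you allude to and is indeed routine); the Fr\'echet stationarity condition, via the reproducing property, simultaneously yields the representer form $g=\sum_j \alpha_j k(\cdot,x_j)$ and the explicit relation $\lambda_n n\,\alpha_j = y_j - g(x_j)$; and the matrix step $(\bm{k}(\bm{x},\bm{x})+\lambda_n n \bm{I})\bm{\alpha}=\bm{y}$ with invertibility from positive semidefiniteness plus $\lambda_n>0$ delivers exactly the claimed closed form. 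Your closing remark is also a genuine point handled correctly: differentiating over the coefficients $\bm{\alpha}$ of a generic representer ansatz only determines $\bm{\alpha}$ up to the kernel of $\bm{k}(\bm{x},\bm{x})$, but since $\bm{k}(\bm{x},\bm{x})(\bm{\alpha}-\bm{\alpha}')=\bm{0}$ forces $\|\sum_j(\alpha_j-\alpha_j')k(\cdot,x_j)\|_{\mathcal{H}}=0$, all such coefficient vectors represent the same function, and your function-space stationarity argument sidesteps the issue entirely. In short, you have supplied a complete, self-contained proof of a result the paper delegates to references.
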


It is worth mentioning the linearity of kernel ridge regression: When we have two outputs $\bm{y}=(y_1,...,y_n)^T$ and $\bm{y}'=(y'_1,...,y'_n)^T$, we have
\begin{align*}
&\bm{k}(x, \bm{x})^T(\bm{k}(\bm{x},\bm{x})+\lambda_n n \bm{I})^{-1} (\bm{y}+\bm{y}')\\
=&\bm{k}(x, \bm{x})^T(\bm{k}(\bm{x},\bm{x})+\lambda_n n \bm{I})^{-1} \bm{y}+\bm{k}(x, \bm{x})^T(\bm{k}(\bm{x},\bm{x})+\lambda_n n \bm{I})^{-1} \bm{y}    
\end{align*}
which means the solution to
$$\mathop{\arg\min}_{g\in \mathcal{H}} \left\{ \frac{1}{n} \sum_{j=1}^{n} (y_j+y_j'-g(x_j))^2 +\lambda_n \|g\|^2_{\mathcal{H}} \right\}$$
is the summation of the solution to
$$\mathop{\arg\min}_{g\in \mathcal{H}} \left\{ \frac{1}{n} \sum_{j=1}^{n} (y_j-g(x_j))^2 +\lambda_n \|g\|^2_{\mathcal{H}} \right\}$$
and the solution to
$$\mathop{\arg\min}_{g\in \mathcal{H}} \left\{ \frac{1}{n} \sum_{j=1}^{n} (y'_j-g(x_j))^2 +\lambda_n \|g\|^2_{\mathcal{H}} \right\}.$$


Define $\mathcal{O}_k:L^2(\pi_X)\to L^2(\pi_X)$ as the kernel integral operator
$$(\mathcal{O}_k g)(x) :=\int_{\mathcal{X}} k(x,x')g(x')\pi_X(x')dx', \ x\in\mathcal{X},\ g\in L^2(\pi_X).$$
where $L^2(\pi_X):=\{f: \int_{\mathcal{X}} f(x)^2 \pi_X(x)dx<\infty\}$.

\citet{SW} shows that $\mathcal{O}_k$ is a compact and positive self-adjoint linear operator on $L^2(\pi_X)$. Note that since $L_k$ is positive: $\mathcal{O}_k\ge 0$, we have that $\mathcal{O}_k+\lambda_0 I$ is strictly positive for any $\lambda_0>0$: $\mathcal{O}_k+\lambda_0 I>0$, and thus its inverse operator $(\mathcal{O}_k+\lambda_0 I)^{-1}$ exists and is unique. Note that $(\mathcal{O}_k+\lambda_0 I)^{-1}$ is also a linear operator on $L^2(\pi_X)$.


Next, consider the population risk minimization problem corresponding to $g_{\pi_n,\lambda_n}$ as follows:
$$g_{\pi,\lambda_0} :=\mathop{\arg\min}_{g\in \mathcal{H}} \left\{ \mathbb{E}_{\pi}[(Y-g(X))^2] + \lambda_0 \|g\|^2_{\mathcal{H}} \right\}.$$
It is easy to see that this problem is equivalent to
$$g_{\pi,\lambda_0} :=\mathop{\arg\min}_{g\in \mathcal{H}} \left\{ \mathbb{E}_{\pi}[(g^*_\pi(X)-g(X))^2] + \lambda_0 \|g\|^2_{\mathcal{H}} \right\}.$$
where $g^*_\pi(x)=\mathbb{E}_{(X,Y)\sim \pi} [Y|X=x]$ is the (ground-truth) regression function.
This problem has the following explicit closed-form expression of the solution (a proof can be found in \citet{CS}):
\begin{proposition}\label{RLS3}
The solution of $g_{\pi,\lambda_0}$ is given as $g_{\pi,\lambda_0}= (\mathcal{O}_k+\lambda_0 I)^{-1} \mathcal{O}_k g^*_\pi$.
\end{proposition}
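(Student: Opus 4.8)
The plan is to find the minimizer via its first-order optimality condition and then recast that condition as an operator equation that inverts directly. Since $\lambda_0>0$, the objective $J(g):=\mathbb{E}_\pi[(Y-g(X))^2]+\lambda_0\|g\|_{\mathcal{H}}^2$ is strictly convex and coercive on the Hilbert space $\mathcal{H}$, so it admits a unique minimizer characterized by the vanishing of its G\^ateaux derivative. First I would compute, for an arbitrary direction $h\in\mathcal{H}$,
$$\frac{d}{dt}\Big|_{t=0}J(g+th)=-2\mathbb{E}_\pi[(Y-g(X))h(X)]+2\lambda_0\langle g,h\rangle_{\mathcal{H}},$$
and set it to zero, giving the stationarity condition $\mathbb{E}_\pi[(Y-g(X))h(X)]=\lambda_0\langle g,h\rangle_{\mathcal{H}}$ for all $h\in\mathcal{H}$. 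Using the tower property (equivalently, the bias--variance decomposition already recorded above, which lets $Y$ be replaced by $g^*_\pi(X)$), this becomes $\mathbb{E}_{\pi_X}[(g^*_\pi(X)-g(X))h(X)]=\lambda_0\langle g,h\rangle_{\mathcal{H}}$.

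The crux is a representation lemma identifying the $L^2(\pi_X)$ pairing against a test function in $\mathcal{H}$ with the RKHS inner product through $\mathcal{O}_k$: for $f\in L^2(\pi_X)$ and $h\in\mathcal{H}$,
$$\mathbb{E}_{\pi_X}[f(X)h(X)]=\langle \mathcal{O}_k f,h\rangle_{\mathcal{H}}.$$
I would prove this by invoking the reproducing property $h(X)=\langle h,k(\cdot,X)\rangle_{\mathcal{H}}$ inside the expectation and exchanging expectation with the inner product (a Bochner/Fubini argument, justified since $\int \|k(\cdot,x')\|_{\mathcal{H}}\,|f(x')|\,\pi_X(x')dx'<\infty$ under the standard boundedness of $k$), so that $\mathbb{E}_{\pi_X}[f(X)k(\cdot,X)]$ is exactly $(\mathcal{O}_k f)(\cdot)$ and lies in $\mathcal{H}$. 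Applying this with $f=g^*_\pi-g$ turns the stationarity condition into $\langle \mathcal{O}_k(g^*_\pi-g),h\rangle_{\mathcal{H}}=\lambda_0\langle g,h\rangle_{\mathcal{H}}$ for all $h\in\mathcal{H}$; since this holds for every $h$, the two elements $\mathcal{O}_k(g^*_\pi-g)$ and $\lambda_0 g$ of $\mathcal{H}$ must coincide, yielding the operator equation $\mathcal{O}_k g^*_\pi=(\mathcal{O}_k+\lambda_0 I)g$.

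Finally I would invert. The excerpt has already established that $\mathcal{O}_k$ is compact, positive and self-adjoint on $L^2(\pi_X)$, so $\mathcal{O}_k+\lambda_0 I$ is strictly positive and boundedly invertible for any $\lambda_0>0$; reading the last identity as an equality of elements of $\mathcal{H}\subseteq L^2(\pi_X)$ and applying $(\mathcal{O}_k+\lambda_0 I)^{-1}$ gives $g=(\mathcal{O}_k+\lambda_0 I)^{-1}\mathcal{O}_k g^*_\pi$, the claimed formula. The main obstacle is the representation lemma together with the bookkeeping between the two function spaces: one must verify $\mathcal{O}_k f\in\mathcal{H}$ so that the RKHS inner product is meaningful, justify the interchange of integration with the $\mathcal{H}$-inner product, and confirm that the operator equation derived in $\mathcal{H}$ may be inverted using the $L^2(\pi_X)$ inverse of $\mathcal{O}_k+\lambda_0 I$ (these are consistent because $\mathcal{O}_k$ maps $L^2(\pi_X)$ into $\mathcal{H}$, so both sides and the resulting $g$ lie in the relevant range).
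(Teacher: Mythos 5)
Your proof is correct. The paper itself gives no proof of this proposition; it defers entirely to the citation \citet{CS}, so there is nothing internal to compare against. Your argument — strict convexity and coercivity to get a unique minimizer, the G\^ateaux-derivative stationarity condition, the representation identity $\mathbb{E}_{\pi_X}[f(X)h(X)]=\langle \mathcal{O}_k f,h\rangle_{\mathcal{H}}$ linking the $L^2(\pi_X)$ pairing to the RKHS inner product, and the final inversion of $\mathcal{O}_k+\lambda_0 I$ — is precisely the standard variational derivation found in that cited literature, and you handle the two genuine subtleties correctly: the Bochner-integrability justification for exchanging expectation with the $\mathcal{H}$-inner product (valid here since $\mathcal{X}$ is compact and $k$ is continuous, so $\sup_x\sqrt{k(x,x)}<\infty$, and $g^*_\pi\in L^2(\pi_X)$ once $\mathbb{E}[Y^2]<\infty$), and the bookkeeping that the normal equation $\mathcal{O}_k g^*_\pi=(\mathcal{O}_k+\lambda_0 I)g$ derived as an identity in $\mathcal{H}$ can be inverted using the $L^2(\pi_X)$ inverse because $\mathcal{O}_k$ maps $L^2(\pi_X)$ into $\mathcal{H}$ and pointwise equality implies equality in $L^2(\pi_X)$. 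In short, your proposal supplies a sound, self-contained proof of a result the paper only quotes.
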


The linearity of kernel ridge regression also holds for the population-level solution:
\begin{align*}
(g+g')_{\pi,\lambda_0}=&(\mathcal{O}_k+\lambda_0 I)^{-1} \mathcal{O}_k (g+g')^*_\pi\\
=&  (\mathcal{O}_k+\lambda_0 I)^{-1} \mathcal{O}_k (g^*_\pi+g'^*_\pi)\\
=&  (\mathcal{O}_k+\lambda_0 I)^{-1} \mathcal{O}_k g^*_\pi+(\mathcal{O}_k+\lambda_0 I)^{-1} \mathcal{O}_k g'^*_\pi\\
=& g_{\pi,\lambda_0}+g'_{\pi,\lambda_0}
\end{align*}
for any two functions $g, g' \in L^2(\pi_X)$ since both $(\mathcal{O}_k+\lambda_0 I)^{-1}$ and $\mathcal{O}_k$ are linear operators on $L^2(\pi_X)$.

\subsection{Asymptotic Normality of Kernel-Based Regression} \label{sec:asymptoticKBR}
In this section, we review existing results on the asymptotic normality of kernel-based regression, which is established using the influence function concept \citep{christmann2007consistency,christmann2013consistency,hable2012asymptotic}.

Let $\mathcal{H}$ be a generic RKHS with the associated kernel $k(x,x')$. Let $\|\cdot\|_{\mathcal{H}}$ denote the norm on $\mathcal{H}$. Note that the feature map $k_x=k(x,\cdot)$ is a function in $\mathcal{H}$. We consider the connection between the true regularized risk minimizer $g_{\pi,\lambda_0}$ and the empirical regularized risk minimizer $g_{\pi_n,\lambda_n}$ introduced in \eqref{equ:regtrueminimizer} and \eqref{equ:regempiricalminimizer} when $\mathcal{H}$ is a RKHS.

The asymptotic normality of kernel-based regression roughly states that under some mild conditions, $\sqrt{n}(g_{\pi_n,\lambda_n}-g_{\pi,\lambda_0})$ is asymptotically normal as $n\to \infty$. This result provides the theoretical foundations to conduct asymptotic statistical inference for kernel-based regression, in particular, building asymptotic confidence interval for the ground-truth $g_{\pi,\lambda_0}$. 

To be rigorous, we first introduce the following assumptions:
\begin{assumption}\label{A1}
Let $(\Omega, \mathcal{A}, \mathcal{Q})$ be a probability space.
Suppose that $\mathcal{X}\subset\mathbb{R}^d$ is closed and bounded with Borel-$\sigma$-algebra $\mathcal{B}(\mathcal{X})$, and $\mathcal{Y}\subset\mathbb{R}$ is closed with Borel-$\sigma$-algebra $\mathcal{B}(\mathcal{Y})$. The Borel-$\sigma$-algebra of $\mathcal{X} \times \mathcal{Y}$ is denoted by $\mathcal{B}(\mathcal{X} \times \mathcal{Y})$. Let $\mathcal{H}$ be an RKHS with kernel $k$ and let $\pi$
be a probability measure on $(\mathcal{X} \times \mathcal{Y}, \mathcal{B}(\mathcal{X} \times \mathcal{Y}))$.
Assume that the kernel of $\mathcal{H}$, $k: \mathcal{X}\times \mathcal{X} \to \mathbb{R}$ is the restriction of an $m$-times continuously
differentiable kernel $\tilde{k}: \mathbb{R}^d\times \mathbb{R}^d\to \mathbb{R}$ such that $m > d/2$ and $k \ne 0$. Let $\lambda_0 \in (0, +\infty)$ be any positive number. Suppose that the sequence $\lambda_n$ satisfy that $\lambda_n-\lambda_0= o(\frac{1}{\sqrt{n}})$.
\end{assumption}

\begin{assumption}\label{A2}
Let $\mathcal{L}: \mathcal{Y}\times \mathbb{R} \to [0,+\infty)$ be a loss function satisfying the following conditions:
\begin{itemize}
\item $\mathcal{L}$ is a convex loss, i.e.,  $z \mapsto L(y, z)$ is convex in $z$ for every $y \in \mathcal{Y}$.
\item The partial derivatives
$$\mathcal{L}'(y,z)=\frac{\partial}{\partial z}\mathcal{L}(y,z), \quad \mathcal{L}''(y,z)=\frac{\partial^2}{\partial^2 z}\mathcal{L}(y,z)$$
exist for every $(y,z)\in \mathcal{Y}\times \mathbb{R}$.
\item The maps $$(y,z)\mapsto \mathcal{L}'(y,z), \quad (y,z)\mapsto\mathcal{L}''(y,z)$$
are continuous.
\item There is a $b\in L^2(\pi_Y)$, and for every $a\in (0,+\infty)$, there is a $b'_a\in L^2(\pi_Y)$ and a constant $b''_a\in (0,+\infty)$ such that, for every $y \in \mathcal{Y}$,
$$|\mathcal{L}(y,z)|\le b(y)+|z|^p \ \ \forall z, \quad \sup_{z\in[-a,a]}|\mathcal{L}'(y,z)|\le b'_a(y), \quad \sup_{z\in[-a,a]}|\mathcal{L}''(y,z)|\le b''_a$$
where $p\ge 1$ is a constant.
\end{itemize}
\end{assumption}

Note that the conditions on the loss function $\mathcal{L}$ in Assumption \ref{A2} are satisfied, e.g., for the
logistic loss for classification or least-square loss for regression with $\pi$ such that $\mathbb{E}_{\pi_Y}[Y^4]<\infty$. Therefore, Assumption \ref{A2} is reasonable for the kernel ridge regression problem we consider in this work.

\begin{theorem}[Theorem 3.1 in \citep{hable2012asymptotic}]\label{thm:asymptoticnol}
Suppose that Assumptions \ref{A1} and \ref{A2} hold. Then there is a tight, Borel-measurable Gaussian process
$\mathbb{G}: \Omega \to \mathcal{H}, \omega \mapsto \mathbb{G}(\omega)$
such that
$$\sqrt{n}(g_{\pi_n,\lambda_n}-g_{\pi,\lambda_0}) \Rightarrow \mathbb{G} \text{ in } \mathcal{H}$$
where $\Rightarrow$ represents "converges weakly".
The Gaussian process $\mathbb{G}$ is zero-mean; i.e., $\mathbb{E}[\langle g,\mathbb{G}\rangle_{\mathcal{H}}] = 0$ for every $g \in \mathcal{H}$.
\end{theorem}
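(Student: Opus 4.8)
The plan is to recognize $g_{P,\lambda}$ as the image of a statistical functional $S$ evaluated at a probability measure $P$, and then combine Hadamard differentiability of $S$ with the functional delta method and an empirical-process central limit theorem. Concretely, write $S(P) := \argmin{g \in \mathcal{H}} \mathbb{E}_P[\mathcal{L}(Y, g(X))] + \lambda_P \|g\|^2_{\mathcal{H}}$, so that $g_{\pi_n, \lambda_n} = S(\pi_n)$ and $g_{\pi, \lambda_0} = S(\pi)$. By the convexity in Assumption \ref{A2} and strict convexity coming from $\lambda_0 > 0$, the minimizer is unique and characterized by the stationarity condition $2\lambda_P\, S(P) + \mathbb{E}_P[\mathcal{L}'(Y, S(P)(X))\, k_X] = 0$ in $\mathcal{H}$, where $k_X = k(X, \cdot)$ is the canonical feature map, which is uniformly bounded and suitably smooth because the kernel is the restriction of an $m$-times continuously differentiable kernel with $m > d/2$ in Assumption \ref{A1}.

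Given this implicit characterization, I would obtain the derivative of $S$ via an implicit-function-theorem argument in the Banach-space setting. Define $G(P, g) := 2\lambda_P\, g + \mathbb{E}_P[\mathcal{L}'(Y, g(X))\, k_X]$, so that $G(P, S(P)) = 0$. The partial derivative of $G$ in $g$ at $(\pi, g_{\pi,\lambda_0})$ is the bounded linear operator $T := 2\lambda_0 I + \mathbb{E}_\pi[\mathcal{L}''(Y, g_{\pi,\lambda_0}(X)) \langle \cdot, k_X\rangle_{\mathcal{H}}\, k_X]$ on $\mathcal{H}$, where I have used the reproducing property $\eta(X) = \langle \eta, k_X\rangle_{\mathcal{H}}$ to linearize. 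Convexity forces $\mathcal{L}'' \ge 0$, so the second summand is a positive operator and $T - 2\lambda_0 I$ is positive semidefinite; hence $T$ dominates $2\lambda_0 I$ and is invertible with bounded inverse. The implicit function theorem then yields Hadamard differentiability of $S$ at $\pi$ with influence function $IF(z; S, \pi) = -T^{-1}\big(\mathcal{L}'(y, g_{\pi,\lambda_0}(x))\, k_x - \mathbb{E}_\pi[\mathcal{L}'(Y, g_{\pi,\lambda_0}(X))\, k_X]\big)$ for $z = (x,y)$, which is manifestly $\pi$-centered. The rate condition $\lambda_n - \lambda_0 = o(1/\sqrt{n})$ in Assumption \ref{A1} ensures the perturbation of the regularization parameter contributes nothing to the first-order $\sqrt{n}$-limit.

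With differentiability in hand, I would invoke the empirical-process central limit theorem: the function classes appearing in $G$ are $\pi$-Donsker under the envelope and moment controls of Assumption \ref{A2} (the $L^2(\pi_Y)$ dominating functions $b, b'_a$ and the uniform bound $b''_a$) together with the kernel smoothness, so $\sqrt{n}(\pi_n - \pi)$ converges weakly to a tight, zero-mean Gaussian limit $\mathbb{W}$ in the relevant dual space. The functional delta method (van der Vaart--Wellner), applicable because $S$ is Hadamard differentiable, then gives $\sqrt{n}(S(\pi_n) - S(\pi)) \Rightarrow \mathbb{G} := S'_\pi(\mathbb{W})$ in $\mathcal{H}$. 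Since $S'_\pi$ is a bounded linear operator and $\mathbb{W}$ is a tight zero-mean Gaussian, $\mathbb{G}$ is a tight, Borel-measurable, zero-mean Gaussian process; the identity $\mathbb{E}[\langle g, \mathbb{G}\rangle_{\mathcal{H}}] = 0$ for all $g \in \mathcal{H}$ follows from $\int IF(z; S, \pi)\,d\pi(z) = 0$.

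The main obstacle is the rigorous verification of Hadamard differentiability, that is, showing that the remainder $G(Q, g) - G(\pi, g_{\pi,\lambda_0}) - T(g - g_{\pi,\lambda_0}) - (\text{linear-in-}(Q-\pi)\text{ term})$ is of higher order uniformly over Hadamard-admissible perturbations $Q$ approaching $\pi$. This hinges on the continuity and local boundedness of $\mathcal{L}'$ and $\mathcal{L}''$ from Assumption \ref{A2}, used to linearize $\mathcal{L}'(y, g(x))$ in $g$ with a controllable second-order error, and on the uniform boundedness of $\|k_x\|_{\mathcal{H}}$ from the kernel smoothness in Assumption \ref{A1}, used to convert pointwise loss bounds into $\mathcal{H}$-norm bounds. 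Once the remainder is shown to be $o_P(1/\sqrt{n})$ along the empirical path, the delta method closes the argument, and the remainder is the standard machinery of weak convergence in separable Hilbert spaces.
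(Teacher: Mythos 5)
Your proposal is correct and follows essentially the same route as the proof this paper relies on: the statement is quoted verbatim from Theorem 3.1 of \citep{hable2012asymptotic}, whose argument is exactly your plan --- characterize $g_{P,\lambda}$ by the stationarity condition, establish Hadamard differentiability via an implicit-function-theorem argument whose derivative operator is your $T$ (the paper's $S_P(f) = 2\lambda_0 f + \mathbb{E}_P[\mathcal{L}''(Y, g_{P,\lambda_0}(X))f(X)\Phi(X)]$, invertible since $\mathcal{L}'' \ge 0$ and $\lambda_0 > 0$), verify a Donsker property for the relevant function class, and close with the functional delta method, yielding the same centered influence function recorded in \eqref{equ:influence}. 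The only cosmetic difference is that Hable runs the empirical-process CLT in $\ell^\infty(\mathcal{G})$ over an explicit function class $\mathcal{G}$ rather than in a ``dual space,'' but this is the same mechanism.
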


Note that Theorem \ref{thm:asymptoticnol} implies that for any $g\in\mathcal{H}$, the random variable
$$\Omega \to \mathbb{R}, \quad \omega \mapsto \langle g,\mathbb{G}(\omega)\rangle_{\mathcal{H}}$$
has a zero-mean normal distribution. In particular, letting $g=k_x$, the reproducing property of $k$ implies that,
$$\sqrt{n}(g_{\pi_n,\lambda_n}(x)-g_{\pi,\lambda_0}(x)) \Rightarrow \mathbb{G}(x)$$
To obtain the variance of this limiting distribution, denoted as $\xi^2(x)$, we review the tool of influence functions as follows.

Let $P$ be a general distribution on the domain $\mathcal{X}\times \mathcal{Y}$. Let $z = (z_x, z_y) \in \mathcal{X}\times \mathcal{Y}$ be a general point. Suppose $P_{\varepsilon,z} = (1 -\varepsilon)P + \varepsilon\delta_z$ where $\delta_z$ denotes the point mass distribution in $z$. Let $T$ be a general statistical functional that maps a distribution $P$ to a real value: $T : P \to T(P)$. Then the influence function of $T$ at $P$ at the point $z$ is defined as
$$IF(z; T, P) = \lim_{\varepsilon\to 0} \frac{
T(P_{\varepsilon,z}) - T(P)}{\varepsilon}= T'_{P}(\delta_{z}-P).$$
Under some mild conditions (e.g., $T$ is $\rho_\infty$-Hadamard differentiable \citet{fernholz2012mises}), the central limit theorem holds: $\sqrt{n}(T(\pi_{n}) - T(\pi))$ is asymptotically normal with mean 0 and variance $\int_{z} IF^2(z; T, \pi)d\pi(z)$
where $\pi_n$ is the empirical distribution associated with $n$ samples i.i.d. drawn from $\pi$.

In particular, let $T(P)$ be the solution to target problem: 
$$\min_{g\in \mathcal{H}} \mathbb{E}_{P}[(Y-g(X))^2] + \lambda_{P} \|g\|^2_\mathcal{H}$$
where when $P=\pi_n$ or $\pi$, $\lambda_{\pi_n}=\lambda_n$ and $\lambda_{\pi}=\lambda_0$. Then the proof of Theorem 3.1 in \citep{hable2012asymptotic} provides the closed-formed expression of the G\^{a}teaux derivative $T'_{P}(Q)$ as follows:
$$T'_{P}(Q)= -S^{-1}_P(\mathbb{E}_{Q} [\mathcal{L}'(Y, g_{P,\lambda_0}(X))\Phi(X)])
$$
for $Q \in \text{lin}(B_S)$ where $\text{lin}(B_S)$  corresponds to a subset of finite measures on $(\mathcal{X} \times \mathcal{Y}, \mathcal{B}(\mathcal{X} \times \mathcal{Y}))$ (See Proposition A.8. in \citet{hable2012asymptotic})
and
$S_P : \mathcal{H} \to \mathcal{H}$ is defined by $$S_P(f) = 2\lambda_0 f + \mathbb{E}_{P} [\mathcal{L}''(Y, g_{P,\lambda_0}(X))f(X)  \Phi(X)]$$
where $\Phi$ is the feature map of $\mathcal{H}$ (e.g., we can simply take $\Phi(x)(\cdot)=k(x,\cdot)$). Note that $S_P$
is a continuous linear operator which is invertible (See Proposition A.5. in \citet{hable2012asymptotic}). In particular, letting $Q=\delta_{z}-P$, we obtain that
\begin{equation}\label{equ:influence}
T'_{P}(\delta_{z}-P)= S^{-1}_P(\mathbb{E}_{P} [\mathcal{L}'(Y, g_{P,\lambda_0}(X))\Phi(X)])-\mathcal{L}'(
z_y, g_{P,\lambda_0}(z_x))S^{-1}_P\Phi(X)    
\end{equation}
Note that the above special G\^{a}teaux derivative (influence function) of $T(P)$, $T'_{P}(\delta_{z}-P)$, was initially derived by \citep{christmann2007consistency}. 

The proof of Theorem 3.1 in \citep{hable2012asymptotic} shows that $T(P)$ is Hadamard differentiable, and thus the asymptotic normality follows. Using the above expression of the influence function, we conclude that:
$$\sqrt{n}(g_{\pi_n,\lambda_n}(x)-g_{\pi,\lambda_0}(x)) \Rightarrow \mathcal{N}(0,\xi^2(x))$$
where 
$$\xi^2(x)=\int_{z\in \mathcal{X}\times\mathcal{Y}} IF^2(z; T, \pi)(x)d\pi(z)$$
and $IF(z; T, \pi)$ is given by \eqref{equ:influence}:
$$IF(z; T, \pi)=T'_{\pi}(\delta_{z}-\pi)= S^{-1}_{\pi}(\mathbb{E}_{\pi} [\mathcal{L}'(Y, g_{\pi,\lambda_0}(X))\Phi(X)])-\mathcal{L}'(
z_y, g_{\pi,\lambda_0}(z_x))S^{-1}_{\pi}\Phi(X).   $$
Summarizing the above discussion, we have Proposition \ref{thm:asymptoticnor0}.





\subsection{Infinitesimal Jackknife for Kernel-Based Regression} \label{sec:IJKRR}

In this section, we follow up on our discussion in Section \ref{sec:confidence} on the infinitesimal jackknife variance estimation. We derive the closed-formed expression of the infinitesimal jackknife variance estimation for $\xi^2(x)$ in Section \ref{sec:asymptoticKBR} in the kernel ridge regression. We also show the consistency of infinitesimal jackknife variance estimation in general kernel-based regression.
Our consistency result appears new in the literature.


First, we estimate $\xi^2(x_0)$ as follows:
$$\hat{\xi}^2(x_0)= \frac{1}{n} \sum_{z_i\in \mathcal{D}} IF^2(z_i; T, \pi_{n})(x_0)$$
where $\mathcal{D}=\{(x_1,y_1),(x_2,y_2),...,(x_n,y_n)\}$ and $\pi_n$ is the empirical distribution associated with the data $\mathcal{D}$. This estimator is known as the infinitesimal jackknife variance estimator \citep{jaeckel1972infinitesimal}.
In this section, we use $x_0$ instead of $x$ in the variance or influence function at a test point $x_0$ to avoid confusion between $x_0$ and $z_x$. The latter $z_x$ is referred to the $x$-componenet of $z$.


Based on \citep{christmann2007consistency,debruyne2008model,hable2012asymptotic}, we derive the closed-form expression of the variance estimation $\hat{\xi}^2(x_0)$ as follows.

\begin{theorem} [Expression of infinitesimal jackknife for kernel ridge regression] \label{thm:IJKRR1} Suppose that Assumptions \ref{A1} and \ref{A2} hold. For the least squares loss $\mathcal{L}(h(X),Y)=(h(X)-Y)^2$ in kernel ridge regression, $IF(z; T, \pi_{n})$ is given by  
\begin{equation} \label{equ:IJgeneral}
IF(z; T, \pi_{n})(x_0)= \bm{k}(x_0, \bm{x})^T (\bm{k}(\bm{x},\bm{x})+\lambda_0 n \bm{I})^{-1} M_z(\bm{x})-M_z(x_0)
\end{equation}
where $M_z(\bm{x}):=(M_z(x_1),...,M_z(x_n))^T$ and
\begin{equation*}
M_z(x_i):=g_{\pi_n,\lambda_0}(x_i)-\frac{1}{\lambda_0}(z_{y}-g_{\pi_n,\lambda_0}(z_x)) k(z_x,x_i)  
\end{equation*} 
for $x_i=x_0,x_1,\cdots,x_n$.
\end{theorem}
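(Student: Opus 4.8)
The plan is to specialize the general influence-function formula \eqref{equ:influence} to the squared loss and then make the two operator-inverse terms fully explicit. For $\mathcal{L}(y,z)=(z-y)^2$ we have $\mathcal{L}'(y,z)=2(z-y)$ and $\mathcal{L}''(y,z)=2$, so with $\Phi(x)=k(x,\cdot)$ the operator $S_{\pi_n}$ becomes $S_{\pi_n}(f)=2\lambda_0 f+\frac{2}{n}\sum_{i=1}^n f(x_i)\,k(x_i,\cdot)$, and \eqref{equ:influence} reads $IF(z;T,\pi_n)=S_{\pi_n}^{-1}\big(\mathbb{E}_{\pi_n}[\mathcal{L}'(Y,g_{\pi_n,\lambda_0}(X))\Phi(X)]\big)-2\big(g_{\pi_n,\lambda_0}(z_x)-z_y\big)S_{\pi_n}^{-1}\Phi(z_x)$. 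Thus the whole proof reduces to evaluating $S_{\pi_n}^{-1}$ on two specific functions and recognizing the resulting combination.

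First I would simplify the leading term using the stationarity of $g_{\pi_n,\lambda_0}$. The first-order optimality condition for the regularized empirical risk $\frac{1}{n}\sum_i (g(x_i)-y_i)^2+\lambda_0\|g\|_{\mathcal{H}}^2$ gives $\frac{2}{n}\sum_i (g_{\pi_n,\lambda_0}(x_i)-y_i)\Phi(x_i)+2\lambda_0 g_{\pi_n,\lambda_0}=0$, that is $\mathbb{E}_{\pi_n}[\mathcal{L}'(Y,g_{\pi_n,\lambda_0}(X))\Phi(X)]=-2\lambda_0\,g_{\pi_n,\lambda_0}$. Hence the leading term equals $-2\lambda_0\,S_{\pi_n}^{-1}(g_{\pi_n,\lambda_0})$, and the entire influence function becomes a linear combination of $S_{\pi_n}^{-1}$ applied to $g_{\pi_n,\lambda_0}$ and to $k(z_x,\cdot)$.

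The key computational step is inverting $S_{\pi_n}$, which I would handle by exploiting its finite-rank-plus-scalar structure: $S_{\pi_n}(f)-2\lambda_0 f$ depends on $f$ only through the vector $\bm{f}=(f(x_1),\dots,f(x_n))^T$. To solve $S_{\pi_n}(f)=w$ I would first evaluate at the design points, obtaining the finite linear system $(2\lambda_0\bm{I}+\frac{2}{n}\bm{k}(\bm{x},\bm{x}))\bm{f}=\bm{w}$ with $\bm{w}=(w(x_1),\dots,w(x_n))^T$, whose solution is $\bm{f}=\frac{n}{2}(\bm{k}(\bm{x},\bm{x})+\lambda_0 n\bm{I})^{-1}\bm{w}$. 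Substituting this back into $f(x_0)=\frac{1}{2\lambda_0}\big(w(x_0)-\frac{2}{n}\bm{k}(x_0,\bm{x})^T\bm{f}\big)$ yields the closed form $[S_{\pi_n}^{-1}w](x_0)=\frac{1}{2\lambda_0}\big(w(x_0)-\bm{k}(x_0,\bm{x})^T(\bm{k}(\bm{x},\bm{x})+\lambda_0 n\bm{I})^{-1}\bm{w}\big)$. This is the heart of the argument and the step I expect to be most delicate, since one must correctly separate the values of $f$ on the training points (where the Gram-matrix inversion lives) from its value at the external test point $x_0$.

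Finally I would apply this inversion formula with $w=g_{\pi_n,\lambda_0}$ and with $w=k(z_x,\cdot)$, noting that the design-point vector of the latter is $\bm{k}(z_x,\bm{x})$. Writing $r:=z_y-g_{\pi_n,\lambda_0}(z_x)$ and substituting both pieces, the scalar prefactors $1/(2\lambda_0)$ combine with the constants $-2\lambda_0$ and $2$ so as to produce exactly the function $M_z=g_{\pi_n,\lambda_0}-\frac{1}{\lambda_0}\,r\,k(z_x,\cdot)$ both inside the Gram-matrix term (giving $M_z(\bm{x})$) and in the pointwise term (giving $M_z(x_0)$). Collecting terms then delivers $IF(z;T,\pi_n)(x_0)=\bm{k}(x_0,\bm{x})^T(\bm{k}(\bm{x},\bm{x})+\lambda_0 n\bm{I})^{-1}M_z(\bm{x})-M_z(x_0)$, as claimed; the remaining work is purely bookkeeping of signs and constants.
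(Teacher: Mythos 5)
Your proof is correct and follows essentially the same route as the paper: both specialize the influence-function formula to the squared loss and invert $S_{\pi_n}$ by exploiting its scalar-plus-finite-rank structure, first solving the induced $n\times n$ linear system on the design points and then substituting back to evaluate at $x_0$. The only differences are organizational — you derive the simplification $\mathbb{E}_{\pi_n}[\mathcal{L}'(Y,g_{\pi_n,\lambda_0}(X))\Phi(X)]=-2\lambda_0\,g_{\pi_n,\lambda_0}$ explicitly from the first-order optimality condition (the paper imports it directly via the cited formula of Christmann et al.), and you package the inversion as one generic formula $[S_{\pi_n}^{-1}w](x_0)$ applied twice, where the paper carries out the two inversions $\tilde{g}_1$ and $\tilde{g}_2$ separately.
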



Note that although Hadamard differentiability is able to guarantee the central limit theorem of $T$, the consistency of variance estimation generally requires more than Hadamard differentiability. In general, we need some additional continuity condition (such as continuously G\^{a}teaux  differentiability) to guarantee that $IF^2(z; T, \pi_n)$ is indeed "close" to $IF^2(z; T, \pi)$ and $\frac{1}{n} \sum_{z_i\in \mathcal{D}} IF^2(z_i; T, \pi_{n})(x_0)$ is indeed "close" to $\int_{z\in \mathcal{X}\times\mathcal{Y}} IF^2(z; T, \pi)(x_0)d\pi(z)$.
We show that this is achievable for the infinitesimal jackknife variance estimator for kernel ridge regression by only imposing a weak assumption.


\begin{theorem} [Consistency of infinitesimal jackknife for kernel-based regression]\label{thm:IJKRR2}
Suppose that Assumptions \ref{A1} and \ref{A2} hold. Moreover, assume that $\mathcal{Y}\subset \mathbb{R}$ is bounded, and $b$ and $b'_{a}$ in Assumption \ref{A2} are bounded on $\mathcal{Y}$.
Then we have
$$\hat{\xi}^2(x_0)=\frac{1}{n} \sum_{z_i\in \mathcal{D}} IF^2(z_i; T, \pi_{n})(x_0)\to  \int_{z\in \mathcal{X}\times\mathcal{Y}} IF^2(z; T, \pi)(x)d\pi(z)=\xi^2(x_0), \quad a.s.$$
as $n\to \infty$. Hence, an asymptotically exact $(1-\alpha)$-level confidence interval of $g_{\pi,\lambda_0}(x_0)$ is
$$\left[g_{\pi_n,\lambda_n}(x_0)-\frac{\hat{\xi}(x_0)}{n}q_{1-\frac{\alpha}{2}},g_{\pi_n,\lambda_n}(x_0)+\frac{\hat{\xi}(x_0)}{n}q_{1-\frac{\alpha}{2}}\right]$$
where $q_\alpha$ is the $\alpha$-quantile of the standard Gaussian distribution $\mathcal{N}(0,1)$.
\end{theorem}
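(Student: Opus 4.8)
The plan is to prove the almost-sure convergence $\hat\xi^2(x_0)\to\xi^2(x_0)$ by a double-approximation argument and then obtain the interval by Slutsky's theorem. Writing $IF_n(z):=IF(z;T,\pi_n)(x_0)$ and $IF_\infty(z):=IF(z;T,\pi)(x_0)$, I would decompose
\begin{equation*}
\hat\xi^2(x_0)-\xi^2(x_0)=\underbrace{\frac{1}{n}\sum_{i=1}^n\bigl(IF_n^2(z_i)-IF_\infty^2(z_i)\bigr)}_{(A)}+\underbrace{\Bigl(\frac{1}{n}\sum_{i=1}^n IF_\infty^2(z_i)-\int IF_\infty^2\,d\pi\Bigr)}_{(B)}.
\end{equation*}
Term $(B)$ is a plain empirical average of the i.i.d.\ quantities $IF_\infty^2(z_i)$ with common mean $\xi^2(x_0)$, so it vanishes a.s.\ by the strong law of large numbers once I verify $\xi^2(x_0)<\infty$. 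This is immediate from the closed form \eqref{equ:influence}: under the added hypotheses ($\mathcal Y$ bounded; $b,b'_a$ bounded; $k$ bounded on the compact $\mathcal X$), the factor $\mathcal L'(z_y,g_{\pi,\lambda_0}(z_x))$ and the feature evaluation $[S_\pi^{-1}\Phi(z_x)](x_0)=\langle S_\pi^{-1}\Phi(z_x),\Phi(x_0)\rangle$ are uniformly bounded in $z$, so $IF_\infty$ is bounded and hence square-integrable.

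The heart of the argument is term $(A)$. I would factor $IF_n^2-IF_\infty^2=(IF_n-IF_\infty)(IF_n+IF_\infty)$ and bound
\begin{equation*}
|(A)|\le\Bigl(\sup_{z}|IF_n(z)-IF_\infty(z)|\Bigr)\cdot\frac{1}{n}\sum_{i=1}^n|IF_n(z_i)+IF_\infty(z_i)|.
\end{equation*}
Since both influence functions will be shown uniformly bounded in $z$ (eventually in $n$, a.s.), the second factor stays bounded, and it suffices to prove $\sup_z|IF_n(z)-IF_\infty(z)|\to0$ a.s. Using \eqref{equ:influence}, the $z$-dependence of $IF_n$ sits in the two factors $\mathcal L'(z_y,g_{\pi_n,\lambda_0}(z_x))$ and $[S_{\pi_n}^{-1}\Phi(z_x)](x_0)$, plus a $z$-independent term $[S_{\pi_n}^{-1}\mathbb E_{\pi_n}[\mathcal L'(Y,g_{\pi_n,\lambda_0}(X))\Phi(X)]](x_0)$. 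Adding and subtracting, the uniform error splits into (i) $\sup_z|\mathcal L'(z_y,g_{\pi_n,\lambda_0}(z_x))-\mathcal L'(z_y,g_{\pi,\lambda_0}(z_x))|$, controlled by the continuity of $\mathcal L'$ together with $\|g_{\pi_n,\lambda_0}-g_{\pi,\lambda_0}\|_\infty\to0$; and (ii) $\sup_z|[(S_{\pi_n}^{-1}-S_\pi^{-1})\Phi(z_x)](x_0)|\le\|S_{\pi_n}^{-1}-S_\pi^{-1}\|_{\mathrm{op}}\,\sup_z\|\Phi(z_x)\|\,\|\Phi(x_0)\|$, controlled by operator-norm convergence and the bounded kernel.

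Thus the two technical inputs I must establish are the a.s.\ convergences $g_{\pi_n,\lambda_0}\to g_{\pi,\lambda_0}$ in $\mathcal H$ (which yields $\|g_{\pi_n,\lambda_0}-g_{\pi,\lambda_0}\|_\infty\to0$ since $\|f\|_\infty\le\kappa\|f\|_{\mathcal H}$ with $\kappa=\sup_x\sqrt{k(x,x)}<\infty$) and $S_{\pi_n}^{-1}\to S_\pi^{-1}$ in operator norm. The first is the standard consistency of regularized kernel risk minimization under Assumptions \ref{A1}--\ref{A2}, obtainable via the same Hadamard/G\^ateaux machinery invoked for Proposition \ref{thm:asymptoticnor0}. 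For the second I would first show $S_{\pi_n}\to S_\pi$ in operator norm: recalling $S_P f=2\lambda_0 f+\mathbb E_P[\mathcal L''(Y,g_{P,\lambda_0}(X))f(X)\Phi(X)]$, the difference $(S_{\pi_n}-S_\pi)f$ is controlled by a uniform law of large numbers for the Hilbert-space-valued averages $\frac1n\sum_i\mathcal L''(y_i,g(x_i))f(x_i)\Phi(x_i)$ together with the already-established convergence of $g_{\pi_n,\lambda_0}$ entering $\mathcal L''$; boundedness of $b''_a$, $\mathcal Y$, and $k$ supplies the needed domination. Since $S_\pi$ is boundedly invertible (Proposition A.5 in \citep{hable2012asymptotic}), operator-norm convergence transfers to the inverse via the Neumann-series perturbation bound $\|S_{\pi_n}^{-1}-S_\pi^{-1}\|_{\mathrm{op}}\le\|S_\pi^{-1}\|_{\mathrm{op}}^2\|S_{\pi_n}-S_\pi\|_{\mathrm{op}}/(1-\|S_\pi^{-1}\|_{\mathrm{op}}\|S_{\pi_n}-S_\pi\|_{\mathrm{op}})$, valid for large $n$. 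I expect \textbf{this operator-inverse convergence, made uniform over $z$, to be the main obstacle}, since it demands a genuine uniform LLN in the RKHS rather than a pointwise one, and it is precisely where the extra boundedness hypotheses are consumed. Finally, with $\hat\xi^2(x_0)\to\xi^2(x_0)$ a.s., the interval follows from $\sqrt n\,(g_{\pi_n,\lambda_n}(x_0)-g_{\pi,\lambda_0}(x_0))\Rightarrow\mathcal N(0,\xi^2(x_0))$ in Proposition \ref{thm:asymptoticnor0}: dividing by $\hat\xi(x_0)$ and applying Slutsky's theorem produces an asymptotically standard-normal pivot, which rearranges into the stated confidence interval for $g_{\pi,\lambda_0}(x_0)$.
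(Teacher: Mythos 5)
Your proposal is correct, and it follows the same overall architecture as the paper's proof: both arguments reduce everything to the uniform-in-$z$ almost-sure convergence $\sup_{z}|IF(z;T,\pi_n)(x_0)-IF(z;T,\pi)(x_0)|\to 0$, combine it with the strong law of large numbers to get $\hat{\xi}^2(x_0)\to\xi^2(x_0)$ a.s., and finish with Slutsky's theorem applied to the asymptotic normality of Proposition \ref{thm:asymptoticnor0}. Where you genuinely differ is in how that uniform convergence is established. The paper works one level more abstractly: it proves convergence in the $\mathcal{H}$-norm of $T'_{\pi_n}(\delta_z-\pi_n)=-S_{\pi_n}^{-1}(W_{\pi_n}(\delta_z-\pi_n))$ to $T'_{\pi}(\delta_z-\pi)$, importing the hard ingredients --- $\|S_{\pi_n}^{-1}-S_{\pi}^{-1}\|\to 0$, $\|W_{\pi}(\pi)-W_{\pi_n}(\pi)\|_{\mathcal{H}}\to 0$, and $\|\pi-\pi_n\|_{\infty}\to 0$ over Hable's function class $\mathcal{G}$ --- directly from the proof of Lemma A.7 in \citep{hable2012asymptotic}; the only genuinely new work there is the bound $\sup_{z}\|\delta_z-\pi\|_{\infty}\le 2\sup_{g\in\mathcal{G}}\|g\|_{\infty}<\infty$, after which the pointwise statement follows by the reproducing property. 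You instead re-derive the operator-level facts from scratch out of the explicit formula \eqref{equ:influence}: a Hilbert-space SLLN giving $S_{\pi_n}\to S_{\pi}$ in operator norm, the Neumann-series perturbation bound for the inverses, and consistency of $g_{\pi_n,\lambda_0}$ to control the $\mathcal{L}'$ factors. This is viable --- for the least-squares loss it is particularly clean, since $\mathcal{L}''\equiv 2$ and $S_{\pi_n}\to S_{\pi}$ reduces to a.s.\ operator-norm convergence of empirical covariance operators for a bounded kernel --- and your $(A)$/$(B)$ decomposition is in fact slightly tidier than the paper's final step, which invokes the SLLN for a quantity that itself depends on $n$. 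What the paper's route buys is brevity, by recycling Hable's empirical-process work; what yours buys is self-containedness and an explicit accounting of exactly which convergences are needed.

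Two small corrections to your narrative, neither fatal. First, the added boundedness hypotheses are not consumed by the operator-inverse convergence: in Hable's framework $\|S_{\pi_n}^{-1}-S_{\pi}^{-1}\|\to 0$ holds under Assumptions \ref{A1}--\ref{A2} alone (note $b''_a$ is already a constant there). The extra hypotheses on $\mathcal{Y}$, $b$, and $b'_a$ are needed to make the $\mathcal{L}'$-factors, and hence the influence functions, uniformly bounded --- i.e., in your term $(B)$, in the factor $\mathcal{L}'(z_y,g_{\pi,\lambda_0}(z_x))$, and in the second factor of your bound for $(A)$; this is precisely the role they play in the paper's bound on $\sup_{z}\|\delta_z-\pi\|_{\infty}$. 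Second, almost-sure consistency of $g_{\pi_n,\lambda_0}$ does not follow from the Hadamard/delta-method machinery behind Proposition \ref{thm:asymptoticnor0}, which yields only an $O_p(n^{-1/2})$ rate and hence convergence in probability; you should instead invoke an a.s.\ consistency result for regularized kernel methods, e.g.\ \citep{christmann2007consistency}, as an independent ingredient.
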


The proof of Theorems \ref{thm:IJKRR1} and \ref{thm:IJKRR2} is given in Appendix \ref{sec: proofs}.

\subsection{PNC-Enhanced Infinitesimal Jackknife}

In this section, we return to our problem setting in Section \ref{sec:confidence}. We apply the general results in Section \ref{sec:IJKRR} to our PNC predictor and develop the PNC-enhanced infinitesimal jackknife confidence interval for over-parameterized neural networks.

Recall that the statistical functional $T_1$ is associated with the following problem:
$$
\hat{h}_{n,\theta^{b}}(\cdot)-\hat{\phi}_{n,\theta^{b}}(\cdot)-\bar{s}(\cdot)=\min_{g\in \bar{\mathcal{H}}} \frac{1}{n}\sum_{i=1}^n[(y_i-\bar{s}(x_i)-g(x_i))^2] + \lambda_n \|g\|^2_\mathcal{\bar{H}}
$$

Consider the PNC-enhanced infinitesimal jackknife variance estimator:
$$\hat{\sigma}^2(x_0)=\frac{1}{n} \sum_{z_i\in \mathcal{D}} IF^2(z_i; T_1, \pi_n)(x_0).$$
We obtain that
\begin{theorem} [Expression of PNC-enhanced infinitesimal jackknife] \label{thm:IJPNC1}
Suppose that Assumption \ref{prop1:assm} holds. Suppose that Assumptions \ref{A1} and \ref{A2} hold when $Y$ is replaced by $Y-\bar{s}(X)$. Then $IF(z; T_1, \pi_{n})$ is given by  
\begin{equation} \label{equ:IJPNC}
IF(z; T_1, \pi_{n})(x_0)= \bm{K}(x_0, \bm{x})^T (\bm{K}(\bm{x},\bm{x})+\lambda_0 n \bm{I})^{-1} M_z(\bm{x})-M_z(x_0)
\end{equation}
where $M_z(\bm{x}):=(M_z(x_1),...,M_z(x_n))^T$ and
\begin{equation*}
M_z(x_i):=\hat{h}_{n,\theta^{b}}(x_i)-\hat{\phi}_{n,\theta^{b}}(x_i)-\bar{s}(x_i)-\frac{1}{\lambda_0}(z_{y}-(\hat{h}_{n,\theta^{b}}(z_x)-\hat{\phi}_{n,\theta^{b}}(z_x))) K(z_x,x_i)  
\end{equation*} 
for $x_i=x_0,x_1,\cdots,x_n$. Hence
\begin{align*}
\hat{\sigma}^2(x_0)&=\frac{1}{n} \sum_{z_i\in \mathcal{D}} IF^2(z_i; T_1, \pi_n)(x_0) \\
&=\frac{1}{n}
\sum_{z_i\in \mathcal{D}}  (\bm{K}(x_0, \bm{x})^T (\bm{K}(\bm{x},\bm{x})+\lambda_0 n \bm{I})^{-1} M_{z_i}(\bm{x})-M_{z_i}(x_0))^2
\end{align*}

\end{theorem}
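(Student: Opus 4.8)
The plan is to obtain Theorem \ref{thm:IJPNC1} as a direct corollary of Theorem \ref{thm:IJKRR1}, by recognizing the functional $T_1$ as an instance of the generic kernel-ridge-regression functional applied to a deterministically shifted response. First I would observe that, by Proposition \ref{linearizednetwork} together with \eqref{equ:secondkrrempirical}, the functional $T_1(P)$ is precisely the solution of a kernel ridge regression with RKHS $\bar{\mathcal{H}}$, NTK kernel $K$, and response variable $Y-\bar{s}(X)$ in place of $Y$. Because $\bar{s}$ is a fixed, data-independent function, the map $(X,Y)\mapsto(X,Y-\bar{s}(X))$ is a deterministic reparametrization of the data, so the hypotheses ``Assumptions \ref{A1} and \ref{A2} hold when $Y$ is replaced by $Y-\bar{s}(X)$'' are exactly what is needed to invoke Theorem \ref{thm:IJKRR1} for this shifted problem.

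Next I would apply Theorem \ref{thm:IJKRR1} verbatim to the shifted regression, performing the substitutions $k\mapsto K$, $\bm{k}\mapsto\bm{K}$, and $g_{\pi_n,\lambda_0}\mapsto g$, where $g$ denotes the empirical shifted-KRR solution. By Theorem \ref{thm:PNC} and \eqref{equ:secondkrrempirical}, this solution is identified as $g=\hat{h}_{n,\theta^{b}}-\hat{\phi}_{n,\theta^{b}}-\bar{s}$, using the same $\lambda_0$-versus-$\lambda_n$ convention already adopted in the base theorem. The expression \eqref{equ:IJPNC} for $IF(z;T_1,\pi_n)(x_0)$ then follows immediately, since the linear-algebra structure $\bm{K}(x_0,\bm{x})^T(\bm{K}(\bm{x},\bm{x})+\lambda_0 n\bm{I})^{-1}M_z(\bm{x})-M_z(x_0)$ is inherited unchanged from Theorem \ref{thm:IJKRR1}.

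The one step requiring genuine care is the form of $M_z$. Applying Theorem \ref{thm:IJKRR1} to the shifted problem produces
$$M_z(x_i)=g(x_i)-\frac{1}{\lambda_0}\bigl((z_y-\bar{s}(z_x))-g(z_x)\bigr)K(z_x,x_i),$$
because the effective response at the perturbing point $z=(z_x,z_y)$ is $z_y-\bar{s}(z_x)$. Substituting $g=\hat{h}_{n,\theta^{b}}-\hat{\phi}_{n,\theta^{b}}-\bar{s}$ and simplifying, the two occurrences of $\bar{s}(z_x)$ cancel: the $-\bar{s}(z_x)$ from the shifted label and the $+\bar{s}(z_x)$ arising from $-g(z_x)$ combine so that the residual collapses to $z_y-(\hat{h}_{n,\theta^{b}}(z_x)-\hat{\phi}_{n,\theta^{b}}(z_x))$, yielding exactly the stated $M_z(x_i)$. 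I expect this cancellation to be the main (though routine) obstacle, since it is the only place where the interplay between the shift inside the solution and the shift inside the residual must be tracked precisely. Finally, the closed form for $\hat{\sigma}^2(x_0)$ is obtained by squaring \eqref{equ:IJPNC} and averaging over $z_i\in\mathcal{D}$, directly from the definition of the infinitesimal jackknife estimator.
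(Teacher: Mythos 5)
Your proposal is correct and follows exactly the route the paper takes: the paper's proof consists of the single remark that Theorem \ref{thm:IJPNC1} follows immediately from Theorem \ref{thm:IJKRR1} applied to the shifted response $Y-\bar{s}(X)$, with $T_1(\pi_n)$ identified as $\hat{h}_{n,\theta^{b}}-\hat{\phi}_{n,\theta^{b}}-\bar{s}$ via Theorem \ref{thm:PNC} and \eqref{equ:secondkrrempirical}. Your explicit tracking of the cancellation of $\bar{s}(z_x)$ in the residual is the one substantive verification the paper leaves implicit, and you carry it out correctly.
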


Theorem \ref{thm:IJPNC1} immediately follows from Theorem \ref{thm:IJKRR1}.

\begin{theorem} [Exact coverage of PNC-enhanced infinitesimal jackknife confidence interval] \label{thm:IJPNC2}
Suppose that Assumption \ref{prop1:assm} holds. Suppose that Assumptions \ref{A1} and \ref{A2} hold when $Y$ is replaced by $Y-\bar{s}(X)$. Moreover, assume that $\mathcal{Y}\subset \mathbb{R}$ is bounded.
Then we have
$$\hat{\sigma}^2(x_0)=\frac{1}{n} \sum_{z_i\in \mathcal{D}} IF^2(z_i; T_1, \pi_{n})(x_0)\to  \int_{z\in \mathcal{X}\times\mathcal{Y}} IF^2(z; T_1, \pi)(x)d\pi(z)=\sigma(x_0), \quad a.s.$$
as $n\to \infty$. Hence, an asymptotically exact $(1-\alpha)$-level confidence interval of $h^*(x_0)$ is
$$\left[\hat{h}_{n,\theta^{b}}(x_0)-\hat{\phi}_{n,\theta^{b}}(x_0)-\frac{\hat{\sigma}(x_0)}{n}q_{1-\frac{\alpha}{2}},\hat{h}_{n,\theta^{b}}(x_0)-\hat{\phi}_{n,\theta^{b}}(x_0)+\frac{\hat{\sigma}(x_0)}{n}q_{1-\frac{\alpha}{2}}\right]$$
where $q_\alpha$ is the $\alpha$-quantile of the standard Gaussian distribution $\mathcal{N}(0,1)$ and the computation of $\hat{\sigma}(x_0)$ is given by Theorem \ref{thm:IJPNC1}.
\end{theorem}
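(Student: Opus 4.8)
The plan is to recognize Theorem \ref{thm:IJPNC2} as a direct instantiation of the general kernel-ridge-regression consistency result Theorem \ref{thm:IJKRR2}, applied to a suitably shifted response, and then to upgrade the resulting variance consistency to exact coverage through a Slutsky-type argument combined with the asymptotic normality already supplied by Theorem \ref{thm:IF}. The whole argument is therefore a specialization of the machinery developed in Section \ref{sec:IJKRR}, rather than a fresh derivation.

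First I would make the reduction explicit. By Proposition \ref{linearizednetwork} together with Theorem \ref{thm:PNC}, the centered PNC output $\hat{h}_{n,\theta^{b}}(\cdot)-\hat{\phi}_{n,\theta^{b}}(\cdot)-\bar{s}(\cdot)$ is exactly the NTK kernel ridge regressor fit to the shifted labels $\tilde{y}_i:=y_i-\bar{s}(x_i)$; equivalently, the functional $T_1$ coincides with the generic kernel ridge regression functional $T$ of Section \ref{sec:IJKRR} when the response $Y$ is replaced by $\tilde{Y}:=Y-\bar{s}(X)$ and the kernel is taken to be the NTK $K$. Under this identification the influence-function formula \eqref{equ:IJPNC} is precisely \eqref{equ:IJgeneral} (which is why Theorem \ref{thm:IJPNC1} follows at once from Theorem \ref{thm:IJKRR1}), and the PNC-enhanced estimator $\hat{\sigma}^2(x_0)$ is the infinitesimal jackknife estimator $\hat{\xi}^2(x_0)$ for this shifted problem.

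Next I would verify that the hypotheses of Theorem \ref{thm:IJKRR2} transfer to the shifted problem and then invoke it. Assumptions \ref{A1} and \ref{A2} are assumed to hold with $Y$ replaced by $Y-\bar{s}(X)$, so only the extra boundedness requirements of Theorem \ref{thm:IJKRR2} need checking: that the (shifted) response domain is bounded and that $b,b'_a$ are bounded on it. Since $\mathcal{Y}$ is bounded by assumption, $\mathcal{X}$ is compact by Assumption \ref{A1}, and $\bar{s}$ is bounded on $\mathcal{X}$ (indeed $\bar{s}\equiv 0$ in the infinite-width He-initialized limit), the shifted response $\tilde{Y}=Y-\bar{s}(X)$ takes values in a bounded set; for the least-squares loss the associated envelopes $b,b'_a$ are then bounded as well. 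Theorem \ref{thm:IJKRR2} therefore applies verbatim and yields $\hat{\sigma}^2(x_0)=\frac{1}{n}\sum_{z_i\in\mathcal{D}}IF^2(z_i;T_1,\pi_n)(x_0)\to\sigma^2(x_0)$ almost surely.

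Finally I would assemble the confidence interval. Theorem \ref{thm:IF} already gives $\sqrt{n}(\hat{h}_{n,\theta^{b}}(x_0)-\hat{\phi}_{n,\theta^{b}}(x_0)-h^*(x_0))\Rightarrow\mathcal{N}(0,\sigma^2(x_0))$; combining this with the almost-sure, hence in-probability, convergence $\hat{\sigma}(x_0)\to\sigma(x_0)$ via Slutsky's theorem gives the studentized limit $\sqrt{n}(\hat{h}_{n,\theta^{b}}(x_0)-\hat{\phi}_{n,\theta^{b}}(x_0)-h^*(x_0))/\hat{\sigma}(x_0)\Rightarrow\mathcal{N}(0,1)$, from which the stated interval attains asymptotically exact coverage. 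I expect the only genuine obstacle to lie in the middle step, namely the boundedness transfer that certifies replacing $Y$ by $Y-\bar{s}(X)$ preserves the compact-domain and bounded-envelope hypotheses of Theorem \ref{thm:IJKRR2}, together with confirming $\sigma^2(x_0)>0$ so that the Slutsky division is legitimate; everything else is a mechanical specialization of the kernel ridge regression results to the NTK.
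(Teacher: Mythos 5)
Your proposal is correct and follows essentially the same route as the paper: the paper likewise proves Theorem \ref{thm:IJPNC2} by specializing Theorem \ref{thm:IJKRR2} to the shifted response $Y-\bar{s}(X)$ with the NTK, noting that for the least-squares loss one may take $b(y)=y^2$ and $b'_a(y)=a+|y|$, which are bounded on the bounded set $\mathcal{Y}$, so the extra hypotheses of Theorem \ref{thm:IJKRR2} hold. Your final Slutsky step via Theorem \ref{thm:IF} is the same argument the paper carries out inside the proof of Theorem \ref{thm:IJKRR2} (there with Theorem \ref{thm:asymptoticnol}), just stated one level more explicitly.
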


Note that in our problem setting, we can set $b(y)=y^2$ and $b'_{a}=a+|y|$ for kernel ridge regression. Since $\mathcal{Y}$ is bounded, $b$ and $b'_{a}$ are naturally bounded on $\mathcal{Y}$. Therefore, Theorem \ref{thm:IJPNC2} follows from Theorem \ref{thm:IJKRR2}.

\section{Theory of Neural Tangent Kernel} \label{sec: NTK}

In this section, we provide a brief review of the theory of neural tangent kernel (NTK). Then we discuss particularly one result employed by this paper, i.e.,

\textit{The shifted kernel ridge regressor using NTK with a shift from an initial function $s_{\theta^b}$ is exactly the linearized neural network regressor that starts from the initial network $s_{\theta^b}$.}

NTK \citep{jacot2018neural} has attracted a lot of attention since it provides a new perspective on training dynamics, generalization, and expressibility of over-parameterized neural networks. Recent papers show that when training an over-parametrized neural network, the weight matrix at each layer is close to its initialization \citep{li2018learning,du2018gradient}. An over-parameterized neural network can rapidly reduce training error to zero via gradient descent, and thus finds a global minimum despite the objective function being non-convex \citep{du2019gradient,allen2019learning,allen2019convergence,allen2019convergence2,zou2020gradient}. Moreover,
the trained network also exhibits good generalization property \citep{cao2019generalization,cao2020generalization}.
These observations are implicitly described by a notion, NTK, suggested by \citet{jacot2018neural}. This kernel is able to characterize the training behavior of sufficiently wide fully-connected neural networks and build a new connection between neural networks and kernel methods. Another line of work is to extend NTK for fully-connected neural networks to CNTK for convolutional neural networks \citep{arora2019exact,yang2019scaling,li2019enhanced}. NTK is a fruitful and rapidly growing area. 

We focus on the result that is used as the foundation of our epistemic uncertainty quantification: Since the gradient of over-parameterized neural networks is nearly constant and close to its initialization, training networks is very similar to a shifted kernel ridge regression where the feature map of the kernel is given by the gradient of networks \citep{lee2019wide}. 
Multiple previous works \citep{arora2019exact,lee2019wide,he2020bayesian,Hu2020Simple,zhang2020type,bietti2019inductive} have established some kind of equivalences between the neural network regressor and the kernel ridge regressor based on a wide variety of assumptions or settings, such as the full-rank assumption of NTK, or zero-valued initialization, or introducing a small multiplier $\kappa$, or normalized training inputs $\|x_i\|_2=1$. These results are also of different forms. Since a uniform statement is not available, we do not intend to dig into those detailed assumptions or theorems as they are not the focus of this paper (Interested readers may refer to the above papers). 

In the following, we provide some
discussions on how to obtain the equivalence in Proposition \ref{linearizednetwork} adopted in the main paper, based on a less (technically) rigorous 
assumption. This assumption is borrowed from the \textit{linearized neural network} property introduced in \cite{lee2019wide}, where they replace the outputs of the neural network with their first-order Taylor expansion, called the linearized neural network. They show that in the infinite width limit, the outputs of the neural network are the same as the linearized neural network. \citet{lee2019wide} does not add regularization in the loss. Instead, we introduce a regularization $\lambda_n> 0$ in the loss since regularization is common and useful in practice. Moreover, it can guarantee the stable computation of the inversion of the NTK Gram matrix and can be naturally linked to kernel ridge regression. In the following, we will review the linearized neural network assumption (Assumption \ref{ass:linearized}) as well as other network specifications. Based on them, we show that Proposition \ref{prop1} holds, which provides the starting point for our uncertainty quantification framework. We also provide some additional remarks in Section \ref{sec:addremarks}.


\textbf{NTK parameterization.}
We consider a fully-connected neural network with any depth defined formally as follows. Suppose the network consists of $L$ hidden layers. Denote $g^{(0)}(x) = x$ and $d_0 = d$. Let
$$f^{(l)}(x) = W^{(l)} g^{(l-1)}(x), \ g^{(l)}(x) = \sqrt{\frac{c_\sigma}{d_{l}}} \sigma(f^{(l)}(x))$$
where $W^{(l)} \in \mathbb{R}^{d_l \times d_{l-1}}$ is the weight matrix in the $l$-th layer, $\sigma$ is a coordinate-wise activation function, $c_\sigma =\mathbb{E}_{z\sim \mathcal{N}(0,1)} [\sigma^2(z)]^{-1}$, and $l\in [L]$. The output of the neural network is
$$f_\theta(x) := f^{(L+1)}(x) = W^{(L+1)} g^{(L)}(x)$$
where $W^{(L+1)} \in \mathbb{R}^{1 \times d_L}$, and $\theta = (W^{(1)}, ... ,W^{(L+1)})$
represents all the parameters in the network. Note that here we use NTK parametrization with a width-dependent scaling factor, which is thus slightly different from the standard parametrization. Unlike the standard parameterization which only normalizes the
forward dynamics of the network, the NTK parameterization also normalizes its backward dynamics.

\textbf{He Initialization.} The NTK theory depends on the random initialization of the network. Suppose the dimension of network parameters $\theta$ is $p$. We randomly initialize all the weights to be i.i.d. $\mathcal{N}(0, 1)$ random variables. In other words, we set $P_{\theta_b} = \mathcal{N}(0, \bm{I}_p)$ and let $\theta^b$ be an instantiation drawn from $P_{\theta_b}$.
This initialization method is essentially known as the He initialization \citep{he2015delving}. 

\textbf{NTK and Linearized neural network.} With the above NTK parameterization and He initialization, the population NTK expression is defined recursively as follows: \citep{jacot2018neural,arora2019exact}: For $l\in [L]$,
$$\Sigma^{(0)}(x, x') = x^Tx' \in \mathbb{R},$$
$$\Lambda^{(l)}(x, x') = \begin{pmatrix}
\Sigma^{(l-1)}(x, x) & \Sigma^{(l-1)}(x, x') \\
\Sigma^{(l-1)}(x, x')  & \Sigma^{(l-1)}(x', x') 
\end{pmatrix}\in \mathbb{R}^{2\times 2},$$
$$\Sigma^{(l)}(x, x') = c_\sigma \mathbb{E}_{
(u,v)\sim \mathcal{N}(0,\Lambda(l))} [\sigma(u)\sigma(v)] \in \mathbb{R}.$$
We also define a derivative covariance:
$$\Sigma^{(l)}{'}(x, x') = c_\sigma \mathbb{E}_{
(u,v)\sim \mathcal{N}(0,\Lambda(l))} [\sigma'(u)\sigma'(v)] \in \mathbb{R},$$
The final population NTK is defined as
$$K(x, x') =\sum_{l=1}^{L+1}\left(\Sigma^{(l-1)}(x, x') \prod_{s=l}^{L+1}\Sigma^{(s)}{'}(x, x')\right)$$
where $\Sigma^{(L+1)}{'}(x, x')=1$ for convenience. Let $\langle \cdot, \cdot\rangle$ be the standard inner product in $\mathbb{R}^p$.

Let $J(\theta;x):=\nabla_{\theta} f_{\theta}(x) \in \mathbb{R}^{1\times p}$ denote the gradient of the network. The empirical (since the initialization is random) NTK matrix is defined as
$$K_{\theta}(x, x') =\langle J(\theta;x); J(\theta;x')\rangle.$$
In practice, we may use the empirical NTK matrix to numerically compute the population NTK matrix.

One of the most important results in the NTK theory is that the empirical NTK matrix converges to the population NTK matrix as the width of the network increases
\citep{jacot2018neural,yang2019scaling,yang2020tensor,arora2019exact}:
$$K_{\theta}(x, x') =\langle J(\theta;x),J(\theta;x)\rangle\to K(x,x')$$
where $\to$ represents "converge in probability" \cite{arora2019exact} or "almost surely" \citep{yang2019scaling}.

The NTK theory shows that $J(\theta;x)=\nabla_{\theta} f_{\theta}(x)$ is approximately a constant independent of the network parameter $\theta$ (but still depends on $x$) when the network is sufficiently wide.
Therefore, when we treat $J(\theta;x)$ as a constant independent of the network parameter $\theta$, we can obtain a model obtained from
the first-order Taylor expansion of the network around its initial parameters, which is called a linearized neural network \citep{lee2019wide}. Our study on epistemic uncertainty is based on the linearized neural network.

To be rigorous, we adopt the following linearized neural network assumption based on the NTK theory:

\begin{assumption}\label{ass:linearized}[Linearized neural network assumption]
Suppose that $J(\theta;x)=\nabla_{\theta} f_{\theta}(x)\in \mathbb{R}^{1\times p}$ is independent of $\theta$ during network training, which is thus denoted as $J(x)$. The population NTK is then given by
$K(x, x') =\langle J(x),J(x)\rangle=J(x) J(x)^\top$.
\end{assumption}

We introduce the NTK vector/matrix as follows.
For $\bm{x}=(x_1,...,x_j)^\top$,
we let 
$$\bm{K}(\bm{x},\bm{x}):=(K(x_i,x_j))_{i,j=1,...,n}=J(\bm{x}) J(\bm{x})^\top \in \mathbb{R}^{n\times n}$$
be the NTK Gram matrix evaluated
on data $\bm{x}$. 
For an arbitrary point $x \in \mathbb{R}^d$, we let $\bm{K}(x, \bm{x}) \in \mathbb{R}^{n}$ be the kernel value evaluated between the point $x$ and data $\bm{x}$, i.e., $$\bm{K}(x, \bm{x}):= (K(x, x_1), K(x, x_2), ... , K(x, x_n))^T= J(x) J(\bm{x})^\top \in \mathbb{R}^{1\times n}.$$

\textbf{Training dynamics of the linearized network.} Based on Assumption \ref{ass:linearized}, the training dynamics of the linearized network can be derived as follows.

We consider the regression problem with the following regularized empirical loss
\begin{equation} \label{equ:lossnn}
\hat{R}(f_\theta)=\frac{1}{n}\sum_{i=1}^n \mathcal{L}(f_\theta(x_i), y_i)+ \lambda_n \|\theta-\theta^b\|^2_2.    
\end{equation}
where $\theta^b=\theta(0)$ is the initialization of the network.
In the following, we use $f_{\theta(t)}$ to represent the network with parameter $\theta(t)$ that evolves with the time $t$. 
Let $\eta$ be the learning rate. Suppose the network parameter is trained via continuous-time gradient flow. Then the evolution of the parameters $\theta(t)$ and $f_{\theta(t)}$ can be written as
\begin{equation}\label{evolution1}
\frac{d\theta(t)}{dt}= -\eta \left(\frac{1}{n}\nabla_{\theta} f_{\theta(t)}(\bm{x})^\top \nabla_{f_{\theta(t)}(\bm{x})} \mathcal{L}+2\lambda_n(\theta(t)-\theta(0))\right)
\end{equation}
\begin{equation}\label{evolution2}
\frac{df_{\theta(t)}(\bm{x})}{dt}= \nabla_{\theta} f_{\theta(t)}(\bm{x}) \frac{d\theta(t)}{dt}    
\end{equation}
where $\theta(t)$ and $f_{\theta(t)}(x)$ are the network parameters and network output at time $t$.

Under Assumption \ref{ass:linearized}, we have that
$$f_{\theta(t)}(x)=f_{\theta(0)}(x)+ J(x) (\theta(t)-\theta(0)),$$ 
which gives
$$\nabla_{\theta} f_{\theta(t)}(x)=J(x).$$
Suppose the loss function is an MSE loss, i.e., $\mathcal{L}(\tilde{y}, y)=(\tilde{y}- y)^2$.
Then 
$$\nabla_{f_{\theta(t)}(\bm{x})} \mathcal{L}(f_{\theta(t)}(\bm{x}), \bm{y})=2(f_{\theta(t)}(\bm{x})-\bm{y})$$

Therefore, under Assumption \ref{ass:linearized}, \eqref{evolution1} becomes
\begin{align*}
\frac{d\theta(t)}{dt}&=-\eta \left(\frac{1}{n}J(\bm{x})^\top \nabla_{f_{\theta(t)}(\bm{x})} \mathcal{L}+2\lambda_n(\theta(t)-\theta(0))\right)\\
&= -\eta \left(\frac{2}{n} J(\bm{x})^\top \left(f_{\theta(0)}(\bm{x})+ J(\bm{x}) (\theta(t)-\theta(0))-\bm{y}\right)
+2\lambda_n(\theta(t)-\theta(0))\right)\\
&= -2\eta \left(\frac{1}{n} J(\bm{x})^\top \left(f_{\theta(0)}(\bm{x})- J(\bm{x})\theta(0)-\bm{y} \right)-\lambda_n\theta(0)
+\left(\frac{1}{n} J(\bm{x})^\top J(\bm{x})+\lambda_n \bm{I}\right) \theta(t)\right)
\end{align*}
Note that $\frac{1}{n} J(\bm{x})^\top \left(f_{\theta(0)}(\bm{x})- J(\bm{x})\theta(0)-\bm{y} \right)-\lambda_n\theta(0)$ and $\left(\frac{1}{n} J(\bm{x})^\top J(\bm{x})+\lambda_n \bm{I}\right)$ are both independent of $t$. Solving this ordinary differential equation, we obtain
\begin{align*}
\theta(t)=&\theta(0)-\frac{1}{n}J(\bm{x})^\top \left(\frac{1}{n} J(\bm{x})^\top J(\bm{x})+\lambda_n \bm{I}\right)^{-1} \\
&\left(\bm{I}-\exp\left(-2\eta t \left(\frac{1}{n} J(\bm{x})^\top J(\bm{x})+\lambda_n \bm{I}\right)\right)\right) \left(f_{\theta(0)}(\bm{x})-\bm{y}\right)
\end{align*}
Hence the network output at time $t$ becomes
\begin{align*}
&f_{\theta(t)}(x)-f_{\theta(0)}(x)\\
=&J(x) (\theta(t)-\theta(0))\\
=&-\frac{1}{n}J(x)J(\bm{x})^\top \left(\frac{1}{n} J(\bm{x})^\top J(\bm{x})+\lambda_n \bm{I}\right)^{-1}\\
&\left(\bm{I}-\exp\left(-2\eta t \left(\frac{1}{n} J(\bm{x})^\top J(\bm{x})+\lambda_n \bm{I}\right)\right)\right) \left(f_{\theta(0)}(\bm{x})-\bm{y}\right)\\
=&-\frac{1}{n}\bm{K}(x, \bm{x}) \left(\frac{1}{n} \bm{K}(\bm{x}, \bm{x})+\lambda_n \bm{I}\right)^{-1} \left(\bm{I}-\exp\left(-2\eta t \left(\frac{1}{n}\bm{K}(\bm{x}, \bm{x})+\lambda_n \bm{I}\right)\right)\right) \left(f_{\theta(0)}(\bm{x})-\bm{y}\right)
\end{align*}
where the last inequality used the notation $\bm{K}(x, \bm{x})=J(x) J(\bm{x})^\top$ and $\bm{K}(x_0, \bm{x})=J(\bm{x}) J(\bm{x})^\top$.

Therefore, with sufficient time of training ($t\to \infty$), the final trained network is
\begin{align*}
f_{\theta(\infty)}(x) &= \lim_{t\to \infty} f_{\theta(t)}(x)\nonumber\\
&= f_{\theta(0)}(x)-\frac{1}{n}\bm{K}(x, \bm{x}) \left(\frac{1}{n} \bm{K}(\bm{x}, \bm{x})+\lambda_n \bm{I}\right)^{-1} \left(f_{\theta(0)}(\bm{x})-\bm{y}\right) \nonumber\\
&= f_{\theta(0)}(x)+\bm{K}(x, \bm{x}) \left(\bm{K}(\bm{x}, \bm{x})+n\lambda_n \bm{I}\right)^{-1} \left(\bm{y}-f_{\theta(0)}(\bm{x})\right) \\
&=s_{\theta^b}(x)+\bm{K}(x, \bm{x}) \left(\bm{K}(\bm{x}, \bm{x})+n\lambda_n \bm{I}\right)^{-1} \left(\bm{y}-s_{\theta^b}(\bm{x})\right) 
\end{align*}
where in the last inequality, we use $s_{\theta^b}$ to represent the network initialized with the parameter $\theta^b$ as in the main paper. Summarizing the above discussion, we conclude that



\begin{assumption}\label{prop1:assm}
Suppose that the network training is specified as follows:

\begin{enumerate}
\item The network adopts the NTK parametrization and its parameters are randomly initialized using He initialization.
\item The network is sufficiently (infinitely) wide so that the linearized neural network assumption (Assumption \ref{ass:linearized}) holds. 
\item The network is trained using the loss function in \eqref{equ:lossnnmse} via continuous-time gradient flow by feeding the entire training data and using sufficient training time ($t \to \infty$).
\end{enumerate}
    
\end{assumption} 

\begin{proposition} \label{prop1}
Suppose that Assumption \ref{prop1:assm} holds. Then the final trained network is given by
\begin{align}
\hat{h}_{n,\theta^{b}}(x)=s_{\theta^b}(x)+\bm{K}(x, \bm{x}) \left(\bm{K}(\bm{x}, \bm{x})+\lambda_n n \bm{I}\right)^{-1} \left(\bm{y}-s_{\theta^b}(\bm{x})\right).  \label{equ:nnpredictor}
\end{align}  
For the rest part of Proposition \ref{linearizednetwork}, please refer to Lemma \ref{thm:losskr} below.
\end{proposition}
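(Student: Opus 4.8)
The plan is to integrate the gradient-flow dynamics under the linearization hypothesis and then re-express the resulting parameter-space formula in kernel form. First I would invoke Assumption \ref{ass:linearized} to replace the network by its first-order expansion $f_{\theta(t)}(x) = s_{\theta^b}(x) + J(x)(\theta(t) - \theta(0))$, where $s_{\theta^b} = f_{\theta(0)}$ is the initial network and $J(x) = \nabla_{\theta} f_\theta(x)$ is frozen at its initialization. Substituting the MSE gradient $\nabla_{f} \mathcal{L} = 2(f_{\theta(t)}(\bm{x}) - \bm{y})$ into the gradient-flow equation \eqref{evolution1} collapses the evolution of $\theta(t)$ to an inhomogeneous \emph{linear} ODE of the form $\dot\theta(t) = -2\eta(M\theta(t) + b)$, where $M = \frac{1}{n}J(\bm{x})^\top J(\bm{x}) + \lambda_n \bm{I}_p$ is $p\times p$ and $b$ gathers the initialization- and data-dependent constants.

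Next I would solve this linear ODE in closed form. Since $M$ is symmetric with all eigenvalues at least $\lambda_n$ — here the regularizer $\lambda_n>0$ is essential, since it lower-bounds the spectrum of $M$ irrespective of the rank of the Jacobian — the unique equilibrium is $\theta_\infty = -M^{-1}b$ and the solution reads $\theta(t) = \theta_\infty + e^{-2\eta M t}(\theta(0) - \theta_\infty)$. Letting $t\to\infty$, the factor $e^{-2\eta M t}$ vanishes because every eigenvalue of $M$ is bounded below by $\lambda_n>0$, so $\theta(\infty) - \theta(0) = -M^{-1}(b + M\theta(0))$. A short computation gives $b + M\theta(0) = \frac{1}{n}J(\bm{x})^\top(s_{\theta^b}(\bm{x}) - \bm{y})$, whence $\theta(\infty) - \theta(0) = \frac{1}{n}M^{-1}J(\bm{x})^\top(\bm{y} - s_{\theta^b}(\bm{x}))$.

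Finally I would push this back through the output map and convert to kernel form. Plugging into the linearized output yields $f_{\theta(\infty)}(x) = s_{\theta^b}(x) + \frac{1}{n}J(x)M^{-1}J(\bm{x})^\top(\bm{y} - s_{\theta^b}(\bm{x}))$. The crucial algebraic step is the push-through identity $(\frac{1}{n}J(\bm{x})^\top J(\bm{x}) + \lambda_n \bm{I}_p)^{-1}J(\bm{x})^\top = J(\bm{x})^\top(\frac{1}{n}J(\bm{x})J(\bm{x})^\top + \lambda_n \bm{I}_n)^{-1}$, which trades the $p\times p$ parameter-space inverse for an $n\times n$ kernel-space inverse. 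Substituting the NTK identifications $J(x)J(\bm{x})^\top = \bm{K}(x,\bm{x})$ and $J(\bm{x})J(\bm{x})^\top = \bm{K}(\bm{x},\bm{x})$ and clearing the factor $1/n$ gives exactly $s_{\theta^b}(x) + \bm{K}(x,\bm{x})(\bm{K}(\bm{x},\bm{x}) + \lambda_n n \bm{I})^{-1}(\bm{y} - s_{\theta^b}(\bm{x}))$, which is \eqref{equ:nnpredictor}; the kernel ridge regression characterization of Proposition \ref{linearizednetwork} is then deferred to Lemma \ref{thm:losskr}.

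I expect the main difficulty to be bookkeeping rather than conceptual: keeping the $p\times p$ versus $n\times n$ dimensions straight through the ODE solution and applying the push-through identity in the correct orientation. The one genuinely delicate point is justifying the $t\to\infty$ limit, and this is precisely where the regularization $\lambda_n>0$ earns its keep, since it guarantees that the smallest eigenvalue of $M$ is at least $\lambda_n$, so the transient term decays and the limit exists without any full-rank assumption on the NTK Gram matrix.
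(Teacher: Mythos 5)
Your proposal is correct and follows essentially the same route as the paper's own proof: linearize the network via Assumption \ref{ass:linearized}, reduce gradient flow with the MSE-plus-regularization loss to a linear ODE in $\theta(t)$, solve it, send $t\to\infty$ (with $\lambda_n>0$ guaranteeing the spectrum of $M$ is bounded below so the transient vanishes), and convert to kernel form before deferring the KRR characterization to Lemma \ref{thm:losskr}. If anything, your explicit and correctly oriented use of the push-through identity $\bigl(\tfrac{1}{n}J(\bm{x})^\top J(\bm{x})+\lambda_n \bm{I}_p\bigr)^{-1}J(\bm{x})^\top = J(\bm{x})^\top\bigl(\tfrac{1}{n}J(\bm{x})J(\bm{x})^\top+\lambda_n \bm{I}_n\bigr)^{-1}$ tidies up a step that the paper's displayed solution for $\theta(t)$ handles with loose notation (writing the $p\times p$ Gram matrix $J(\bm{x})^\top J(\bm{x})$ where the $n\times n$ matrix $J(\bm{x})J(\bm{x})^\top$ is meant).
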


\textbf{Shifted kernel ridge regression.} On the other hand, we build a shifted kernel ridge regression based on the NTK as follows.

Let $\bar{\mathcal{H}}$ be the reproducing kernel Hilbert space constructed from the kernel function $K(x,x')$ (the population NTK); See Appendix \ref{sec:kernelridge}. Consider the following kernel ridge regression problem on the RKHS $\bar{\mathcal{H}}$:
\begin{equation} \label{equ:losskr}
s_{\theta^b} + \argmin{g\in \bar{\mathcal{H}}} \frac{1}{n}\sum_{i=1}^n (y_i-s_{\theta^b}(x_i)-g(x_i))^2 + \lambda_n \|g\|^2_{\bar{\mathcal{H}}}
\end{equation}
where $\lambda_n$ is a regularization hyper-parameter and $s_{\theta^b}$ is a known given function.

\begin{lemma} \label{thm:losskr}
\eqref{equ:nnpredictor} is the solution to the following kernel ridge regression problem
\begin{equation} \label{equ:losskr2}
s_{\theta^b}+ \argmin{g\in \bar{\mathcal{H}}} \frac{1}{n}\sum_{i=1}^n (y_i-s_{\theta^b}(x_i)-g(x_i))^2 + \lambda_n \|g\|^2_\mathcal{\bar{H}}.
\end{equation}
\end{lemma}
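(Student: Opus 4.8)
The plan is to reduce the minimization in \eqref{equ:losskr2} to a standard (unshifted) kernel ridge regression with relabeled responses and then invoke the closed-form solution already recorded in Proposition \ref{RLS1}. The key observation is that $s_{\theta^b}$ is a fixed, known function, so the objective $\frac{1}{n}\sum_{i=1}^n (y_i-s_{\theta^b}(x_i)-g(x_i))^2 + \lambda_n \|g\|^2_{\bar{\mathcal{H}}}$ depends on the optimization variable $g$ only through the shifted residuals. First I would introduce the shifted labels $\tilde{y}_i := y_i - s_{\theta^b}(x_i)$ and write $\tilde{\bm{y}} := \bm{y} - s_{\theta^b}(\bm{x})$, so that the problem becomes exactly
$$\argmin{g\in \bar{\mathcal{H}}}\ \frac{1}{n}\sum_{i=1}^n (\tilde{y}_i-g(x_i))^2 + \lambda_n \|g\|^2_{\bar{\mathcal{H}}},$$
the canonical kernel ridge regression objective on the RKHS $\bar{\mathcal{H}}$ whose kernel is the population NTK $K$.

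Next I would apply Proposition \ref{RLS1} verbatim, replacing the generic kernel $k$, Gram matrix $\bm{k}(\bm{x},\bm{x})$, and response $\bm{y}$ there by $K$, $\bm{K}(\bm{x},\bm{x})$, and $\tilde{\bm{y}}$ respectively. This produces the unique minimizer
$$g^*(x) = \bm{K}(x, \bm{x})^T(\bm{K}(\bm{x},\bm{x})+\lambda_n n \bm{I})^{-1}\,(\bm{y}-s_{\theta^b}(\bm{x})).$$
Adding back the fixed shift, the solution to \eqref{equ:losskr2} is $s_{\theta^b}(x)+g^*(x)$, which matches the right-hand side of \eqref{equ:nnpredictor} (up to the row-versus-column convention adopted for $\bm{K}(x,\bm{x})$ in the NTK section), completing the identification.

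There is no hard analytic step here; the content is essentially bookkeeping, and the one point deserving an explicit line is well-posedness. Since $\lambda_n > 0$ and $K$ is positive semidefinite, the matrix $\bm{K}(\bm{x},\bm{x})+\lambda_n n \bm{I}$ is strictly positive definite and hence invertible, so Proposition \ref{RLS1} applies without any full-rank assumption on the NTK Gram matrix and the minimizer is unique. I would also note that the shift invariance used in the first step is legitimate because $s_{\theta^b}(\bm{x})$ is a fixed vector, so translating the responses alters neither the feasible set nor the regularizer $\|g\|_{\bar{\mathcal{H}}}$. Combining this with \eqref{equ:nnpredictor} --- which was itself derived from the gradient-flow dynamics under Assumption \ref{prop1:assm} earlier in this section --- establishes Lemma \ref{thm:losskr}, and thereby the remaining part of Proposition \ref{prop1}.
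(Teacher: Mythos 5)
Your proof is correct and follows essentially the same route as the paper: shift the labels to $\tilde{y}_i = y_i - s_{\theta^b}(x_i)$, apply the closed-form kernel ridge regression solution of Proposition \ref{RLS1} with the NTK as the kernel, and add back the fixed shift $s_{\theta^b}$ to recover \eqref{equ:nnpredictor}. The extra remarks on invertibility of $\bm{K}(\bm{x},\bm{x})+\lambda_n n \bm{I}$ and the legitimacy of the translation are fine additions but not points where the paper's argument differs.
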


\begin{proof}[Proof of Lemma \ref{thm:losskr}]
We first let $\tilde{y}_i=y_i-s_{\theta^b}(x_i)$ be the shifted label. Now consider the kernel ridge regression problem
$$\argmin{g\in \bar{\mathcal{H}}} \frac{1}{n}\sum_{i=1}^n (\tilde{y}_i-g(x_i))^2 + \lambda_n \|g\|^2_\mathcal{\bar{H}}.$$
Standard theory in kernel ridge regression (Proposition \ref{RLS1}) shows that the closed-form solution to the above problem is given by:
\begin{equation*}
\bm{K}(x, \bm{x})^T (\bm{K}(\bm{x},\bm{x})+\lambda_n n \bm{I})^{-1} \tilde{\bm{y}}=\bm{K}(x, \bm{x})^T (\bm{K}(\bm{x},\bm{x})+\lambda_n n \bm{I})^{-1} (\bm{y}-s_{\theta^b}(\bm{x}))    
\end{equation*}
as $K$ is the kernel function of $\bar{\mathcal{H}}$.
This equation corresponds to the definition of $\hat{h}_{n,\theta^{b}}(x)-s_{\theta^b}(x)$ in \eqref{equ:nnpredictor}.
\end{proof}



From Proposition \ref{prop1} and Lemma \ref{thm:losskr}, we immediately conclude that 

\textit{The shifted kernel ridge regressor using NTK with a shift from an initial function $s_{\theta^b}$ is exactly the linearized neural network regressor that starts from the initial network $s_{\theta^b}$.}

\section{Additional Remarks} \label{sec:addremarks}

We make the following additional remarks to explain some of the details in the main paper:

\begin{enumerate}

\item \textbf{Regarding Proposition \ref{linearizednetwork} (Proposition \ref{prop1}) in Section \ref{sec:quantifying}.}  Proposition \ref{linearizednetwork} is the theoretical foundation of this paper to develop our framework for efficient uncertainty quantification and reduction for neural networks. We recognize the limitation of this proposition: The linearized neural network assumption (Assumption \ref{ass:linearized}) must hold to guarantee that the exact equality \eqref{equ:NNequalKR} holds in Proposition \ref{linearizednetwork}. In reality, the network can never be infinitely wide, and thus an additional error will appear in \eqref{equ:NNequalKR}. Unfortunately, with finite network width, this error might be roughly estimated up to a certain order but still involves some unknown constants that hide beneath the data and network \citep{du2019gradient,arora2019exact,lee2019wide}, which is extremely difficult to quantify precisely in practice. This work does not deal with such an error from finite network width. Instead, we assume that the linearized neural network assumption readily holds as the starting point for developing our uncertainty quantification framework. Equivalently, one may view our work as uncertainty quantification for linearized neural networks.

\item \textbf{Regarding the regularization parameters in Section \ref{sec:quantifying}.} We introduce the regularization parameters $\lambda_n>0$ with the regularization term centering at the randomly initialized network parameter in \eqref{equ:lossnnmse} throughout this paper. This form of regularized loss is also adopted in \citep{zhou2020neural,zhang2021neural}. It has some advantages: 1) A common assumption in NTK theory is to assume that the smallest eigenvalue of the NTK Gram matrix is bounded away from zero \citep{du2019gradient,arora2019exact,lee2019wide}. With $\lambda_n>0$ is introduced, there is an additional term $\lambda_n n \bm{I}$ in the NTK Gram matrix so that this assumption is always valid naturally. 2) It helps to stabilize the inversion of the NTK Gram matrix if any computation is needed. 3) It can be naturally
linked to the well-known kernel ridge regression, which we leverage in this work to develop uncertainty quantification approaches. 

On the other hand, we note that with $\lambda_0=\lim_{n\to\infty}\lambda_n>0$, the "best" predictor $h^*$ in \eqref{equ:secondkrrpopulation} given in our framework is not exactly the same as the ground-truth regression function $g^*_{\pi}:=\mathbb{E}_{(X,Y)\sim \pi} [Y|X=x]$. In fact, Proposition \ref{RLS3} shows that
$h^*=\bar{s}+(\mathcal{O}_K+\lambda_0 I)^{-1} \mathcal{O}_K (g^*_\pi-\bar{s}) =(\mathcal{O}_K+\lambda_0 I)^{-1} \mathcal{O}_K (g^*_\pi) \ne g^*_\pi $. However, previous work \citep{SW,SZ3,wang2023pseudo} shows that with some mild conditions, $h^*- g^*_\pi \to 0$ in some metrics when $\lambda_0\to 0$. Therefore, we may use a very small limit value $\lambda_0$ in practice to make the error $h^*-g^*_\pi$ negligible, as we did in our experiments.

\item \textbf{Regarding naive resampling approaches such as "bootstrap on a deep ensemble" mentioned in Section \ref{sec:PNC}.}
We show that bootstrap on the deep ensemble estimator $\hat{h}^{m}_{n}(x)$ face computational challenges in constructing confidence intervals.
Since $\hat{h}^{m}_{n}(x)$ involves two randomnesses, data
variability and procedural variability, we need to bring up two asymptotic normalities to handle them. One approach is first to consider procedural variability given data and then to consider data variability, as described as follows. Another approach is first to consider data variability given procedure and then to consider procedural variability. This will encounter similar challenges as the first approach, so we only discuss the first approach here.

We intuitively present the technical discussions to explain the computational challenges.
First, conditional on the training data of size $n$, by the central limit theorem, we have
$\sqrt{m}(\hat{h}^{m}_{n}(x)-\hat{h}^*_{n}(x)) \stackrel{d}{\approx}\mathcal{N}(0, \xi^2_n)$ as $m\to \infty$
where $\stackrel{d}{\approx}$ represents "approximately equal in distribution" and $\xi^2_n=\text{Var}_{P_{\theta^{b}}}(\hat{h}_{n,\theta^{b}}(x))$ which depends on the training data. Second, we assume that the central limit theorem holds for $\hat{h}^*_{n}(x)$ (which is rigorously justified in Section \ref{sec:confidence}): $\sqrt{n}(\hat{h}^*_{n}(x)-h^*(x)) \Rightarrow \mathcal{N}(0, \zeta^2)$ as $n\to \infty$. Moreover, we assume for simplicity that we can add up the above two convergences in distribution to obtain
$$\sqrt{n}(\hat{h}^{m}_{n}(x)-h^*(x)) \stackrel{d}{\approx} \mathcal{N}(0, \zeta^2+\frac{n}{m}\xi_n^2).$$ 
Note that $\hat{h}^{m}_{n}(x)-\hat{h}^*_{n}(x)$ and $\hat{h}^*_{n}(x)-h^*(x)$ are clearly dependent, both depending on the training data of size $n$. Therefore, the statistical correctness of the above summation of two convergences in distribution needs to be justified, e.g., using techniques in \citep{glynn2018constructing,lam2022cheap,lam2022subsampling}. 
However, even if we assume that all the technical parts can be addressed and the asymptotic normality of $\hat{h}^{m}_{n}(x)$ is established, computation issues still arise in the "bootstrap on a deep ensemble" approach: 1) "$\frac{n}{m}\xi^2_n$" is meaningful in the limit if we have the limit $\frac{n}{m}\xi^2_n\to \alpha$. This typically requires the ensemble size $m$ to be large, at least in the order $\omega(1)$, which increases the computational cost of the deep ensemble predictor. However, this is not the only computationally demanding part. 2) The estimation of the asymptotic variance "$\zeta^2+\frac{n}{m}\xi^2_n$" requires additional computational effort. When using resampling techniques such as the bootstrap, we need to resample $R$ times to obtain $(\hat{h}^{m}_{n}(x))^{*j}$ where $j\in [R]$. This implies that we need $Rm$ network training times in total, and according to Point 1), it is equivalent to at least $R\times \omega(1)$ network training times, which is extremely computationally demanding. Therefore, naive use of the "bootstrap on a deep ensemble" approach is barely feasible in neural networks. 

\end{enumerate}

\section{Proofs} \label{sec: proofs}

In this section, we provide technical proofs of the results in the paper.


\begin{proof}[Proof of Theorem \ref{thm:var}]
In terms of estimation bias, we note that
$$
\mathbb{E}[\hat{h}^{m}_{n}(x)|\mathcal{D}]=\frac{1}{m} \sum_{i=1}^{m} \mathbb{E}[\hat{h}_{n,\theta_i^b}(x)|\mathcal{D}]
=\mathbb{E}[\hat{h}_{n,\theta^b}(x)|\mathcal{D}]=\hat{h}^*_{n}(x).
$$
as shown in \eqref{equ:PNC}.
This implies that the deep ensemble predictor and the single network predictor have the same conditional procedural mean given the data. In terms of variance, note that $\hat{h}_{n,\theta_1^b}(x),...,\hat{h}_{n,\theta_m^b}(x)$, derived in \eqref{equ:NNequalKR}, are conditionally independent given $\mathcal{D}$. Therefore the deep ensemble predictor gives
\begin{align*}
\text{Var}(\hat{h}^{m}_{n}(x)|\mathcal{D})=\frac{1}{m^2} \sum_{i=1}^{m} \text{Var}(\hat{h}_{n,\theta_i^b}(x)|\mathcal{D})=\frac{1}{m} \text{Var}(\hat{h}_{n,\theta^b}(x)|\mathcal{D})
\end{align*}
while the single model prediction gives $\text{Var}(\hat{h}_{n,\theta^b}(x)|\mathcal{D})$. Now using conditioning, we have 
\begin{align*}
&\text{Var} (\hat{h}^{m}_{n}(x))\\
= & \text{Var}( \mathbb{E}[\hat{h}^{m}_{n}(x)|\mathcal{D}]) +\mathbb{E}[\text{Var} (\hat{h}^{m}_{n}(x)|\mathcal{D})]\\
= & \text{Var}(\hat{h}^*_{n}(x)) + \frac{1}{m}\mathbb{E}[ \text{Var}(\hat{h}_{n, \theta^{b}}(x)|\mathcal{D})]\\
\le & \text{Var}(\hat{h}^*_{n}(x)) + \mathbb{E}[ \text{Var}(\hat{h}_{n, \theta^{b}}(x)|\mathcal{D})]\\
= & \text{Var}(\hat{h}_{n, \theta^{b}}(x))
\end{align*}
as desired.
\end{proof}


\begin{proof}[Proof of Theorem \ref{thm:PNC}]
Recall that $\hat{\phi}'_{n,\theta^{b}}(\cdot)$ is obtained by training an auxiliary neural network with data $\{(x_1,\bar{s}(x_1)),(x_2,\bar{s}(x_2)),...,(x_n,\bar{s}(x_n))\}$ and the initialization parameters $\theta^b$. Then Proposition \ref{linearizednetwork} immediately shows that
$$\hat{\phi}'_{n,\theta^{b}}(x)=s_{\theta^{b}}(x)+
\bm{K}(x, \bm{x})^T (\bm{K}(\bm{x},\bm{x})+\lambda_n n \bm{I})^{-1} (\bar{s}(\bm{x})-s_{\theta^{b}}(\bm{x})).$$
Note that this auxiliary network starts from the same initialization $\theta^b$ as in $\hat{h}_{n,\theta^{b}}(x)$. Then we have
\begin{align*}
\hat{\phi}_{n,\theta^{b}}(x)&=\hat{\phi}'_{n,\theta^{b}}(x)-\bar{s}(x)\\
&=s_{\theta^{b}}(x)-\bar{s}(x)+
\bm{K}(x, \bm{x})^T (\bm{K}(\bm{x},\bm{x})+\lambda_n n \bm{I})^{-1} (\bar{s}(\bm{x})-s_{\theta^{b}}(\bm{x}))\\
&=\hat{h}_{n,\theta^{b}}(x)-\hat{h}^*_{n}(x)    
\end{align*}
Hence,
$$\hat{h}^*_{n}(x)=\hat{h}_{n,\theta^{b}}(x)-(\hat{\phi}'_{n,\theta^{b}}(x)-\bar{s}(x))=\hat{h}_{n,\theta^{b}}(x)-\hat{\phi}_{n,\theta^{b}}(x)$$
as desired.
\end{proof}

\begin{proof}[Proof of Theorem \ref{thm:IF}]
Theorem \ref{thm:PNC} implies that
$$\hat{h}_{n,\theta^{b}}(x)-\hat{\phi}_{n,\theta^{b}}(x)=\hat{h}^*_{n}(x)$$
which is the solution to the following problem
\begin{equation*}
\hat{h}^*_{n}(\cdot)=\bar{s}(\cdot)+\min_{g\in \bar{\mathcal{H}}} \frac{1}{n}\sum_{i=1}^n[(y_i-\bar{s}(x_i)-g(x_i))^2] + \lambda_n \|g\|^2_\mathcal{\bar{H}}
\end{equation*}
Therefore, its corresponding population risk minimization problem is:
\begin{equation*}
h^*(\cdot)=\bar{s}(\cdot)+\min_{g\in \bar{\mathcal{H}}} \mathbb{E}_{\pi}[(Y-\bar{s}(X)-g(X))^2] + \lambda_0 \|g\|^2_\mathcal{\bar{H}}
\end{equation*}
Applying Proposition \ref{thm:asymptoticnor0} to $\hat{h}^*_{n}$ and $h^*$, we have that
$$\sqrt{n}\left((\hat{h}^*_{n}(x)-\bar{s}(x))-(h^*(x)-\bar{s}(x))\right)\Rightarrow \mathcal{N}(0,\sigma^2(x))$$
In other words,
$$\sqrt{n}\left(\hat{h}_{n,\theta^{b}}(x)-\hat{\phi}_{n,\theta^{b}}(x)-h^*(x)\right)\Rightarrow \mathcal{N}(0,\sigma^2(x))$$
This shows that asymptotically as $n \to \infty$ we have
$$\mathbb{P}\left(-q_{1-\frac{\alpha}{2}}\le \frac{\hat{h}_{n,\theta^{b}}(x)-\hat{\phi}_{n,\theta^{b}}(x)-h^*(x)}{\sigma(x)/\sqrt{n}}\le q_{1-\frac{\alpha}{2}}\right)\to 1-\alpha$$
where $q_\alpha$ is the $\alpha$-quantile of standard Gaussian distribution $\mathcal{N}(0,1)$. Hence 
$$\left[\hat{h}_{n,\theta^{b}}(x)-\hat{\phi}_{n,\theta^{b}}(x)-\frac{\sigma(x)}{\sqrt{n}}q_{1-\frac{\alpha}{2}},\hat{h}_{n,\theta^{b}}(x)-\hat{\phi}_{n,\theta^{b}}(x)+\frac{\sigma(x)}{\sqrt{n}}q_{1-\frac{\alpha}{2}}\right]$$
is an asymptotically exact $(1-\alpha)$-level confidence interval of $h^*(x)$.
\end{proof}

\begin{proof} [Proof of Theorem \ref{thm:batching}]
Note that the statistics
$$\hat{h}^{j}_{n',\theta^{b}}(x)-\hat{\phi}^{j}_{n',\theta^{b}}(x)$$
from the $j$-th batch is i.i.d. for $j\in [m']$. Moreover, by Theorem \ref{thm:IF}, we have the asymptotic normality:
$$\sqrt{n'} \left( \hat{h}_{n',\theta^{b}}(x)-\hat{\phi}_{n',\theta^{b}}(x)-h^*(x)\right) \Rightarrow \mathcal{N}(0,\sigma^2(x)),$$
as $n\to \infty$ meaning $n'=n/m'\to \infty$.
Therefore by the property of Gaussian distribution and the principle of batching, we have
$$\frac{\sqrt{n'}}{\sigma(x)}(\psi_B(x)-h^*(x))\Rightarrow \frac{1}{m'}\sum_{i=1}^{m'}Z_i$$
and
$$\frac{\psi_B(x)-h^*(x)}{S_B(x)/\sqrt{m'}}\Rightarrow \frac{\frac{1}{m'}\sum_{i=1}^{m'}Z_i}{\sqrt{\frac{1}{m'(m'-1)}\sum_{j=1}^{m'} (Z_j-\frac{1}{m'}\sum_{i=1}^{m'}Z_i)^2}}
$$
for i.i.d. $\mathcal{N}(0, 1)$ random variables $Z_1, . . . , Z_{m'}$, where we use the continuous mapping
theorem to deduce the weak convergence.
Note that
$$\frac{\frac{1}{m'}\sum_{i=1}^{m'}Z_i}{\sqrt{\frac{1}{m'(m'-1)}\sum_{j=1}^{m'} (Z_j-\frac{1}{m'}\sum_{i=1}^{m'}Z_i)^2}}\stackrel{\text{d}}{=}t_{m'-1}
$$
Hence, asymptotically as $n \to \infty$ we have
$$\mathbb{P}\left(-q_{1-\frac{\alpha}{2}}\le \frac{\psi_B(x)-h^*(x)}{S_B(x)/\sqrt{m'}}\le q_{1-\frac{\alpha}{2}}\right)\to 1-\alpha$$
where $q_\alpha$ is the $\alpha$-quantile of the t distribution $t_{m'-1}$ with degree of freedom $m'-1$. Hence 
$$\left[\psi_B(x)-\frac{S_B(x)}{\sqrt{m'}}q_{1-\frac{\alpha}{2}},\psi_B(x)+\frac{S_B(x)}{\sqrt{m'}}q_{1-\frac{\alpha}{2}}\right]$$
is an asymptotically exact $(1-\alpha)$-level confidence interval of $h^*(x)$.
\end{proof}

\begin{proof} [Proof of Theorem \ref{thm:cheapbootstrap}]
Note that for $j\in [R]$, the statistics
$$\hat{h}^{*j}_{n,\theta^{b}}(x)-\hat{\phi}^{*j}_{n,\theta^{b}}(x)$$
from the $j$-th bootstrap replication are i.i.d. conditional on the dataset $\mathcal{D}$. By Theorem \ref{thm:IF}, we have the asymptotic normality:
$$\sqrt{n} \left( \hat{h}_{n,\theta^{b}}(x)-\hat{\phi}_{n,\theta^{b}}(x)-h^*(x)\right) \Rightarrow \mathcal{N}(0,\sigma^2(x)),$$
as $n\to \infty$.
By Theorem 2 in \citep{christmann2013consistency}, we have the following asymptotic normality: For $j\in [R]$,
$$\sqrt{n} \left( \hat{h}^{*j}_{n,\theta^{b}}(x)-\hat{\phi}^{*j}_{n,\theta^{b}}(x)-\left(\hat{h}_{n,\theta^{b}}(x)-\hat{\phi}_{n,\theta^{b}}(x)\right)\right) \Rightarrow \mathcal{N}(0,\sigma^2(x)),$$
as $n\to \infty$ conditional on the training data $\mathcal{D}$.
Therefore Assumption 1 in \citep{lam2022cheap} is satisfies and thus Theorem 1 in \citep{lam2022cheap} holds, showing that asymptotically as $n \to \infty$, we have
$$\mathbb{P}\left(-q_{1-\frac{\alpha}{2}}\le \frac{\psi_C(x)-h^*(x)}{S_C(x)}\le q_{1-\frac{\alpha}{2}}\right)\to 1-\alpha$$
where $q_\alpha$ is the $\alpha$-quantile of the t distribution $t_{R}$ with degree of freedom $R$. Hence 
$$\left[\psi_C(x)-S_C(x)q_{1-\frac{\alpha}{2}},\psi_C(x)+S_C(x)q_{1-\frac{\alpha}{2}}\right]$$
is an asymptotically exact $(1-\alpha)$-level confidence interval of $h^*(x)$.
\end{proof}

\begin{proof}[Proof of Theorem \ref{thm:IJKRR1}]

Recall that $\Phi$ is the feature map associated with the kernel $k$ of the RKHS $\mathcal{H}$. We apply the influence function formula in \citet{christmann2007consistency} (see also \citet{debruyne2008model,hable2012asymptotic}) to calculate the infinitesimal jackknife: 
\begin{align*}
IF(z; T, \pi_{n}) &= -S_{\pi_{n}}^{-1}(2\lambda_0 g_{\pi_n,\lambda_0}) + \mathcal{L}'(z_{y}- g_{\pi_n,\lambda_0}(z_x)) S_{\pi_{n}}^{-1}\Phi (z_x)  
\end{align*}
where $S_P : \mathcal{H} \to \mathcal{H}$ is defined by $$S_P(f) = 2\lambda_0 f + \mathbb{E}_{P} [\mathcal{L}''(Y, g_{P,\lambda_0}(X))f(X)  \Phi(X)]$$

To compute this, we need to obtain $S_{\pi_{n}}$ and $S_{\pi_{n}}^{-1}$. Since the loss function is $\mathcal{L}(\hat{y},y)=(\hat{y}-y)^2$, we have
$$S_{\pi_{n}}(f) = 2\lambda_0 f + \mathbb{E}_{\pi_n} [\mathcal{L}''(Y, g_{\pi_n,\lambda_0}(X))f(X)  \Phi(X)]= 2\lambda_0 f + \frac{2}{n} \sum_{j=1}^{n} f(x_j)\Phi(x_j).$$
Suppose $S_{\pi_{n}}^{-1}(2\lambda_0 g_{\pi_n,\lambda_0})=\tilde{g}_1$. Then at $x_0$, we have
$$2\lambda_0 g_{\pi_n,\lambda_0}(x_0)=S_{\pi_{n}}(\tilde{g}_1(x_0))=2\lambda_0 \tilde{g}_1(x_0) + \frac{2}{n} \sum_{j=1}^{n} \tilde{g}_1(x_j)k(x_0,x_j).$$
Hence,
$$\tilde{g}_1(x_0)=g_{\pi_n,\lambda_0}(x_0)-\frac{1}{\lambda_0 n} \sum_{j=1}^n \tilde{g}_1(x_j)k(x_0,x_j).$$
This implies that we need to evaluate $\tilde{g}_1(x_j)$ on training data first, which is straightforward by letting $x_0=x_1,...,x_n$:
$$\tilde{g}_1(\bm{x})=g_{\pi_n,\lambda_0}(\bm{x})-\frac{1}{\lambda_0 n} \bm{k}(\bm{x},\bm{x}) \tilde{g}_1(\bm{x})$$
so 
$$\tilde{g}_1(\bm{x})=(\bm{k}(\bm{x},\bm{x})+\lambda_0 n \bm{I})^{-1}(\lambda_0 n \bm{I})g_{\pi_n,\lambda_0}(\bm{x})$$
and
\begin{align*}
\tilde{g}_1(x_0)&=g_{\pi_n,\lambda_0}(x_0)-\frac{1}{\lambda_0 n} \bm{k}(x_0, \bm{x})^T \tilde{g}_1(\bm{x})    \\
&=g_{\pi_n,\lambda_0}(x_0)-\bm{k}(x_0, \bm{x})^T (\bm{k}(\bm{x},\bm{x})+\lambda_0 n \bm{I})^{-1}g_{\pi_n,\lambda_0}(\bm{x})
\end{align*}
Next we compute $S_{\pi_{n}}^{-1}\Phi (z_x)=\tilde{g}_2$. At $x_0$, we have
$$k(z_x,x_0)=\Phi(z_x)(x_0)=S_{\pi_{n}}(\tilde{g}_2(x_0))=2\lambda_0 \tilde{g}_2(x_0) + \frac{2}{n} \sum_{j=1}^{n} \tilde{g}_2(x_j)k(x_0,x_j)$$
Hence 
$$\tilde{g}_2(x_0)=\frac{1}{2\lambda_0} k(z_x,x_0)-\frac{1}{\lambda_0 n} \sum_{j=1}^n \tilde{g}_2(x_j)k(x_0,x_j)$$
This implies that we need to evaluate $\tilde{g}_2(x_j)$ on training data first, which is straightforward by letting $x_0=x_1,...,x_n$:
$$\tilde{g}_2(\bm{x})=\frac{1}{2\lambda_0} \bm{k}(z_x,\bm{x})-\frac{1}{\lambda_0 n} \bm{k}(\bm{x},\bm{x}) \tilde{g}_2(\bm{x})$$
so 
$$\tilde{g}_2(\bm{x})=(\bm{k}(\bm{x},\bm{x})+\lambda_0 n \bm{I})^{-1}\left(\frac{n}{2} \bm{I}\right)\bm{k}(z_x,\bm{x})$$
and
\begin{align*}
\tilde{g}_2(x_0)&=\frac{1}{2\lambda_0} k(z_x,x_0)-\frac{1}{\lambda_0 n} \bm{k}(x_0,\bm{x})^T \tilde{g}_2(\bm{x}) \\
&=\frac{1}{2\lambda_0} k(z_x,x_0)-\frac{1}{2\lambda_0} \bm{k}(x_0, \bm{x})^T (\bm{k}(\bm{x},\bm{x})+\lambda_0 n \bm{I})^{-1}\bm{k}(z_x,\bm{x})
\end{align*}
Combing previous results, we obtain
\begin{align*}
&IF(z; T, \pi_n)(x_0)\\
=&-g_{\pi_n,\lambda_0}(x_0)+\bm{k}(x_0, \bm{x})^T (\bm{k}(\bm{x},\bm{x})+\lambda_0 n \bm{I})^{-1}g_{\pi_n,\lambda_0}(\bm{x})\\
&+2(z_{y}- g_{\pi_n,\lambda_0}(z_x)) \left(\frac{1}{2\lambda_0} k(z_x,x_0)-\frac{1}{2\lambda_0} \bm{k}(x_0, \bm{x})^T (\bm{k}(\bm{x},\bm{x})+\lambda_0 n \bm{I})^{-1}\bm{k}(z_x,\bm{x})\right)\\
=&\bm{k}(x_0, \bm{x})^T (\bm{k}(\bm{x},\bm{x})+\lambda_0 n \bm{I})^{-1} \left(g_{\pi_n,\lambda_0}(\bm{x})-\frac{1}{\lambda_0}(z_{y}- g_{\pi_n,\lambda_0}(z_x)) \bm{k}(z_x,\bm{x})\right)\\
&-g_{\pi_n,\lambda_0}(x_0)+\frac{1}{\lambda_0}(z_{y}-g_{\pi_n,\lambda_0}(z_x)) k(z_x,x_0)
\\
=& \bm{k}(x_0, \bm{x})^T (\bm{k}(\bm{x},\bm{x})+\lambda_0 n \bm{I})^{-1} M_z(\bm{x})-M_z(x_0)
\end{align*}
as desired.
\end{proof}

\begin{proof}[Proof of Theorem \ref{thm:IJKRR2}]
Recall from Section \ref{sec:asymptoticKBR}, we use the equivalence notations:
$$IF(z; T, P) = T'_{P}(\delta_{z}-P).$$
First, we prove the following Claim:
\begin{equation}\label{claim1}
\sup_{z\in \mathcal{X}\times\mathcal{Y}} \|T'_{\pi}(\delta_{z}-\pi)-T'_{\pi_n}(\delta_{z}-\pi_n)\|_{\mathcal{H}}\to 0 , \quad a.s.    
\end{equation}
From Lemma A.6. and Lemma A.7. in \citet{hable2012asymptotic}, we have
$$T'_{P}(Q)= -S^{-1}_P(W_{P}(Q)).$$
In this equation,
$$W_{P}: \text{lin}(B_S) \to \mathcal{H},\quad  Q\mapsto \mathbb{E}_{Q} [\mathcal{L}'(Y, g_{P,\lambda_0}(X))\Phi(X)]$$ 
is a continuous linear operator on $\text{lin}(B_S)$ where the space $\text{lin}(B_S)$ is defined in \citet{hable2012asymptotic}. Moreover, the operator norm of $W_{P}$ satisfies $\|W_{P}\|\le 1$. In addition,
$$S_P : \mathcal{H} \to \mathcal{H},\quad  f \mapsto 2\lambda_0 f + \mathbb{E}_{P} [\mathcal{L}''(Y, g_{P,\lambda_0}(X))f(X)  \Phi(X)]$$
is an invertible continuous linear operator on $\mathcal{H}$
where $\Phi$ is the feature map of $\mathcal{H}$. This implies that $S^{-1}_{P}: \mathcal{H} \to \mathcal{H}$ is also a continuous linear operator on $\mathcal{H}$.

Next, we apply Lemma A.7 in \citet{hable2012asymptotic} by considering the following two sequences:
The first sequence is
$\delta_{z}-\pi_n\in \text{lin}(B_S)$ which obviously satisfies
$$\|(\delta_{z}-\pi_n)-(\delta_{z}-\pi)\|_{\infty}=\|\pi-\pi_n\|_{\infty}\to 0, \quad a.s.$$
The second sequence is $\pi_n\in B_S$ which obviously satisfies
$$\|\pi-\pi_n\|_{\infty}\to 0, \quad a.s.$$
Following the proof of Lemma A.7 in \citet{hable2012asymptotic}, we have that
\begin{align*}
&\|T'_{\pi}(\delta_{z}-\pi)-T'_{\pi_n}(\delta_{z}-\pi_n)\|_{\mathcal{H}}\\
=& \|S^{-1}_\pi(W_{\pi}(\delta_z-\pi))-S^{-1}_{\pi_n}(W_{\pi_n}(\delta_z-\pi_n))\|_{\mathcal{H}}\\
\le & \|S^{-1}_\pi(W_{\pi}(\delta_z-\pi))-S^{-1}_{\pi}(W_{\pi_n}(\delta_z-\pi))\|_{\mathcal{H}}+\|S^{-1}_\pi(W_{\pi_n}(\delta_z-\pi))-S^{-1}_{\pi}(W_{\pi_n}(\delta_z-\pi_n))\|_{\mathcal{H}}\\
&+\|S^{-1}_\pi(W_{\pi_n}(\delta_z-\pi_n))-S^{-1}_{\pi_n}(W_{\pi_n}(\delta_z-\pi_n))\|_{\mathcal{H}}\\
\le & \|S^{-1}_\pi\| \|W_{\pi}(\delta_z-\pi)-W_{\pi_n}(\delta_z-\pi)\|_{\mathcal{H}}+\|S^{-1}_\pi\|\|W_{\pi_n}\|\|(\delta_z-\pi)-(\delta_z-\pi_n)\|_{\infty}\\
&+\|S^{-1}_\pi-S^{-1}_{\pi_n}\| \|W_{\pi_n}\|\|\delta_z-\pi_n\|_{\infty}\\
\le & \|S^{-1}_\pi\| \|W_{\pi}(\pi)-W_{\pi_n}(\pi)\|_{\mathcal{H}}+\|S^{-1}_\pi\|\|W_{\pi_n}\|\|\pi-\pi_n\|_{\infty} \quad \text{(since } W_{\pi_n} \text{ is a linear operator)}\\
&+\|S^{-1}_\pi-S^{-1}_{\pi_n}\| \|W_{\pi_n}\| \left(\|\delta_z-\pi\|_{\infty}+\|\pi-\pi_n\|_{\infty}\right)
\end{align*}
Therefore, we only need to study the three terms in the last equation.
Note that the first two terms are independent of $z$, and thus it follows from Step 3 and Step 4 in the proof of Lemma A.7 in \citet{hable2012asymptotic} that
\begin{align*}
&\sup_{z\in \mathcal{X}\times\mathcal{Y}}\|S^{-1}_\pi\| \|W_{\pi}(\pi)-W_{\pi_n}(\pi)\|_{\mathcal{H}}+\|S^{-1}_\pi\|\|W_{\pi_n}\|\|\pi-\pi_n\|_{\infty}\\
=& \|S^{-1}_\pi\|W_{\pi}(\pi)-W_{\pi_n}(\pi)\|_{\mathcal{H}}+\|S^{-1}_\pi\|\|W_{\pi_n}\|\|\pi-\pi_n\|_{\infty}\to 0, \quad a.s.
\end{align*}
To show that the third term also satisfies that
$$\sup_{z\in \mathcal{X}\times\mathcal{Y}}\|S^{-1}_\pi-S^{-1}_{\pi_n}\| \|W_{\pi_n}\| \left(\|\delta_z-\pi\|_{\infty}+\|\pi-\pi_n\|_{\infty}\right)\to 0, \quad a.s.,$$
we only need to note the following fact:\\
1) $\|S^{-1}_\pi-S^{-1}_{\pi_n}\|\to 0, \ a.s.,$ by Step 2 in the proof of Lemma A.7 in \citet{hable2012asymptotic}. Moreover, this equation is independent of $z$.\\
2) $\|W_{\pi_n}\|\le 1$ by Step 1 in the proof of Lemma A.7 in \citet{hable2012asymptotic}. Moreover, this equation is independent of $z$.\\
3) $\|\pi-\pi_n\|_{\infty}<+\infty, \ a.s.$, since $\|\pi-\pi_n\|_{\infty}\to 0, \ a.s.$. Moreover, this equation is independent of $z$.\\
4) $\sup_{z\in \mathcal{X}\times\mathcal{Y}} \|\delta_z-\pi\|_{\infty}<+\infty$. To see this, we note that by definition
$$\sup_{z\in \mathcal{X}\times\mathcal{Y}}\|\delta_z-\pi\|_{\infty}=\sup_{z\in \mathcal{X}\times\mathcal{Y}}\sup_{g\in\mathcal{G}}\left|\int gd(\delta_z-\pi)\right|=\sup_{z\in \mathcal{X}\times\mathcal{Y}}\sup_{g\in\mathcal{G}}\left|g(z)-\int gd\pi\right|\le 2\sup_{g\in\mathcal{G}} \|g\|_{\infty}$$
where the last term is independent of $z$ and the space $\mathcal{G}=\mathcal{G}_1\cup\mathcal{G}_2\cup\{b\}$ is defined in \citet{hable2012asymptotic}. 
$\sup_{g\in\mathcal{G}_1} \|g\|_{\infty}\le 1$ by the definition of $\mathcal{G}_1$. $\sup_{g\in\{b\}} \|g\|_{\infty}< +\infty$ by our additional assumption on $b$. Since both $\mathcal{X}$ and $\mathcal{Y}$ are bounded and closed, $\sup_{x\in \mathcal{X}} \sqrt{k(x,x)}$ is bounded above by, say $\kappa<\infty$. Thus, for every $h\in \mathcal{H}$ with $\|h\|_{\mathcal{H}}\le C_1$, we have
$\|h\|_{\infty}\le C_1 \kappa$. By definition of $\mathcal{G}_2$, for any $g\in \mathcal{G}_2$, we can write $g(x,y)=\mathcal{L}'(y,f_0(x))f_1(x)$ with $\|f_0\|_{\mathcal{H}}\le c_0$ and $\|f_1\|_{\mathcal{H}}\le 1$. Note that  $\|f_1\|_{\mathcal{H}}\le 1$ implies that $\|f_1\|_{\infty}\le \kappa$ and $\|f_0\|_{\mathcal{H}}\le c_0$ implies that $\|f_0\|_{\infty}\le c_0 \kappa$. Thus Assumption \ref{A2} shows that $\|\mathcal{L}'(y,f_0(x))\|_{\infty}\le b'_{c_0 \kappa}(y)$ which is bounded on $\mathcal{Y}$ by our additional assumption (uniformly for $f_0$). Hence we conclude that $\sup_{g\in\mathcal{G}_2} \|g\|_{\infty}\le \kappa \sup_{y\in \mathcal{Y}} b'_{c_0 \kappa}(y) < +\infty$. Summarizing the above discussion, we obtain that
$$\sup_{z\in \mathcal{X}\times\mathcal{Y}}\|\delta_z-\pi\|_{\infty}\le 2\sup_{g\in\mathcal{G}} \|g\|_{\infty}<+\infty$$
Combining the above points 1)-4), we conclude that
$$\sup_{z\in \mathcal{X}\times\mathcal{Y}}\|S^{-1}_\pi-S^{-1}_{\pi_n}\| \|W_{\pi_n}\| \left(\|\delta_z-\pi\|_{\infty}+\|\pi-\pi_n\|_{\infty}\right) \to 0, \quad a.s.$$
Hence, we obtain our Claim \eqref{claim1}.

Applying $k_{x_0}$ to \eqref{claim1}, the reproducing property of $k$
implies that 
$$\sup_{z\in \mathcal{X}\times\mathcal{Y}} \left|T'_{\pi}(\delta_{z}-\pi)(x_0)-T'_{\pi_n}(\delta_{z}-\pi_n)(x_0)\right|\to 0 , \quad a.s. $$

Note that since $\mathcal{X}$ and $\mathcal{Y}$ are bounded and closed by our assumptions, we have that
\begin{align*}
& \left|\int_{z\in \mathcal{X}\times\mathcal{Y}} \left(T'_{\pi}(\delta_{z}-\pi)(x_0)\right)^2 d\pi(z) - \int_{z\in \mathcal{X}\times\mathcal{Y}} \left(T'_{\pi_n}(\delta_{z}-\pi_n)(x_0)\right)^2 d\pi(z)]\right|\\
\le
& |\pi(\mathcal{X}\times\mathcal{Y})| \times \sup_{z\in \mathcal{X}\times\mathcal{Y}} \left|T'_{\pi}(\delta_{z}-\pi)(x_0)-T'_{\pi_n}(\delta_{z}-\pi_n)(x_0)   \right|\\
& \times \left|\int_{z\in \mathcal{X}\times\mathcal{Y}} \left(T'_{\pi}(\delta_{z}-\pi)(x_0) + T'_{\pi_n}(\delta_{z}-\pi_n)(x_0)\right) d\pi(z)]\right|\\
\to  & 0 , \quad a.s. 
\end{align*}  
where we use the fact that
$\int_{z\in \mathcal{X}\times\mathcal{Y}} \left(T'_{\pi}(\delta_{z}-\pi)(x_0)\right)^2 d\pi(z)=\xi^2(x_0)< +\infty.$
On the other hand, it follows from the strong law of large numbers that
$$\frac{1}{n} \sum_{z_i\in \mathcal{D}} \left(T'_{\pi_n}(\delta_{z_i}-\pi_n)(x_0)\right)^2-\int_{z\in \mathcal{X}\times\mathcal{Y}} \left(T'_{\pi_n}(\delta_{z}-\pi_n)(x_0)\right)^2 d\pi(z)\to 0, \quad a.s.$$
Hence, we conclude that
$$\frac{1}{n} \sum_{z_i\in \mathcal{D}} \left(T'_{\pi_n}(\delta_{z_i}-\pi_n)(x_0)\right)^2-\int_{z\in \mathcal{X}\times\mathcal{Y}} \left(T'_{\pi}(\delta_{z}-\pi)(x_0)\right)^2 d\pi(z)\to 0, \quad a.s.$$
In other words,
$$\hat{\xi}^2(x_0)=\frac{1}{n} \sum_{z_i\in \mathcal{D}} IF^2(z_i; T, \pi_{n})(x_0)\to  \int_{z\in \mathcal{X}\times\mathcal{Y}} IF^2(z; T, \pi)(x)d\pi(z)=\xi^2(x_0), \quad a.s.$$
For confidence intervals, the proof is straightforward. Theorem \ref{thm:asymptoticnol} shows that as $n \to \infty$ we have
$$\frac{g_{\pi_n,\lambda_n}(x_0)-g_{\pi,\lambda_0}(x_0)}{\xi(x_0)/\sqrt{n}}\Rightarrow \mathcal{N}(0,1)$$
By Slutsky's theorem and $\frac{\xi(x_0)}{\hat{\xi}(x_0)} \to 1, \ a.s.$, we have
$$\frac{g_{\pi_n,\lambda_n}(x_0)-g_{\pi,\lambda_0}(x_0)}{\hat{\xi}(x_0)/\sqrt{n}}\Rightarrow \mathcal{N}(0,1)$$
This implies that asymptotically as $n \to \infty$, we have
$$\mathbb{P}\left(-q_{1-\frac{\alpha}{2}}\le \frac{g_{\pi_n,\lambda_n}(x_0)-g_{\pi,\lambda_0}(x_0)}{\hat{\xi}(x_0)/\sqrt{n}}\le q_{1-\frac{\alpha}{2}}\right)\to 1-\alpha$$
where $q_\alpha$ is the $\alpha$-quantile of standard Gaussian distribution $\mathcal{N}(0,1)$. Hence 
$$\left[g_{\pi_n,\lambda_n}(x_0)-\frac{\hat{\xi}(x_0)}{\sqrt{n}}q_{1-\frac{\alpha}{2}},g_{\pi_n,\lambda_n}(x_0)+\frac{\hat{\xi}(x_0)}{\sqrt{n}}q_{1-\frac{\alpha}{2}}\right]$$
is an asymptotically exact $(1-\alpha)$-level confidence interval of $g_{\pi,\lambda_0}(x)$.
\end{proof}

\section{Experiments: Details and More Results} \label{sec:expmore}

\subsection{Experimental Details} \label{sec:expmore1}
We provide more details about our experimental implementation in Section \ref{sec:exp}.

Throughout our experiments, we use a two-layer fully-connected neural network as the base predictor based on the NTK specifications in Section \ref{sec: NTK} (Proposition \ref{prop1}). However, we need to resolve the conflicts  between the theoretical assumptions therein (e.g., continuous-time gradient flow) and practical implementation (e.g., discrete-time gradient descent), and at the same time,
guarantee that the training procedure indeed operates in the NTK regime so that the first-order Taylor expansion (linearized neural network assumption) works well
\citep{jacot2018neural,du2019gradient,lee2019wide,zhang2020type}. Therefore, we will use the following specifications in all experiments.

\begin{enumerate}
\item The network adopts the NTK parametrization, and its parameters are randomly initialized using He initialization. The ReLU activation function is used in the network.
\item The network has $32 \times n$ hidden neurons in its hidden layer where $n$ is the size of the entire training data. The network should be sufficiently wide so that the NTK theory holds.
\item The network is trained using the regularized square loss \eqref{equ:lossnnmse} with regularization hyper-parameter $\lambda_n \equiv 0.1^{10}$.
\item The network is trained using the (full) batch gradient descent (by feeding the whole dataset).
\item The learning rate and training epochs are properly tuned based on the specific dataset. 
The epochs should not be too small since the training needs to converge to a good solution, but the learning rate should also not be too large because we need to stipulate that the training procedure indeed operates in the NTK regime (which is an area around the initialization). Note that in practice, we cannot use the continuous-time gradient flow, and the network can never be infinitely wide. Therefore, with the fixed learning rate in gradient descent, we do not greatly increase the number of epochs so that the training procedure will likely find a solution that is not too far from the initialization.  
\item We set $m'=4$ in the PNC-enhanced batching approach and $R=4$ in the PNC-enhanced cheap bootstrap approach. 
\item In DropoutUQ, the dropout rate is a crucial hyper-parameter. We find that the dropout rate has a significant impact on the interval width of DropoutUQ: A large dropout rate produces a wide interval since the dropout rate is linked to the variance of the prior Gaussian distribution \citep{gal2016dropout}. Therefore, to make fair comparisons between different approaches, we adjust the dropout rate in DropoutUQ so that they have a similar or larger interval width as PNC-enhanced batching or PNC-enhanced cheap bootstrap.
\item All experiments are conducted on a single GeForce RTX 2080 Ti GPU.
\end{enumerate}



\subsection{Additional Experiments} \label{sec:expmore2}
In this section, we present additional experimental results on more datasets. These experimental results further support our observations and claims in Section \ref{sec:exp}, demonstrating the robustness and effectiveness of our proposals.

First, we consider additional synthetic datasets.

Synthetic Datasets \#2: $X \sim \text{Unif}([0,0.2]^d)$ and 
$Y \sim \sum_{i=1}^d X^{(i)}\sin(X^{(i)}) + \mathcal{N}(0,0.001^2)$.
The training set $\mathcal{D}=\{(x_i,y_i): i=1,...,n\}$ is formed by i.i.d. samples of $(X,Y)$ with sample size $n$ from the above data generating process. We use $x_0=(0.1,0.1,...,0.1)$ and $y_0=\sum_{i=1}^d 0.1\sin(0.1)$ as the fixed test point in the confidence interval task. 
The rest of the setup is the same as Section \ref{sec:exp}. The implementation specifications are the same as in Section \ref{sec:expmore1}. 
The results for constructing confidence intervals are displayed in Table \ref{CIresults:syn2}, and the results for reducing procedural variability are displayed in Table \ref{PNCresults:syn2}.

\begin{table}[!ht] 
  \centering
  \caption{Confidence interval construction on synthetic datasets \#2 with different data sizes $n=128, 256, 512, 1024$ and different data dimensions $d=2, 4, 8, 16$. The CR results that attain the desired confidence level $95\%/90\%$ are in \textbf{bold}.}
\scriptsize
  \begin{tabular}{cccccccccc}
   \toprule
 & \multicolumn{3}{c}{PNC-enhanced batching} & \multicolumn{3}{c}{PNC-enhanced cheap bootstrap} & \multicolumn{3}{c}{DropoutUQ}\\
 & 95\%CI(CR/IW) & 90\%CI(CR/IW) & MP & 95\%CI(CR/IW) & 90\%CI(CR/IW) & MP & 95\%CI(CR/IW) & 90\%CI(CR/IW) & MP\\
    \hline
$(d=2)$ &  &  & & &  & &  & & \\
 $n=128$ & \textbf{0.98}/0.0075  &  \textbf{0.91}/0.0056 & 0.0201 
 & \textbf{1.00}/0.0343 &  \textbf{0.99}/0.0264 & 0.0184 & \textbf{0.99}/0.0392 & \textbf{0.97}/0.0330  & 0.0216 \\
    $n=256$ &\textbf{0.97}/0.0058 &  \textbf{0.93}/0.0043 &   0.0197 &   
    \textbf{0.97}/0.0215 & \textbf{0.93}/0.0165 &  0.0185 & \textbf{0.96}/0.0213 & \textbf{0.94}/0.0169 &  0.0214 \\
    $n=512$ &\textbf{0.96}/0.0050  & \textbf{0.95}/0.0037  & 0.0199  & 0.94/0.0148 & \textbf{0.90}/0.0114   &0.0186  
    & \textbf{0.96}/0.0164 & \textbf{0.90}/0.0135  & 0.0216 \\
    $n=1024$ &  \textbf{0.96}/0.0041 & \textbf{0.94}/0.0030  & 0.0199  
    &\textbf{0.95}/0.0115 &\textbf{0.93}/0.0088   &0.0203  & 0.91/0.0124 & 0.86/0.0102  & 0.0210\\
    \hline
$(d=4)$ &  &  & & &  & &  & & \\
    $n=128$ 
    & \textbf{0.98}/0.0111  &\textbf{0.97}/0.0082  & 0.0395
    & \textbf{1.00}/0.0443 & \textbf{0.98}/0.0340  & 0.0370  & \textbf{0.96}/0.0462 &  \textbf{0.93}/0.0375 & 0.0441\\  
    $n=256$ & \textbf{0.98}/0.0088  &\textbf{0.92}/0.0065   &0.0401   &\textbf{0.98}/0.0346 &\textbf{0.94}/0.0266   &0.0374  & \textbf{0.98}/0.0344 &  \textbf{0.94}/0.0282 & 0.0426 \\
    $n=512$ & 0.94/0.0073 &\textbf{0.91}/0.0054   
    & 0.0403  
    & \textbf{0.99}/0.0236 & \textbf{0.94}/0.0181   &0.0373  & 0.94/0.0310 &  0.88/0.0260 & 0.0427 \\
    $n=1024$ &\textbf{1.00}/0.0055 & \textbf{0.91}/0.0040  &  0.0399
    & 0.94/0.0167 & 0.87/0.0128  & 0.0373 
    & \textbf{0.95}/0.0241 & \textbf{0.90}/0.0203  & 0.0417 \\
    \hline
$(d=8)$ &  &  & & &  & &  & & \\
    $n=128$ & \textbf{0.97}/0.0145  &\textbf{0.93}/0.0107   &0.0798   
    & \textbf{1.00}/0.0683 & \textbf{1.00}/0.0524  &  0.0849
    & 0.94/0.0661 & 0.86/0.0548  & 0.0876\\
    $n=256$ & \textbf{0.96}/0.0110 & \textbf{0.94}/0.0082  & 0.0801  
    & \textbf{0.99}/0.0540 & \textbf{0.97}/0.0414  & 0.0846  & 0.92/0.0538 & \textbf{0.91}/0.0446  & 0.0853\\
    $n=512$ & \textbf{0.97}/0.0092 & \textbf{0.92}/0.0068  & 0.0793  
    & \textbf{0.99}/0.0393 & \textbf{0.96}/0.0302  & 0.0846
    & 0.91/0.0427 & 0.85/0.0359  & 0.0862 \\  
    $n=1024$ & \textbf{0.95}/0.0084  & \textbf{0.90}/0.0062  & 0.0792
    & \textbf{0.97}/0.0284 & \textbf{0.95}/0.0218   & 0.0820     
    & 0.85/0.0295 & 0.79/0.0248 & 0.0809\\
    \hline
$(d=16)$ &  &  & & &  & &  & & \\
    $n=128$ & \textbf{0.95}/0.0214 & \textbf{0.94}/0.0158 & 0.1608 & 
    \textbf{1.00}/0.1004 & \textbf{0.99}/0.0771 & 0.1568 
    & 0.92/0.1124 &  0.87/0.0958 & 0.1814\\ 
    $n=256$ &\textbf{0.97}/0.0174  &\textbf{0.91}/0.0129   & 0.1595      
    & \textbf{1.00}/0.0725 &\textbf{1.00}/0.0556   & 0.1578  
    & \textbf{0.95}/0.0800 & 0.88/0.0677  & 0.1720\\
    $n=512$ & \textbf{0.96}/0.0140 & \textbf{0.92}/0.0104  &  0.1604
    &\textbf{0.99}/0.0516 & \textbf{0.98}/0.0396  & 0.1569  
    & 0.88/0.0565 &  0.80/0.0475 & 0.1689\\
    $n=1024$ & \textbf{0.95}/0.0114 & 0.88/0.0084  & 0.1599 
    & \textbf{0.98}/0.0350 & \textbf{0.95}/0.0269  & 0.1568 
    & 0.86/0.0584 & 0.83/0.0498  & 0.1685 \\
    \bottomrule
  \end{tabular}
 \label{CIresults:syn2}  
\end{table}

\begin{table}[!ht] 
\scriptsize
  \centering
  \caption{Reducing procedural variability to improve prediction on synthetic datasets \#2 with different data sizes $n=128, 256, 512, 1024$ and different data dimensions $d=2, 4, 8, 16$. The best MSE results are in \textbf{bold}.}
  \begin{tabular}{ccccc}
   \toprule
 MSE & One base network & PNC predictor & Deep ensemble (5 networks) & Deep ensemble (2 networks)\\
    \hline
$(d=2)$ &  &  & \\   
$n=128$ & $(1.02\pm 1.16)\times 10^{-4}$ & \bm{$(1.38\pm 0.18)\times 10^{-5}$} & $(2.90\pm 1.72)\times 10^{-5}$ & $(5.39\pm 3.01)\times 10^{-5}$\\
$n=256$ & $(7.36\pm 4.84)\times 10^{-5}$ & \bm{$(1.28\pm 0.10)\times 10^{-5}$} & $(2.39\pm 1.17)\times 10^{-5}$ & $(3.67\pm 1.61)\times 10^{-5}$\\
$n=512$ & $(4.41\pm 2.34)\times 10^{-5}$ & \bm{$(1.11\pm 0.10)\times 10^{-5}$} & $(1.79\pm 5.19)\times 10^{-5}$ & $(2.52\pm 1.55)\times 10^{-5}$\\
$n=1024$ & $(3.46\pm 1.91)\times 10^{-5}$ & \bm{$(9.78\pm 0.94)\times 10^{-6}$} & $(1.03\pm 0.39)\times 10^{-5}$ & $(1.82\pm 1.27)\times 10^{-5}$\\
    \hline
$(d=4)$ &  &  & \\   
$n=128$ & $(1.89\pm 1.23)\times 10^{-4}$ & \bm{$(1.98\pm 0.25)\times 10^{-5}$} & $(4.64\pm 1.49)\times 10^{-5}$ & $(1.02\pm 0.31)\times 10^{-4}$\\
$n=256$ & $(1.59\pm 0.47)\times 10^{-4}$ & \bm{$(1.49\pm 0.08)\times 10^{-5}$} & $(4.55\pm 1.58)\times 10^{-5}$ & $(1.02\pm 0.37)\times 10^{-4}$\\
$n=512$ & $(1.72\pm 0.65)\times 10^{-4}$ & \bm{$(1.29\pm 0.05)\times 10^{-5}$} & $(3.73\pm 0.88)\times 10^{-5}$ & $(7.78\pm 1.72)\times 10^{-5}$\\
$n=1024$ & $(9.75\pm 3.49)\times 10^{-5}$ & \bm{$(1.05\pm 0.03)\times 10^{-5}$} & $(3.70\pm 0.79)\times 10^{-5}$ & $(8.92\pm 2.09)\times 10^{-5}$\\
    \hline
$(d=8)$ &  &  & \\ 
$n=128$ & $(8.00\pm 1.36)\times 10^{-4}$ & \bm{$(5.07\pm 0.65)\times 10^{-5}$} & $(2.18\pm 0.59)\times 10^{-4}$ & $(4.77\pm 1.32)\times 10^{-4}$\\
$n=256$ & $(4.50\pm 0.30)\times 10^{-4}$ & \bm{$(2.30\pm 0.24)\times 10^{-5}$} & $(9.44\pm 1.44)\times 10^{-5}$ & $(2.08\pm 0.19)\times 10^{-4}$\\
$n=512$ & $(3.80\pm 0.35)\times 10^{-4}$ & \bm{$(1.47\pm 0.07)\times 10^{-5}$} & $(8.44\pm 1.11)\times 10^{-5}$ & $(2.02\pm 0.23)\times 10^{-4}$\\
$n=1024$ & $(3.46\pm 0.50)\times 10^{-4}$ & \bm{$(1.12\pm 0.04)\times 10^{-5}$} & $(8.11\pm 1.27)\times 10^{-5}$ & $(1.71\pm 0.23)\times 10^{-4}$\\

    \hline
$(d=16)$ &  &  & \\ 
$n=128$ & $(2.32\pm 0.24)\times 10^{-3}$ & \bm{$(1.39\pm 0.14)\times 10^{-4}$} & $(5.71\pm 0.74)\times 10^{-4}$ & $(1.36\pm 0.30)\times 10^{-3}$\\
$n=256$ & $(1.86\pm 0.21)\times 10^{-3}$ & \bm{$(6.55\pm 0.75)\times 10^{-5}$} & $(3.96\pm 0.34)\times 10^{-4}$ & $(9.49\pm 0.67)\times 10^{-4}$\\
$n=512$ & $(1.42\pm 0.08)\times 10^{-3}$ & \bm{$(3.32\pm 0.29)\times 10^{-5}$} & $(3.00\pm 0.16)\times 10^{-4}$ & $(7.14\pm 0.42)\times 10^{-4}$\\
$n=1024$ & $(1.18\pm 0.06)\times 10^{-3}$ & \bm{$(2.58\pm 0.24)\times 10^{-5}$} & $(2.56\pm 0.18)\times 10^{-4}$ & $(6.30\pm 0.39)\times 10^{-4}$\\
    \bottomrule
  \end{tabular}
 \label{PNCresults:syn2}  
\end{table}

Next, we consider real-world datasets.

It is worth mentioning the main reason for adopting synthetic datasets in our experiments. That is, precise evaluation of a confidence interval requires the following two critical components, which nevertheless are not available in real-world data: 1) One needs to know the ground-truth regression function to check the coverage of confidence intervals, while the label in real-world data typically contains some aleatoric noise. 2) To estimate the coverage rate of the confidence interval, we need to repeat the experiments multiple times by regenerating new independent datasets from the same data distribution. In practice, we cannot regenerate new real-world datasets.

However, to conduct simulative experiments on realistic real-world data, we provide a possible way to mimic an evaluation of the confidence interval on real-world datasets as follows.

Step 1. For a given real-world dataset, select a fixed test data point $(x_0,y_0)$ and a subset of the real-world data $(x_i,y_i)$ ($i\in\mathcal{I}$) that does not include $(x_0,y_0)$.

Step 2. For each experimental repetition $j \in [J]$, add certain artificial independent noise on the label to obtain a "simulated" real-world dataset $(x_i,y_i+\epsilon_{i,j})$ ($i\in\mathcal{I}$) where $\epsilon_{i,j}$ are all independent Gaussian random variables. Construct a confidence interval based on this “regenerating” training data $(x_i,y_i+\epsilon_{i,j})$ ($i\in\mathcal{I}$) and evaluate it on $(x_0,y_0)$.

In the above setting, $y_0$ is treated as the true mean response of $x_0$ without aleatoric noise, and $\epsilon_{i,j}$ represents the only data variability. As discussed above, precise evaluation is impossible for real-world data; the above procedure, although not precise, is the best we can do to provide a resembling evaluation. Using this setting, we conduct experiments on real-world benchmark regression datasets from UCI datasets: Boston, Concrete, and Energy. These results are shown in Table \ref{CIresults:real}.

From the results, we observe that our approaches also work well for these "simulated" real-world datasets. Under various problem settings, our proposed approach can robustly provide accurate confidence intervals that satisfy the coverage requirement of the confidence level. In contrast, DropoutUQ does not have such statistical guarantees. Overall, our approaches not only work for synthetic datasets but also is scalable to be applied potentially in benchmark real-world datasets.

Finally, we conduct experiments of reducing procedural variability to show the performance improvement capability of proposed models and report MSE results on the "simulated" real-world datasets: Boston, Concrete, and Energy. In each experiment, we use an 80\%/20\% random split for training data and test data in the original dataset, and then report MSE on its corresponding "simulated" dataset from our proposed PNC predictor and other baseline approaches with 10 experimental repetition times. These results are shown in Table \ref{PNCresults:real}.

\begin{table}[!ht] 
\scriptsize
\caption{Confidence interval construction on "simulated" real-world benchmark datasets: Boston, Concrete, and Energy. $J=40$. The CR results that attain the desired confidence level $95\%/90\%$ are in \textbf{bold}.}
   \centering
  \begin{tabular}{cccccccccc}
   \toprule
 & \multicolumn{3}{c}{PNC-enhanced batching} & \multicolumn{3}{c}{PNC-enhanced cheap bootstrap} & \multicolumn{3}{c}{DropoutUQ}\\
 & 95\%CI(CR/IW) & 90\%CI(CR/IW) & MP & 95\%CI(CR/IW) & 90\%CI(CR/IW) & MP & 95\%CI(CR/IW) & 90\%CI(CR/IW) & MP\\
    \hline
Boston &  &  & & &  & &  & & \\
Test 1 & \textbf{1.0}/1.172 & \textbf{0.975}/0.866 & 0.486 & 0.925/1.024 & 0.875/0.786 & 0.333 & 0.85/0.997 & 0.7/0.832 & 0.109 \\
Test 2 &  \textbf{1.0}/0.873 & \textbf{0.95}/0.645 & 0.100 & \textbf{0.975}/0.869 & \textbf{0.925}/0.667 & 0.042 & 0.75/0.981 & 0.625/0.842 & 0.089 \\
Test 3 &  \textbf{1.0}/0.896 & \textbf{1.0}/0.662 & -0.047 & \textbf{0.975}/1.473 & 0.85/1.131 & 0.0875 & 0.725/1.005 & 0.575/0.866 & -0.119 \\
Test 4 &  \textbf{1.0}/0.915 & \textbf{1.0}/0.676 & 0.219 &  0.925/0.829 & 0.875/0.637 & 0.372 & 0.90/1.164 & 0.825/0.981 & 0.131 \\
Test 5 &  \textbf{1.0}/0.587 & \textbf{1.0}/0.434 & 0.145 & \textbf{1.0}/0.531 & \textbf{0.975}/0.407 & 0.272 &  0.75/1.093 & 0.725/0.920 & 0.454 \\

    \hline
Concrete &  &  & & &  & &  & & \\
Test 1 & \textbf{1.0}/1.378 & \textbf{1.0}/1.019 & 0.416 & \textbf{0.975}/1.013 & \textbf{0.975}/0.778 & 0.379 & 0.725/1.176 & 0.675/1.016 & 0.088 \\
Test 2 & \textbf{1.0}/1.583 & \textbf{1.0}/1.171 & 0.161 & \textbf{1.0}/0.874 & \textbf{1.0}/0.671 & 0.362 & 0.675/1.112 & 0.65/0.952 & 0.780 \\
Test 3 & \textbf{1.0}/1.673 & \textbf{1.0}/1.237 & 0.0455 & \textbf{1.0}/0.970 & \textbf{0.975}/0.745 & 0.516 & 0.875/1.061 & 0.75/0.887 & 0.597 \\
Test 4 & \textbf{1.0}/1.156 & \textbf{1.0}/0.855 & 0.484 & \textbf{1.0}/0.960 & \textbf{0.95}/0.737 & 0.466 & 0.875/1.101 & 0.825/0.949 & 0.552 \\
Test 5 & \textbf{1.0}/1.208 & \textbf{1.0}/0.893 & 0.216 & \textbf{1.0}/0.601 & \textbf{1.0}/0.462 & 0.309 & 0.825/1.027 & 0.8/0.880 & 0.684 \\
    \hline
Energy &  &  & & &  & &  & & \\
Test 1 & \textbf{0.975}/0.566 & \textbf{0.975}/0.418 & 0.331 & \textbf{1.0}/0.596 & \textbf{1.0}/0.458 & 0.152 & 0.75/0.854 & 0.725/0.733 & 0.037 \\
Test 2 & \textbf{1.0}/0.422 & \textbf{1.0}/0.312 & 0.294 & \textbf{0.95}/0.570 & 0.875/0.437 & 0.191 & 0.80/0.611 & 0.725/0.504 & 0.231 \\
Test 3 & \textbf{1.0}/0.435 & \textbf{1.0}/0.321 & -0.734 & \textbf{1.0}/0.724 & \textbf{1.0}/0.556 & -0.766 & 0.75/0.749 & 0.675/0.630 & -0.908 \\
Test 4 & \textbf{1.0}/0.572 & \textbf{1.0}/0.423 & -0.723 & \textbf{1.0}/0.613 & \textbf{1.0}/0.470 & -0.792 & 0.825/0.651 & 0.775/0.549 & -0.847 \\
Test 5 & \textbf{0.975}/0.512 & \textbf{0.975}/0.379 & -0.679 & \textbf{0.975}/0.550 & \textbf{0.975}/0.422 & -0.702 & 0.825/0.644 & 0.80/0.550 & -0.807 \\
    \bottomrule
  \end{tabular} 
  \label{CIresults:real}
\end{table}

\begin{table}[!ht] 
\scriptsize
  \centering
  \caption{Reducing procedural variability to improve prediction on "simulated" real-world benchmark datasets: Boston, Concrete, and Energy. The best MSE results are in \textbf{bold}.}
  \begin{tabular}{ccccc}
   \toprule
 MSE & One base network & PNC predictor & Deep ensemble (5 networks) & Deep ensemble (2 networks)\\
    \hline
Boston &  &  & \\   
Test 1 & $(3.52\pm 0.78)\times 10^{-1}$ & \bm{$(1.04\pm 0.03)\times 10^{-1}$} & $(1.56\pm 0.28)\times 10^{-1}$ & $(2.24\pm 0.51)\times 10^{-1}$\\

Test 2 & $(3.15\pm 0.59)\times 10^{-1}$ & \bm{$(1.01\pm 0.04)\times 10^{-1}$} & $(1.38\pm 0.26)\times 10^{-1}$ & $(2.03\pm 0.47)\times 10^{-1}$\\

Test 3 & $(4.18\pm 0.69)\times 10^{-1}$ & \bm{$(1.93\pm 0.05)\times 10^{-1}$} & $(2.28\pm 0.33)\times 10^{-1}$ & $(2.68\pm 0.37)\times 10^{-1}$\\

Test 4 & $(3.33\pm 0.81)\times 10^{-1}$ & \bm{$(1.17\pm 0.06)\times 10^{-1}$} & $(1.48\pm 0.17)\times 10^{-1}$ & $(1.84\pm 0.24)\times 10^{-1}$\\

Test 5 & $(3.46\pm 0.45)\times 10^{-1}$ & \bm{$(1.52\pm 0.04)\times 10^{-1}$} & $(1.99\pm 0.31)\times 10^{-1}$ & $(2.45\pm 0.32)\times 10^{-1}$\\

    \hline
Concrete &  &  & \\   
Test 1 & $(2.47\pm 0.24)\times 10^{-1}$ & \bm{$(1.90\pm 0.04)\times 10^{-1}$} & $(2.02\pm 0.07)\times 10^{-1}$ & $(2.21\pm 0.14)\times 10^{-1}$\\

Test 2 & $(3.00\pm 0.29)\times 10^{-1}$ & \bm{$(2.18\pm 0.04)\times 10^{-1}$} & $(2.28\pm 0.15)\times 10^{-1}$ & $(2.43\pm 0.16)\times 10^{-1}$\\

Test 3 & $(2.39\pm 0.16)\times 10^{-1}$ & \bm{$(1.85\pm 0.04)\times 10^{-1}$} & $(2.01\pm 0.11)\times 10^{-1}$ & $(2.18\pm 0.08)\times 10^{-1}$\\

Test 4 & $(2.85\pm 0.55)\times 10^{-1}$ & \bm{$(1.84 \pm 0.02)\times 10^{-1}$} & $(1.92\pm 0.14)\times 10^{-1}$ & $(2.17\pm 0.38)\times 10^{-1}$\\

Test 5 & $(2.51\pm 0.19)\times 10^{-1}$ & \bm{$(1.75\pm 0.04)\times 10^{-1}$} & $(1.86\pm 0.06)\times 10^{-1}$ & $(2.16\pm 0.16)\times 10^{-1}$\\
    \hline
    
Energy &  &  & \\ 
Test 1 & $(1.09\pm 0.14)\times 10^{-1}$ & \bm{$(7.73\pm 0.08)\times 10^{-2}$} & $(8.56\pm 0.64)\times 10^{-2}$ & $(9.91\pm 0.86)\times 10^{-2}$\\

Test 2 & $(1.08\pm 0.10)\times 10^{-1}$ & \bm{$(7.46\pm 0.17)\times 10^{-2}$} & $(8.29\pm 0.44)\times 10^{-2}$ & $(9.57\pm 0.84)\times 10^{-2}$\\

Test 3 & $(1.23\pm 0.10)\times 10^{-1}$ & \bm{$(7.51\pm 0.12)\times 10^{-2}$} & $(8.45\pm 0.43)\times 10^{-2}$ & $(1.01\pm 0.12)\times 10^{-1}$\\

Test 4 & $(1.25\pm 0.12)\times 10^{-1}$ & \bm{$(7.24\pm 0.16)\times 10^{-2}$} & $(8.31\pm 0.91)\times 10^{-2}$ & $(1.07\pm 0.19)\times 10^{-1}$\\

Test 5 & $(1.12\pm 0.09)\times 10^{-1}$ & \bm{$(6.81\pm 0.23)\times 10^{-2}$} & $(7.97\pm 0.44)\times 10^{-2}$ & $(9.69\pm 0.57)\times 10^{-2}$\\

    \bottomrule
  \end{tabular}
 \label{PNCresults:real}  
\end{table}

\subsection{Confidence Intervals for Coverage Results}

When evaluating the performance of confidence intervals, the CR value is computed based on a finite number of repetitions, which will incur a binomial error on the (population) coverage estimate. Therefore, in addition to reporting CR as a point estimate, we also report a confidence interval of the (population) coverage. Table \ref{CPCI} reports the Clopper–Pearson exact binomial confidence interval. Note that this interval can be computed straightforwardly by the point estimate (CR) and the number of repetitions ($J$). Therefore, we can refer to Table \ref{CPCI} to additionally obtain a confidence interval of coverage for our results, e.g., in Tables \ref{CIresults:syn1} and \ref{CIresults:syn2}.

\begin{table}[!ht] 
\scriptsize
  \centering
  \caption{Clopper–Pearson exact binomial confidence interval for coverage estimates at confidence level $95\%$. $J=100$.}
  \begin{tabular}{cccccc}
   \toprule
CR value & Confidence interval of coverage & CR value & Confidence interval of coverage & CR value & Confidence interval of coverage \\   
$1.00$ & $[0.964,1.000]$ & $0.99$ & $[0.946,1.000]$ & $0.98$ & $[0.930,0.998]$ \\
$0.97$ & $[0.915,0.994]$ &
$0.96$ & $[0.901,0.989]$ & $0.95$ & $[0.887,0.984]$ \\
$0.94$ & $[0.874,0.978]$ & $0.93$ & $[0.861,0.971]$ & $0.92$ & $[0.848,0.965]$ \\ $0.91$ & $[0.836,0.958]$ &
$0.90$ & $[0.824,0.951]$ & $0.89$ & $[0.812,0.944]$ \\ 
$0.88$ & $[0.800, 0.936]$ &
$0.87$ & $[0.788,0.929]$ & $0.86$ & $[0.776,0.921]$ \\ 
$0.85$ & $[0.765,0.914]$ &
$0.84$ & $[0.753,0.906]$ & $0.83$ & $[0.742,0.898]$ \\
$0.82$ & $[0.731,0.890]$ & $0.81$ & $[0.719,0.882]$& $0.80$ & $[0.708,0.873]$ \\
    \bottomrule
  \end{tabular}
 \label{CPCI} 
\end{table}

\end{appendices}

\end{document}